\documentclass{article}

\usepackage{microtype}
\usepackage{graphicx}

\usepackage{subcaption}
\usepackage{booktabs} 
\usepackage{multirow}

\usepackage{xcolor}
\usepackage{colortbl}
\definecolor{lightgrey}{RGB}{242, 242, 242}
\definecolor{average}{RGB}{240, 204, 126}
\definecolor{last}{RGB}{225, 191, 192}
\definecolor{transfer}{RGB}{191, 191, 225}
\definecolor{diag}{RGB}{151, 205, 112}

\usepackage{adjustbox}
\newcolumntype{R}[2]{%
    >{\adjustbox{angle=#1,lap=\width-(#2)}\bgroup}%
    l%
    <{\egroup}%
}

\usepackage{makecell}
\usepackage{array}

\usepackage{hyperref}
\usepackage{multirow}

\usepackage[preprint]{icml2026}

\usepackage{amsmath}
\usepackage{amssymb}
\usepackage{mathtools}
\usepackage{amsthm}

\usepackage[capitalize,noabbrev]{cleveref}

\theoremstyle{plain}

\newtheorem{proposition}{Proposition}
\newtheorem{lemma}{Lemma}
\newtheorem{corollary}{Corollary}

\theoremstyle{definition}

\newtheorem{assumption}{Assumption}

\theoremstyle{remark}
\newtheorem{remark}{Remark}

\makeatletter
\newtheorem*{rep@theorem}{\rep@title}
\newcommand{\newreptheorem}[2]{%
	\newenvironment{rep#1}[1]{%
		\def\rep@title{#2 ##1}%
		\begin{rep@theorem}}%
		{\end{rep@theorem}}}
\makeatother

\newreptheorem{lemma}{Lemma}
\newreptheorem{proposition}{Proposition}
\newreptheorem{theorem}{Theorem}

\usepackage[textsize=tiny]{todonotes}

\usepackage{xspace}

\icmltitlerunning{LR-RGDA \& HopDC}

\begin{document}

\twocolumn[
\icmltitle{Scalable Analytic Classifiers with Associative Drift Compensation for Class-Incremental Learning of Vision Transformers}



\icmlsetsymbol{equal}{*}

\begin{icmlauthorlist}
	\icmlauthor{Xuan Rao}{bnu}
	\icmlauthor{Mingming Ha}{kuaishou}
	\icmlauthor{Bo Zhao}{bnu}
	\icmlauthor{Derong Liu}{sustech}
	\icmlauthor{Cesare Alippi}{polimi,usi}
\end{icmlauthorlist}

\icmlaffiliation{bnu}{School of Systems Science, Beijing Normal University, Beijing 100875, China}
\icmlaffiliation{kuaishou}{Kuaishou Technology, Beijing 100085, China}
\icmlaffiliation{sustech}{School of Artificial Intelligence, Anhui University, Hefei 230601, China}
\icmlaffiliation{polimi}{Dipartimento di Elettronica e Informazione, Politecnico di Milano, Milano 20133, Italy}
\icmlaffiliation{usi}{Faculty of Informatics, Universita' Della Svizzera Italiana, Lugano, Switzerland}

\icmlcorrespondingauthor{Bo Zhao}{zhaobo@bnu.edu.cn}

\icmlkeywords{Gaussian discriminant analysis, Class incremental learning, Vision transformers}

\vskip 0.3in
]



\printAffiliationsAndNotice{}  

\begin{abstract}
	Class-incremental learning (CIL) with Vision Transformers (ViTs) faces a major computational bottleneck during the classifier reconstruction phase, where most existing methods rely on costly iterative stochastic gradient descent (SGD). We observe that analytic Regularized Gaussian Discriminant Analysis (RGDA) provides a Bayes-optimal alternative with accuracy comparable to SGD-based classifiers; however, its quadratic inference complexity limits its use in large-scale CIL scenarios. To overcome this, we propose Low-Rank Factorized RGDA (LR-RGDA), a scalable classifier that combines RGDA’s expressivity with the efficiency of linear classifiers. By exploiting the low-rank structure of the covariance via the Woodbury matrix identity, LR-RGDA decomposes the discriminant function into a global affine term refined by a low-rank quadratic perturbation, reducing the inference complexity from $\mathcal{O}(Cd^2)$ to $\mathcal{O}(d^2+Crd^2)$, where $C$ is the class number, $d$ the feature dimension, and $r \ll d$ the subspace rank. To mitigate representation drift caused by backbone updates, we further introduce Hopfield-based Distribution Compensator (HopDC), a training-free mechanism that uses modern continuous Hopfield Networks to recalibrate historical class statistics through associative memory dynamics on unlabeled anchors, accompanied by a theoretical bound on the estimation error. Extensive experiments on diverse CIL benchmarks demonstrate that our framework achieves state-of-the-art performance, providing a scalable solution for large-scale class-incremental learning with ViTs. Code: \href[]{https://github.com/raoxuan98-hash/lr_rgda_hopdc}{LR-RGDA and HopDC}.
\end{abstract}
\section{Introduction}
\label{sec:introduction}
Recent years have witnessed growing interest in leveraging pre-trained models (PTMs), such as vision transformers (ViTs), for class-incremental learning (CIL) \cite{zheng2023preventing, 10970405, 10882940}. A popular pipeline decouples CIL into two partly decoupled stages: (1) backbone optimization; (2) class distribution approximation and classifier reconstruction \cite{zhang2023slca, 10.1007/978-3-031-73209-6_18, Li_2024_WACV, NEURIPS2024_0f06be00}. While the former aspect has mitigated catastrophic forgetting through strategies including knowledge distillation (KD) \cite{gao2025knowledge, zhao2024safe}, null-space projection (NSP) \cite{NEURIPS2024_0f06be00, liang2024inflora}, model ensembles \cite{marczak2024magmax}, parameter-efficient fine-tuning (PEFT) \cite{Li_2024_WACV, he2025cllora}, etc, the latter remains an open, yet critical, issue.

Stage 2 is challenged by two closely related issues: (1) the efficiency–accuracy trade-off in classifier reconstruction; (2) The distribution misalignment caused by representation drift \cite{rao2025compensating, dpcr}. Existing approaches mainly rely on iterative stochastic gradient descent (SGD) to refine classifiers using pseudo-features sampled from Gaussian approximations \cite{zhang2023slca, NEURIPS2024_0f06be00, 10.1007/978-3-031-73209-6_18, 10970405}. However, these methods introduce substantial computational overhead beyond backbone optimization. In addition, synthesizing and buffering large batches of pseudo-features places a heavy demand on GPU memory, while the iterative optimization itself offers no convergence guarantees.
\begin{figure}[t]
	\centering
	\includegraphics[width=1.0\linewidth]{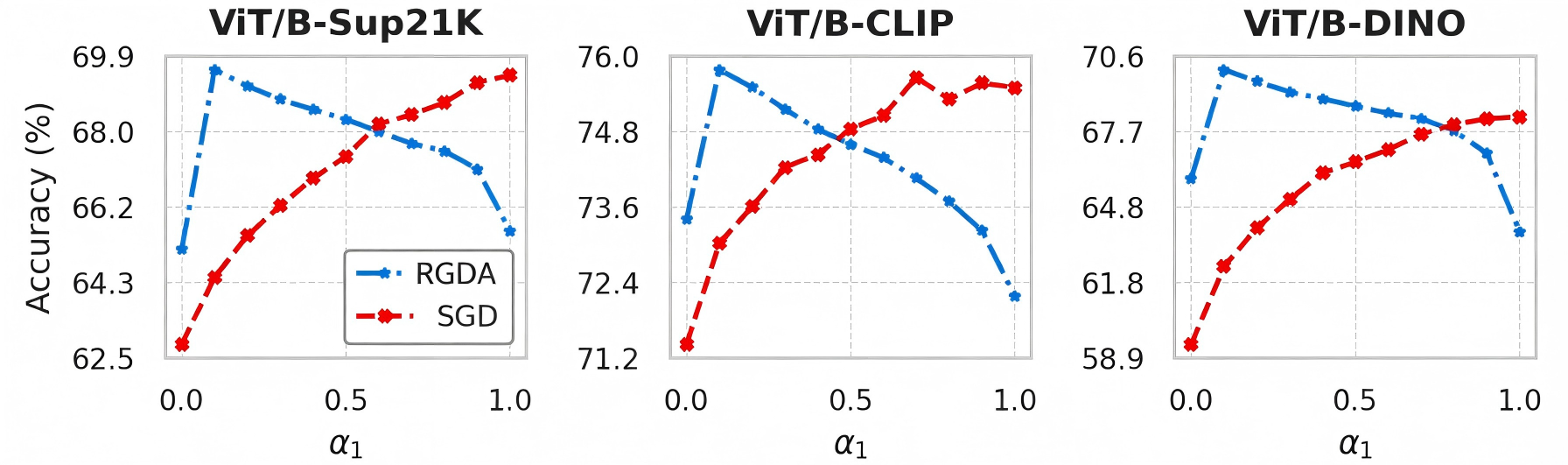}
	\caption{RGDA vs. SGD-based classifiers performance on a 1001-class cross-domain dataset. By regularizing covariances as $\Sigma_c^{\rm reg} = \alpha_1 \Sigma_c + (1-\alpha_1)\Sigma_{\rm avg}$, RGDA interpolates between LDA ($\alpha_1=0$) and QDA ($\alpha_1=1$).}
	\label{fig:motivationexps}
\end{figure}

To address these limitations, we revisit Gaussian discriminant analysis (GDA) \cite{izenman2013linear, tharwat2016linear}, which shows to be provably Bayes-optimal under the gaussianity assumption. As shown in Figure 1, both Linear Discriminant Analysis (LDA) \cite{izenman2013linear}, which assumes a shared covariance, and Quadratic Discriminant Analysis (QDA) \cite{tharwat2016linear}, which uses class-specific covariances exclusively, underperform compared to SGD-based classifiers. Remarkably, Regularized GDA (RGDA), which interpolates between shared and class-specific covariances, achieves accuracy comparable to, or even surpassing, SGD-based classifiers, without requiring iterative optimization. However, RGDA remains computationally prohibitive for large-scale employments, as its inference complexity scales quadratically as $\mathcal{O}(Cd^2)$, where $C$ is the class number and $d$ the feature dimension.

\begin{figure}[t]
	\centering
	\includegraphics[width=1.00\linewidth]{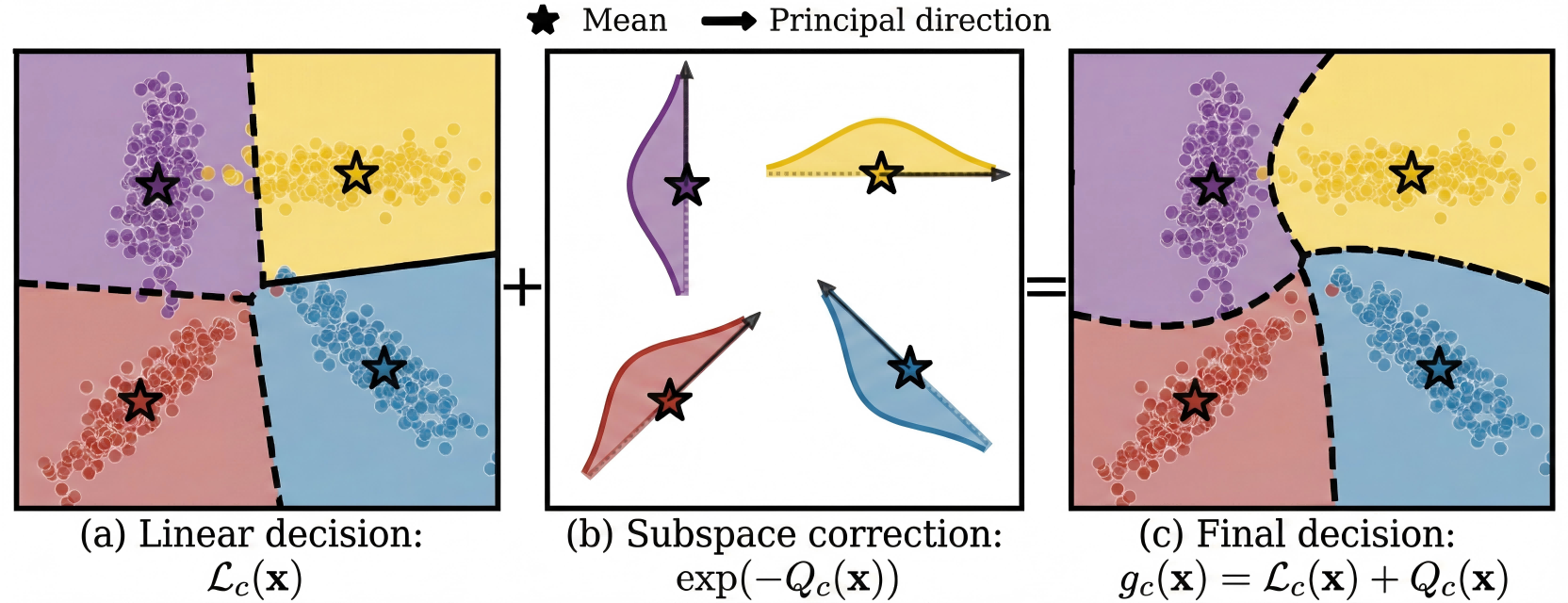}
	\caption{Conceptual visualization of LR-RGDA’s discriminant function. (a) The global affine term $\mathcal{L}_c(\mathbf{x})$ provides linear decision boundaries using the average covariance. (b) Class-specific low-rank corrections $\mathcal{Q}_c(\mathbf{x})$ (visualized via $\exp(\mathcal{Q}_c(\mathbf{x}))$ for clarity) introduce nonlinear adjustments. (c) The final discriminant $g_c(\mathbf{x}) = \mathcal{L}_c(\mathbf{x}) + \mathcal{Q}_c(\mathbf{x})$ yields composite, nonlinear boundaries.}
	\label{fig:lrrgda}
\end{figure}

To resolve this dilemma, we propose low-rank factorized RGDA (LR-RGDA). Motivated by the observation that class covariance matrices often exhibit low-rank structures (see Figure \ref{fig:vit-b-p16-clip95percentrankboxplot}), we establish the covariance as a global full-rank term plus a low-rank class-specific perturbation. Hereafter, by applying the Woodbury matrix identity \cite{hager1989updating}, we reduce the construction complexity of RGDA from $\mathcal{O}(Cd^3)$ to $\mathcal{O}(d^3 + Crd^2)$. Crucially, LR-RGDA decomposes the discriminant function into a global linear function refined by a lightweight class-specific quadratic form in a rank-$r$ subspace where $r \ll d$. As shown in Figure \ref{fig:lrrgda}, the low-rank subspace correction  augments the initial linear boundaries into the nonlinear ones of the full LR-RGDA classifier. This structure allows LR-RGDA to retain RGDA's expressiveness while reducing per-sample inference complexity from $\mathcal{O}(Cd^2)$ to $\mathcal{O}(d^2 + Crd)$, making it computationally comparable to linear classifiers when $r \ll d$.

The other challenge arises from the representation drifts induced by backbone updates \cite{rao2025compensating, dpcr, gomez2024exemplar}. As the backbone is optimized for new tasks in Stage-1, the stored statistics of old classes become misaligned with its current distribution, which can lead to biased decision boundaries. 

To tackle this aspect, we propose the Hopfield-based Distribution Compensator (HopDC), a training-free calibration mechanism. Prior works typically attempt to model drift using linear or weakly-nonlinear operators~\cite{rao2025compensating, gomez2024exemplar}; however, these approaches necessitate additional optimization phases to learn transformation parameters and lack theoretical guarantees regarding the upper bound of drift estimation errors. In contrast, HopDC formulates drift estimation as an associative memory problem via Modern Continuous Hopfield Networks (MCHNs)~\cite{DBLP:conf/iclr/RamsauerSLSWGHA21}. By leveraging attention-based energy minimization on a small set of unlabeled anchor samples, HopDC aligns historical class statistics with the updated feature space in a completely training-free manner, while providing a rigorous theoretical upper bound on the estimation error. Our contributions are summarized as follows. 
\begin{enumerate}
	\item We propose LR-RGDA, a novel scalable analytic classifier that resolves the efficiency-accuracy dilemma in classifier reconstruction. By exploiting the Woodbury matrix identity to decompose the discriminant function, we theoretically prove that LR-RGDA reduces the inference complexity from $\mathcal{O}(Cd^2)$ to $\mathcal{O}(d^2 + Crd)$ while maintaining the Bayes-optimality of RGDA.
	
	\item We propose HopDC, a novel training-free distribution compensator to address representation drift in sequential optimization. By leveraging MCHNs for statistical alignment, we mathematically prove an estimation error upper bound constrained by the semantic drift's Lipschitz continuity and anchor density, which provides a formal guarantee for its reliability.
	
	\item Comprehensive experiments demonstrate that our entire framework achieves new state-of-the-art performance across multiple pre-trained ViTs and backbone adaptation strategies (e.g., KD, NSP, LoRA).
\end{enumerate}

\section{Related Works}
\textbf{Backbone optimization in CIL.} Existing ViT-based CIL approaches generally fall into four categories based on their strategies for handling representation stability and plasticity.

The first category optimizes task-specific adapters and decomposes the inference into a hierarchical process of task-identity prediction followed by label prediction \cite{10970405, Li_2024_WACV}. However, this paradigm suffers from error propagation due to the inaccurate task identification, high computational overhead from repeated forward passes, and linearly scaling storage demands.

The second approach maintains a shared backbone (or a shared adapter) using techniques like lower learning rates, distillation \cite{heo2019comprehensive}, model ensemble \cite{marczak2024magmax}, or gradient projection \cite{wang2021training} to mitigate catastrophic forgetting. Recent works have also explored PEFT-based adaptation to enhance stability \cite{zhao2024safe, liang2024inflora, he2025cllora}. For example, slow learner with classifier alignment (SLCA) \cite{zhang2023slca, zhang2024slcaunleashpowersequential} adapts ViT backbones with lower learning rates, continual model averaging (CoMA) improves SLCA by averaging current and past models to enhance performance \cite{10.1007/978-3-031-73209-6_18}. However, the progressive optimization strategies remain inherently vulnerable to cumulative representation drift.

The third category combines shared adapters with instance-level feature adaptation via visual prompting. Approaches like L2P \cite{wang2022learning}, DualPrompt \cite{wang2022dualprompt}, and CODA-Prompt \cite{smith2023coda} leverage learnable query mechanisms for dynamic prompt selection, while recent works explore null-space tuning to further minimize interference \cite{NEURIPS2024_0f06be00}.

The fourth category leverages the frozen PTM representations. The FSA employs incremental LDA for exemplar-free adaptation \cite{Panos_2023_ICCV, zhou2025revisiting}, while methods like RanPAC \cite{NEURIPS2023_2793dc35} and LayUP \cite{ahrens2024read} enhance representation expressivity via high-dimensional random projections and multi-layer feature concatenation, respectively. Recently, TSVD bridges theory and practice by utilizing singular value decomposition for rigorous feature analysis \cite{peng2025tsvd}.

\textbf{Distribution drifts.} Several methods explicitly tackles the distribution shifts \cite{yu2020semantic, gomez2024exemplar} for neural networks without pre-training. For instance, AddGauss mitigates the task-recency bias via nonlinear covariance adaptation \cite{NEURIPS2024_73ba81c7}, while DPCR quantifies semantic drifts using task-wise geometric projections \cite{dpcr}. Recently, \citet{rao2025compensating} investigated the distribution drifts in ViT-based CIL and found the feature space evolution is weakly nonlinear. They proposed linear and weak-nonlinear operators to approximate the evolution and compensate for these drifts, providing valuable insights for CIL with PTMs. Our HopDC takes a step further by modeling this evolution via the associative memory dynamics of MCHNs \cite{DBLP:conf/iclr/RamsauerSLSWGHA21}.

\section{Method}
\textbf{Problem formulation.}
We consider a CIL setting involving a sequence of datasets $\mathcal{D}_1, \dots, \mathcal{D}_T$, where $\mathcal{D}_t = \{(\mathbf{x}_{(t,n)}, y_{(t,n)})\}_{n=1}^{n_t}$. For simplicity, we assume the label spaces $\mathcal{Y}_t$ are disjoint during CIL, i.e., $\mathcal{Y}_t \cap \mathcal{Y}_{t'} = \emptyset$ for $t \neq t'$. Let $\mathcal{C}_t = \bigcup_{t'=1}^t \mathcal{Y}_{t'}$ be the set of all observed classes up to task $t$. Let $\mathcal{F}_{\boldsymbol{\theta}}$ be a frozen or fine-tuned pre-trained backbone, and $\mathcal{C}_{\boldsymbol{\phi}}$ be a linear classifier mapping features to $\mathbb{R}^{|\mathcal{C}_t|}$ with $\boldsymbol{\theta}$ and $\boldsymbol{\phi}$ trainable parameters. At task $t$, the model is optimized via the task-wise cross-entropy loss
$\mathcal{L}_{\text{CE}}(\boldsymbol{\varphi}; \mathcal{D}_t) = -\frac{1}{n_t} \sum_{n=1}^{n_t} \log p_{\boldsymbol{\varphi}}(\mathbf{x}_{(t,n)}; \mathcal{Y}_t)_{y_{(t,n)}}$,
where $p_{\boldsymbol{\varphi}}(\cdot; \mathcal{S})$ denotes the Softmax output over the label subset $\mathcal{S}$. After learning task $t$, we approximate the feature distribution of each class $c \in \mathcal{Y}_t$ as a Gaussian $\mathcal{N}(\boldsymbol{\mu}_c, \boldsymbol{\Sigma}_c)$, with its moments estimated by
\small
\begin{equation}
		\boldsymbol{\mu}_c = \frac{1}{n_c} \sum_{i=1}^{n_c} \mathbf{f}_c^{(i)}, \quad \boldsymbol{\Sigma}_c = \frac{1}{n_c} \sum_{i=1}^{n_c} \mathbf{f}_c^{(i)} (\mathbf{f}_c^{(i)})^\top - \boldsymbol{\mu}_c \boldsymbol{\mu}_c^\top
\end{equation}
\normalsize
where $\mathbf{f}_c^{(i)} = \mathcal{F}_{\boldsymbol{\theta}}(\mathbf{x}^{(i)})$. Let $\mathcal{H}_t = \{ \mathcal{N}(\boldsymbol{\mu}_c, \boldsymbol{\Sigma}_c) \mid c \in \mathcal{C}_t \}$ be the collection of historical statistics.
\subsection{Regularized Gaussian discriminant analysis}
Building upon the Gaussian statistics $\mathcal{H}_t$, our goal is to construct a classifier that approximates the Bayes-optimal decision rule. As evidenced by experiments in Figure \ref{fig:motivationexps}, we propose to leverage the RGDA by employing a shrinkage strategy that interpolates among the class-specific covariance $\boldsymbol{\Sigma}_c$, the global average $\boldsymbol{\Sigma}_{\mathrm{avg}} = \frac{1}{C}\sum_{c}\boldsymbol{\Sigma}_{c}$, and the identity matrix $\mathbf{I}_d$ as
\begin{align}
	\boldsymbol{\Sigma}_c^{\mathrm{reg}} = \alpha_1 \boldsymbol{\Sigma}_c + \alpha_2 \boldsymbol{\Sigma}_{\mathrm{avg}} + \alpha_3 \mathbf{I}_d,
	\label{eq:regularized_covariance}
\end{align}
where $\alpha_1, \alpha_2, \alpha_3 \geq 0$ are regularization hyperparameters. Let $\boldsymbol{\pi} = [\pi_1, \ldots, \pi_C]^\top$ be the class prior distribution.
By Bayes theorem, the RGDA discriminant function $g_c^{\mathrm{RGDA}}(\mathbf{x}) \propto \log p(c|\mathbf{x})$ can be derived as
\begin{align}
	g_c^{\mathrm{RGDA}}(\mathbf{x}) \propto -\frac{1}{2}  d_{\boldsymbol{\Sigma}_c^{\mathrm{reg}}}^2(\mathbf{x}, \boldsymbol{\mu}_c) -  \frac{1}{2} \log \det(\boldsymbol{\Sigma}_c^{\mathrm{reg}}),
	\label{eq:rgda_discriminant}
\end{align}
where $d_{\boldsymbol{\Sigma}}^2(\mathbf{x}, \boldsymbol{\mu}) = (\mathbf{x} - \boldsymbol{\mu})^\top \boldsymbol{\Sigma}^{-1} (\mathbf{x} - \boldsymbol{\mu})$ denotes the squared Mahalanobis distance (i.e., the quadratic form).
\begin{lemma}[Bayes Optimality of RGDA]
\label{eq:bayes_opt}
Under the assumption that Gaussian class-conditional features, maximizing the RGDA discriminant function corresponds to the Bayes-optimal decision rule (See Appendix \ref{app:proof_bayes} for proof).
\end{lemma}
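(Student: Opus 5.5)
The plan is to compute $\log p(c\mid\mathbf{x})$ directly from Bayes' theorem and match it term by term with the RGDA discriminant in \eqref{eq:rgda_discriminant}. We read the statement as follows: class-conditional features are distributed as $p(\mathbf{x}\mid c)=\mathcal{N}(\mathbf{x};\boldsymbol{\mu}_c,\boldsymbol{\Sigma}_c^{\mathrm{reg}})$, with the regularized covariance of \eqref{eq:regularized_covariance} as the model covariance. This covariance is positive definite whenever $\alpha_3>0$, or whenever $\alpha_1\boldsymbol{\Sigma}_c+\alpha_2\boldsymbol{\Sigma}_{\mathrm{avg}}\succ 0$, which we assume so that the density and $\log\det$ are well defined.

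First, recall that under 0--1 loss the Bayes-optimal classifier is $\hat c(\mathbf{x})=\arg\max_c p(c\mid\mathbf{x})$. This holds because the conditional risk of predicting $c$ is $1-p(c\mid\mathbf{x})$, so minimizing risk pointwise in $\mathbf{x}$ minimizes the expected risk.

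Next, apply Bayes' theorem: $p(c\mid\mathbf{x})=\pi_c\,p(\mathbf{x}\mid c)/p(\mathbf{x})$. Since $\log$ is strictly increasing, the $\arg\max$ is unchanged by taking logs. The evidence $p(\mathbf{x})$ does not depend on $c$ and can be dropped. Expanding the Gaussian log-density then gives
\begin{equation*}
\log p(c\mid\mathbf{x}) = -\tfrac12 d^2_{\boldsymbol{\Sigma}_c^{\mathrm{reg}}}(\mathbf{x},\boldsymbol{\mu}_c)-\tfrac12\log\det\boldsymbol{\Sigma}_c^{\mathrm{reg}}+\log\pi_c-\tfrac d2\log(2\pi)-\log p(\mathbf{x}).
\end{equation*}
The last two terms are class-independent. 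The remainder is exactly \eqref{eq:rgda_discriminant} plus $\log\pi_c$.

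The only delicate point is interpreting the symbol ``$\propto$'' and the prior term. With uniform priors $\pi_c=1/C$, the $\log\pi_c$ term is a constant and the two rules coincide. With general priors, the discriminant should be read as $g_c^{\mathrm{RGDA}}+\log\pi_c$. In either case, $g_c^{\mathrm{RGDA}}$ differs from $\log p(c\mid\mathbf{x})$ only by a $c$-independent additive constant, and additive constants preserve the $\arg\max$; this is what ``$\propto$'' means here, and it is the step we expect to need the most care. It follows that $\arg\max_c g_c^{\mathrm{RGDA}}(\mathbf{x})=\arg\max_c p(c\mid\mathbf{x})$, which is the Bayes rule.
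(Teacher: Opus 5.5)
Your proof is correct and follows the paper's argument essentially line for line: apply Bayes' theorem, take logarithms, discard the class-independent terms $-\frac{d}{2}\log(2\pi)$ and $\log p(\mathbf{x})$, and identify what remains as the RGDA discriminant, with $\log\pi_c$ included as in the appendix version of $g_c^{\mathrm{RGDA}}$. Your extra points are sensible clarifications that the paper leaves implicit: the 0--1 loss justification of the MAP rule, the positive definiteness of $\boldsymbol{\Sigma}_c^{\mathrm{reg}}$, and the $\log\pi_c$ term missing from the main-text display.
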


\begin{remark}
Despite its simplicity, the gaussianity assumption is widely adopted in ViT-based CIL \cite{zhang2023slca,NEURIPS2024_0f06be00,rao2025compensating}. Empirical studies have shown that the features extracted by pre-trained ViTs are highly separable and approximately Gaussian within each class \cite{10970405, Li_2024_WACV}. 
\end{remark}

\textbf{Relationship with QDA and LDA.} Specifically, setting $\alpha_1=1$ and $\alpha_{2/3}=0$ recovers RGDA to QDA. Conversely, setting $\alpha_{1/3}=0$ and $\alpha_2=1$ simplifies RGDA to LDA, whose discriminant function satisfies  
\begin{align}
	g_c^{\mathrm{LDA}}(\mathbf{x}) &\propto -\frac{1}{2} d^2_{\Sigma_{\rm avg}}\left( \mathbf{x}, \boldsymbol{\mu}_{c} \right)\\
	&\propto \boldsymbol{\mu}_c^\top \boldsymbol{\Sigma}_{\mathrm{avg}}^{-1} \mathbf{x} - \frac{1}{2} \boldsymbol{\mu}_c^\top \boldsymbol{\Sigma}_{\mathrm{avg}}^{-1} \boldsymbol{\mu}_c.
\end{align}
It yields the linear discriminant function
\begin{equation}
	g_c^{\mathrm{LDA}}(\mathbf{x}) = \mathbf{w}_c^\top \mathbf{x} + b_c,
	\label{eq:lda_linear}
\end{equation}
where $\mathbf{w}_c = \boldsymbol{\Sigma}_{\mathrm{avg}}^{-1} \boldsymbol{\mu}_c$ and $b_c = -\frac{1}{2} \boldsymbol{\mu}_c^\top \boldsymbol{\Sigma}_{\mathrm{avg}}^{-1} \boldsymbol{\mu}_c$. This derivation explicitly demonstrates that LDA is mathematically equivalent to an affine classifier, which offers high inference efficiency of $\mathcal{O}(Cd)$, albeit at the cost of ignoring class-specific fine-grained information.

\textbf{The efficiency-accuracy dilemma.} Although RGDA harmonizes the expressivity of QDA with the stability of LDA, its reliance on class-specific regularized covariances $\boldsymbol{\Sigma}_c^{\mathrm{reg}}$ imposes prohibitive computational costs. Unlike linear models, this dependence leads to severe scalability bottlenecks. Specifically, constructing the classifier requires inverting matrices for all classes with $\mathcal{O}(Cd^3)$ complexity, far exceeding the $\mathcal{O}(d^3)$ of LDA. Furthermore, inference entails a quadratic operation scaling with $\mathcal{O}(Cd^2)$, which is a substantial increase over the $\mathcal{O}(Cd)$ of linear classifiers. In addition, maintaining full-precision matrices demands $\mathcal{O}(Cd^2)$ memory (e.g., over 4.5GB for $C=1000, d=768$ in double precision).

\subsection{Low-Rank Factorized RGDA}


To resolve the efficiency-accuracy dilemma of RGDA, we propose the LR-RGDA, a scalable variant that assumes the class-specific information resides in a low-dimensional subspace, as empirically validated by Figure \ref{fig:vit-b-p16-clip95percentrankboxplot}.

\textbf{Covariance matrix decomposition.}
We model the covariance as a global full-rank base refined by a class-specific low-rank perturbation. Let the base matrix $\mathbf{B} = \alpha_2 \boldsymbol{\Sigma}_{\mathrm{avg}} + \alpha_3 \mathbf{I}_d$ capture the common components shared across all classes. For the class-specific term $\alpha_1 \boldsymbol{\Sigma}_c$, we apply truncated singular value decomposition (SVD) to obtain its rank-$r$ approximation $\boldsymbol{\Sigma}_c \approx \mathbf{U}_c \mathbf{S}_c \mathbf{U}_c^\top$, where $\mathbf{U}_c \in \mathbb{R}^{d \times r}$ contains the top-$r$ eigenvectors and $\mathbf{S}_c \in \mathbb{R}^{r \times r}$ is the diagonal matrix of eigenvalues. Consequently, the regularized covariance matrix $\boldsymbol{\Sigma}_c^{\mathrm{reg}}$ can be reformulated as a rank-$r$ update to the base matrix $\mathbf{B}$:
\begin{equation}
	\boldsymbol{\Sigma}_c^{\mathrm{reg}} \approx \mathbf{B} + \widetilde{\mathbf{U}}_c \widetilde{\mathbf{U}}_c^\top,
	\label{eq:low_rank_reg_cov}
\end{equation}
where $\widetilde{\mathbf{U}}_c = \alpha_1^{1/2} \mathbf{U}_c \mathbf{S}_c^{1/2}$ absorbs the scaling factor.

\textbf{Efficient inversion via Woodbury matrix identity.}
One of RGDA's main computational bottlenecks is computing $(\boldsymbol{\Sigma}_c^{\mathrm{reg}})^{-1}$. By leveraging the low-rank structure in Eq.~(\ref{eq:low_rank_reg_cov}), we apply the Woodbury matrix identity \cite{hager1989updating} to circumvent the expensive $d \times d$ inversion.
\begin{lemma}[Woodbury Matrix Identity]
	\label{eq:woodbury_identity}
    For an invertible $n \times n$ matrix $\mathbf{A}$, $n \times k$ matrix $\mathbf{U}$, $n \times k$ matrix $\mathbf{V}$, and $k \times k$ matrix $\mathbf{C}$, the following identity holds
	\small
	 \begin{equation}
		(\mathbf{A} + \mathbf{U} \mathbf{C} \mathbf{V}^\top)^{-1} = \mathbf{A}^{-1} - \mathbf{A}^{-1} \mathbf{U} (\mathbf{C}^{-1} + \mathbf{V}^\top \mathbf{A}^{-1} \mathbf{U})^{-1} \mathbf{V}^\top \mathbf{A}^{-1}.
		\label{eq:woodbury_identity_enhanced}
	\end{equation}
	\normalsize
\end{lemma}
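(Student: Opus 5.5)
The plan is to verify the identity by direct multiplication. We need the hypothesis, implicit in the statement, that $\mathbf{C}$ is invertible and that the $k\times k$ \emph{capacitance matrix} $\mathbf{K} := \mathbf{C}^{-1} + \mathbf{V}^\top \mathbf{A}^{-1}\mathbf{U}$ is invertible; otherwise the right-hand side is undefined. In our application $\mathbf{A}=\mathbf{B}$, $\mathbf{U}=\mathbf{V}=\widetilde{\mathbf{U}}_c$ and $\mathbf{C}=\mathbf{I}_r$. There $\mathbf{B}$ is symmetric positive definite whenever $\alpha_3>0$, so $\mathbf{K}=\mathbf{I}_r+\widetilde{\mathbf{U}}_c^\top\mathbf{B}^{-1}\widetilde{\mathbf{U}}_c$ is positive definite and hence invertible. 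With $\mathbf{K}$ invertible, set $\mathbf{M}:=\mathbf{A}+\mathbf{U}\mathbf{C}\mathbf{V}^\top$ and let $\mathbf{R}$ denote the claimed right-hand side. It then suffices to show $\mathbf{M}\mathbf{R}=\mathbf{I}_n$: for square matrices a right inverse is automatically a two-sided inverse, which also gives invertibility of $\mathbf{M}$.

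The key computation is to expand $\mathbf{M}\mathbf{R}$ into four terms:
\begin{equation*}
\mathbf{M}\mathbf{R} = \mathbf{I}_n + \mathbf{U}\mathbf{C}\mathbf{V}^\top\mathbf{A}^{-1} - \mathbf{U}\mathbf{K}^{-1}\mathbf{V}^\top\mathbf{A}^{-1} - \mathbf{U}\mathbf{C}\mathbf{V}^\top\mathbf{A}^{-1}\mathbf{U}\mathbf{K}^{-1}\mathbf{V}^\top\mathbf{A}^{-1}.
\end{equation*}
Factor $\mathbf{U}$ on the left and $\mathbf{V}^\top\mathbf{A}^{-1}$ on the right of the last three terms. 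The middle factor becomes
\begin{equation*}
\mathbf{C} - \mathbf{K}^{-1} - \mathbf{C}\mathbf{V}^\top\mathbf{A}^{-1}\mathbf{U}\mathbf{K}^{-1} = \mathbf{C} - \mathbf{C}\bigl(\mathbf{C}^{-1} + \mathbf{V}^\top\mathbf{A}^{-1}\mathbf{U}\bigr)\mathbf{K}^{-1} = \mathbf{C} - \mathbf{C}\mathbf{K}\mathbf{K}^{-1} = \mathbf{0}.
\end{equation*}
Hence $\mathbf{M}\mathbf{R}=\mathbf{I}_n$.

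The only delicate point is the rewriting $\mathbf{K}^{-1}=\mathbf{C}\mathbf{C}^{-1}\mathbf{K}^{-1}$, which lets the two $\mathbf{K}^{-1}$ terms be combined into $\mathbf{C}\mathbf{K}\mathbf{K}^{-1}$. This step uses the invertibility of $\mathbf{C}$, which is the main hypothesis to make explicit. Everything else is routine bookkeeping. Optionally, one can also check $\mathbf{R}\mathbf{M}=\mathbf{I}_n$ symmetrically for completeness. A final remark records that only the $r\times r$ matrix $\mathbf{K}$ has to be inverted per class, plus a single shared inversion of $\mathbf{B}$, which yields the claimed $\mathcal{O}(d^3+Crd^2)$ construction cost.
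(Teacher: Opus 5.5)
Your direct-multiplication argument is correct. Expanding $(\mathbf{A}+\mathbf{U}\mathbf{C}\mathbf{V}^\top)\mathbf{R}$ and collapsing the middle factor via $\mathbf{C}-\mathbf{C}\mathbf{K}\mathbf{K}^{-1}=\mathbf{0}$ works, and the right-inverse observation legitimately gives two-sided invertibility. The paper does not prove this lemma at all: it states it as a classical result and cites it. So your argument is a self-contained substitute rather than a reconstruction of the paper's route.

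What your version adds is precision on hypotheses. As literally stated, the lemma omits the assumptions that $\mathbf{C}$ and $\mathbf{C}^{-1}+\mathbf{V}^\top\mathbf{A}^{-1}\mathbf{U}$ are invertible, and without them the right-hand side is undefined. You make these explicit and check that they hold in the LR-RGDA instantiation. There $\mathbf{C}=\mathbf{I}_r$, and $\alpha_3>0$ makes $\mathbf{B}$ symmetric positive definite, so your $\mathbf{K}$ (exactly the paper's $\mathbf{M}_c$) is positive definite. The invertibility of $\mathbf{K}$ is not an extra restriction: given $\mathbf{A}$ and $\mathbf{C}$ invertible, it is equivalent to invertibility of $\mathbf{A}+\mathbf{U}\mathbf{C}\mathbf{V}^\top$. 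This follows from the determinant identity $\det(\mathbf{A}+\mathbf{U}\mathbf{C}\mathbf{V}^\top)=\det(\mathbf{K})\det(\mathbf{C})\det(\mathbf{A})$, which the paper itself uses when computing $b_c$.

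One cosmetic point: your $\mathbf{M}$ denotes the full $n\times n$ matrix, while the paper reserves $\mathbf{M}_c$ for the $r\times r$ capacitance matrix and $\mathbf{K}$ for the Hopfield keys. Rename before splicing to avoid a clash.
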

Applying Lemma \ref{eq:woodbury_identity}  to $\boldsymbol{\Sigma}_c^{\mathrm{reg}} = \mathbf{B} + \widetilde{\mathbf{U}}_c \widetilde{\mathbf{U}}_c^\top$, we obtain
\begin{equation}
	(\boldsymbol{\Sigma}_{c}^{\mathrm{reg}})^{-1} = \mathbf{B}^{-1} - \mathbf{B}^{-1} \widetilde{\mathbf{U}}_c \mathbf{M}_c^{-1} \widetilde{\mathbf{U}}_c^\top \mathbf{B}^{-1},
	\label{eq:efficient_inverse_enhanced}
\end{equation}
where $\mathbf{M}_c = \mathbf{I}_r + \widetilde{\mathbf{U}}_c^\top \mathbf{B}^{-1} \widetilde{\mathbf{U}}_c$ is a small $r \times r$ matrix. Since $\mathbf{B}^{-1}$ is shared across all classes, it needs to be computed only once. The inversion of class-specific $\mathbf{M}_c$ requires only $\mathcal{O}(r^3)$ operations per class. Therefore, the total construction complexity of RGDA is reduced from $\mathcal{O}(Cd^3)$ to the $\mathcal{O}(d^3 + Crd^2)$ complexity of LR-RGDA.

\textbf{Discriminant function decomposition.}
Beyond efficient construction, the Woodbury identity enables an elegant decomposition of the discriminant function.
\begin{proposition}[Decomposition of LR-RGDA's discriminant function] \label{prop:decomposed}
	    The discriminant function of LR-RGDA, denoted by $g_c^{\mathrm{LR-RGDA}}(\mathbf{x})$, can be decomposed into a global affine term refined by a class-specific quadratic correction in an $r$-dimensional subspace:
	\begin{equation}
		g_c^{\mathrm{LR-RGDA}}(\mathbf{x}) = \mathcal{L}_c(\mathbf{x}) + \mathcal{Q}_c(\mathbf{x}).
		\label{eq:decomposed_discriminant}
	\end{equation}
	Here, $\mathcal{L}_c(\mathbf{x}) = \mathbf{w}_c^\top \mathbf{x} + b_c$ represents the affine discriminant function determined with parameters
	\begin{align}
		\mathbf{w}_c &= \mathbf{B}^{-1}\boldsymbol{\mu}_c, \\
		b_c &= -\tfrac{1}{2}\boldsymbol{\mu}_c^\top \mathbf{B}^{-1}\boldsymbol{\mu}_c
		-\tfrac{1}{2}\log\det(\boldsymbol{\Sigma}_c^{\mathrm{reg}})
		+ \log\pi_c.
	\end{align}
	The term $\mathcal{Q}_c(\mathbf{x}) = \tfrac{1}{2}\mathbf{u}_c^\top \mathbf{M}_c^{-1}\mathbf{u}_c$ represents the class-specific quadratic correction term, where
	\begin{equation}
		\mathbf{u}_c = \mathbf{P}_{c}^{\mathrm{proj}}(\mathbf{x}-\boldsymbol{\mu}_c) 
		 \in \mathbb{R}^r,
		\label{eq:low_dim_projection}
	\end{equation}
	is the projection of the centered feature $(\mathbf{x} - \boldsymbol{\mu}_c)$ onto the $r$-dimensional principal subspace of $\Sigma_{c}$, via the projection matrix $\mathbf{P}_{c}^{\mathrm{proj}} = \widetilde{\mathbf{U}}_c^\top \mathbf{B}^{-1} \in \mathbb{R}^{r \times d}$ (See Appendix \ref{app:proof_decomposed} for detailed derivation)
\end{proposition}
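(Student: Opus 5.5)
Starting from the RGDA discriminant in Eq.~(\ref{eq:rgda_discriminant}) with the prior term restored, I would write
\[
g_c(\mathbf{x}) = -\tfrac{1}{2}(\mathbf{x}-\boldsymbol{\mu}_c)^\top(\boldsymbol{\Sigma}_c^{\mathrm{reg}})^{-1}(\mathbf{x}-\boldsymbol{\mu}_c) - \tfrac{1}{2}\log\det(\boldsymbol{\Sigma}_c^{\mathrm{reg}}) + \log\pi_c .
\]
Since the statement concerns the LR-RGDA model, I then substitute $\boldsymbol{\Sigma}_c^{\mathrm{reg}} = \mathbf{B} + \widetilde{\mathbf{U}}_c\widetilde{\mathbf{U}}_c^\top$ from Eq.~(\ref{eq:low_rank_reg_cov}) as an equality. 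I would also fix the convention that ``$=$'' is understood up to additive terms independent of $c$, consistent with the $\propto$ in Eq.~(\ref{eq:rgda_discriminant}). These are the only terms that can be dropped without changing the argmax.

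The main step is to insert the Woodbury form Eq.~(\ref{eq:efficient_inverse_enhanced}), obtained from Lemma~\ref{eq:woodbury_identity} with $\mathbf{A}=\mathbf{B}$, $\mathbf{U}=\mathbf{V}=\widetilde{\mathbf{U}}_c$ and $\mathbf{C}=\mathbf{I}_r$. Invertibility of $\mathbf{B}$ follows from $\alpha_3>0$, or from $\boldsymbol{\Sigma}_{\mathrm{avg}}\succ 0$. The Mahalanobis term then splits into two pieces:
\[
-\tfrac12(\mathbf{x}-\boldsymbol{\mu}_c)^\top\mathbf{B}^{-1}(\mathbf{x}-\boldsymbol{\mu}_c) \;+\; \tfrac12(\mathbf{x}-\boldsymbol{\mu}_c)^\top\mathbf{B}^{-1}\widetilde{\mathbf{U}}_c\mathbf{M}_c^{-1}\widetilde{\mathbf{U}}_c^\top\mathbf{B}^{-1}(\mathbf{x}-\boldsymbol{\mu}_c).
\]
For the second piece I use the symmetry of $\mathbf{B}^{-1}$, which holds because $\boldsymbol{\Sigma}_{\mathrm{avg}}$ is symmetric. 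With it, $(\mathbf{x}-\boldsymbol{\mu}_c)^\top\mathbf{B}^{-1}\widetilde{\mathbf{U}}_c = \mathbf{u}_c^\top$, where $\mathbf{u}_c=\widetilde{\mathbf{U}}_c^\top\mathbf{B}^{-1}(\mathbf{x}-\boldsymbol{\mu}_c)=\mathbf{P}_c^{\mathrm{proj}}(\mathbf{x}-\boldsymbol{\mu}_c)$. The second piece is therefore exactly $\mathcal{Q}_c(\mathbf{x})=\tfrac12\mathbf{u}_c^\top\mathbf{M}_c^{-1}\mathbf{u}_c$. I would also note that $\mathbf{M}_c=\mathbf{I}_r+\widetilde{\mathbf{U}}_c^\top\mathbf{B}^{-1}\widetilde{\mathbf{U}}_c\succ 0$, so $\mathbf{M}_c^{-1}$ exists.

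For the first piece I expand
\[
-\tfrac12\mathbf{x}^\top\mathbf{B}^{-1}\mathbf{x} + \boldsymbol{\mu}_c^\top\mathbf{B}^{-1}\mathbf{x} - \tfrac12\boldsymbol{\mu}_c^\top\mathbf{B}^{-1}\boldsymbol{\mu}_c .
\]
The term $-\tfrac12\mathbf{x}^\top\mathbf{B}^{-1}\mathbf{x}$ is common to all classes because $\mathbf{B}$ is shared, so it can be discarded, exactly as in the LDA derivation leading to Eq.~(\ref{eq:lda_linear}). What remains is $\mathbf{w}_c^\top\mathbf{x}$ with $\mathbf{w}_c=\mathbf{B}^{-1}\boldsymbol{\mu}_c$, again using symmetry of $\mathbf{B}^{-1}$. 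Collecting $-\tfrac12\boldsymbol{\mu}_c^\top\mathbf{B}^{-1}\boldsymbol{\mu}_c$, the log-determinant and $\log\pi_c$ gives $b_c$. Together these yield $\mathcal{L}_c(\mathbf{x})$, and hence Eq.~(\ref{eq:decomposed_discriminant}).

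I expect the only delicate point to be bookkeeping rather than a genuine obstacle. Two things need care. First, I have to justify dropping the class-independent quadratic term while stating clearly that the resulting $g_c^{\mathrm{LR-RGDA}}$ agrees with the true log-posterior up to a $c$-independent shift, so that by Lemma~\ref{eq:bayes_opt} its argmax is preserved. Second, the sign of $\mathcal{Q}_c$ must come out as $+$, and it does because the Woodbury correction enters with a minus sign under the overall $-\tfrac12$.

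As an optional remark, I would note that $\log\det(\boldsymbol{\Sigma}_c^{\mathrm{reg}})=\log\det\mathbf{B}+\log\det\mathbf{M}_c$ by the matrix determinant lemma. This makes $b_c$ computable in $\mathcal{O}(r^3)$ per class, with the shared $\log\det\mathbf{B}$ absorbable as a constant, though it is not needed for the identity itself.
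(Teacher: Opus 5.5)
Your proposal is correct and follows the paper's proof essentially step for step. You apply the Woodbury identity with $\mathbf{C}=\mathbf{I}_r$, identify the correction as $\tfrac12\mathbf{u}_c^\top\mathbf{M}_c^{-1}\mathbf{u}_c$, expand $(\mathbf{x}-\boldsymbol{\mu}_c)^\top\mathbf{B}^{-1}(\mathbf{x}-\boldsymbol{\mu}_c)$, and drop the class-independent term $\mathbf{x}^\top\mathbf{B}^{-1}\mathbf{x}$. Your extra bookkeeping (invertibility of $\mathbf{B}$ and $\mathbf{M}_c$, symmetry of $\mathbf{B}^{-1}$, reading Eq.~(\ref{eq:low_rank_reg_cov}) as an equality, and stating that the result holds up to a $c$-independent shift) makes explicit what the paper leaves implicit. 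One small correction: the $\boldsymbol{\Sigma}_{\mathrm{avg}}\succ 0$ route to invertibility of $\mathbf{B}$ also requires $\alpha_2>0$.
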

\begin{remark}
Eq.~\eqref{eq:decomposed_discriminant} shifts the quadratic complexity from the original high-dimensional space $\mathbb{R}^d$ to a smaller subspace $\mathbb{R}^r$. This transformation reduces the total inference complexity from $\mathcal{O}(Cd^2)$ to $\mathcal{O}(d^2 + Cdr)$ as $r \ll d$. Consequently, it effectively reconciles the expressivity of RGDA with the inference speed of linear classifiers. We provide a detailed complexity breakdown and comparison in Appendix~\ref{sec:complexity_comp}.
	
\end{remark}

\subsection{Hopfield-based Distribution Compensator}
A critical challenge is the representation drift induced by backbone updates, i.e., $\mathcal{F}_{\boldsymbol{\theta}_{t-1}} \to \mathcal{F}_{\boldsymbol{\theta}_t}$. This semantic shift renders the Gaussian statistics $\mathcal{N}(\boldsymbol{\mu}_c, \boldsymbol{\Sigma}_c)$ for $c \in \mathcal{C}_{t}$, which are originally accumulated under $\mathcal{F}_{\boldsymbol{\theta}_{1:t-1}}$, misaligned with $\mathcal{F}_{\boldsymbol{\theta}_{t}}$. Consequently, reconstructing classifiers with these obsolete priors leads to biased decision boundaries.

To mitigate it, we propose HopDC, a training-free calibration mechanism that aligns historical statistics to the current feature space. Our core insight is to formulate drift estimation as an associative memory problem: we posit that old class prototypes shift consistently with semantically related unlabelled anchors, allowing for precise recalibration without historical exemplars.

\textbf{Drift estimation via anchor associativity.}
Let $\mathcal{A} = \{\mathbf{a}_i\}_{i=1}^N$ be a fixed set of unlabeled anchor images. We define the feature matrices $\mathbf{F}^{\text{old}}$ and $\mathbf{F}^{\text{new}} \in \mathbb{R}^{N \times d}$ by row-wise stacking the embeddings of all anchors computed by the previous and current backbones, respectively. The semantic drift matrix $\mathbf{D} \in \mathbb{R}^{N \times d}$ is derived by
\begin{equation}
	\mathbf{D} = \mathbf{F}^{\text{new}} - \mathbf{F}^{\text{old}}, \quad \text{with } \mathbf{F}_{i,:} = \mathcal{F}_{\boldsymbol{\theta}}(\mathbf{a}_i),
\end{equation}
where the $i$th row $\boldsymbol{\delta}_i$ of $\mathbf{D}$ represents the drift vector of anchor $\mathbf{a}_i$. We can view these anchor points as samples from an underlying continuous drift function $\delta: \mathbb{R}^d \to \mathbb{R}^d$ such that $\delta(\mathbf{k}_i) = \boldsymbol{\delta}_i$.

\textbf{Hopfield energy minimization for drift retrieval.}
Instead of learning a parametric drift mapping, we utilize the dense associative memory properties of MCHNs \cite{DBLP:conf/icml/MillidgeS0LB22, DBLP:conf/iclr/RamsauerSLSWGHA21} to interpolate the drift for old classes. We treat the normalized historical anchor features as \textit{stored patterns} (Keys) and the anchor drift vectors as \textit{state values} (Values)
\begin{equation}
	\mathbf{K} = l_2\text{-Normalize}(\mathbf{F}^{\text{old}}) \in \mathbb{R}^{N \times d}.
\end{equation}
For each old class $c$, we sample $M$ pseudo-features $\mathbf{z}_k \sim \mathcal{N}(\boldsymbol{\mu}_c, \boldsymbol{\Sigma}_c)$ from the stored statistics and normalize them to form the \textit{state queries} $\mathbf{Q}_c \in \mathbb{R}^{M \times d}$. 
The retrieval process follows the energy minimization dynamics of MCHNs, which is implemented via a dot-product attention mechanism. To filter out irrelevant anchor noise, we employ a top-$k$ sparse attention mechanism. Accordingly, the estimated drift $\boldsymbol{\Delta}_c^{\text{est}} \in \mathbb{R}^{M \times d}$ for the pseudo-samples is computed as
\begin{equation}
	\boldsymbol{\Delta}_c^{\text{est}} = \text{Softmax}_{\text{top-}k}\left(\frac{1}{\tau}\mathbf{Q}_c \mathbf{K}^\top\right) \mathbf{D},
\end{equation}
where $\tau$ is the temperature parameter. Specifically, this operation effectively computes a weighted average of anchor drifts, where weights are determined by the semantic similarity between the old class samples and the anchors in the original feature space.

\textbf{Distribution compensation.}
The compensated pseudo-samples are obtained by applying the retrieved drift $\mathbf{Z}_c^{\text{cal}} = \mathbf{Z}_c + \boldsymbol{\Delta}_c^{\text{est}}$, where $\mathbf{Z}_c = [\mathbf{z}_1, \dots, \mathbf{z}_M]^\top$. Finally, we re-estimate the historical statistics for class $c$ using the compensated samples, as 
$\boldsymbol{\mu}_c  = \frac{1}{M} \sum_{j=1}^{M} \mathbf{z}^{(j)}$
and $\boldsymbol{\Sigma}_c = \frac{1}{M} \sum_{j=1}^{M} \mathbf{z}^{(j)} (\mathbf{z}^{(j)})^\top - \boldsymbol{\mu}_c \boldsymbol{\mu}_c^\top,$
where $\mathbf{z}^{(j)}$ denotes the $j$th sample row in $\mathbf{Z}_c^{\text{cal}}$. 

\textbf{Theoretical guarantee.}
The attention‑based retrieval in HopDC corresponds to the minimization of the MCHN energy function (see Appendix \ref{subsec:energy_landscape} for derivation), which can yields an theoretical error bound:

\begin{proposition}[Error bound of HopDC's drift estimation]
	\label{prop:error_bound}
	Let $\mathbf{z}$ be a query point. HopDC's estimate is $\hat{\boldsymbol{\delta}}(\mathbf{z}) = \sum_{i=1}^N p_i \delta(\mathbf{k}_i)$ with $p_i = \exp(\beta \mathbf{k}_i^\top\mathbf{z}) / \sum_j \exp(\beta \mathbf{k}_j^\top\mathbf{z})$, $\beta = 1/\tau$. Assume the true drift function $\delta$ is locally Lipschitz with constant $L$ (see Assumption~\ref{assump:lipschitz}). Then
	\begin{equation}
		\bigl\|\hat{\boldsymbol{\delta}}(\mathbf{z}) - \delta(\mathbf{z})\bigr\|
		\;\le\; L \sum_{i=1}^{N} p_i \|\mathbf{k}_i - \mathbf{z}\|.
		\label{eq:general_bound}
	\end{equation}
	Furthermore, when the anchor keys and the query are $\ell_2$-normalized, a temperature-dependent bound holds:
	\begin{equation}
		\bigl\|\hat{\boldsymbol{\delta}}(\mathbf{z}) - \delta(\mathbf{z})\bigr\| \le L \left( \min_i \|\mathbf{k}_i - \mathbf{z}\| + \sqrt{2\tau\log N} \right),
	\end{equation}
	where $\tau$ is the temperature parameter. For the sparse top-$k$ attention, $N$ is replaced by $k$ (see Appendix \ref{subsec:error_bound} and \ref{subsec:temperature_bound} for derivation).
\end{proposition}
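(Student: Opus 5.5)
The plan is to prove the two inequalities in sequence. The first is a convexity and Lipschitz argument. The second follows from the first by turning the softmax weights into a Gibbs distribution over squared distances and using a free-energy (entropy) bound.

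\textbf{Step 1 (general bound).} Since the attention weights satisfy $p_i\ge 0$ and $\sum_i p_i = 1$, we can write $\delta(\mathbf{z}) = \sum_i p_i \delta(\mathbf{z})$. Hence
$\hat{\boldsymbol{\delta}}(\mathbf{z}) - \delta(\mathbf{z}) = \sum_i p_i\bigl(\delta(\mathbf{k}_i) - \delta(\mathbf{z})\bigr)$.
The triangle inequality and the Lipschitz property of Assumption~\ref{assump:lipschitz} then give Eq.~\eqref{eq:general_bound}.

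The only delicate point is that the Lipschitz property is assumed only locally. I will use it on a region containing $\mathbf{z}$ and all $\mathbf{k}_i$. In the normalized case, the unit sphere serves as this region.

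\textbf{Step 2 (rewriting the weights).} For unit vectors, $\|\mathbf{k}_i-\mathbf{z}\|^2 = 2 - 2\mathbf{k}_i^\top\mathbf{z}$. Write $s_i = \|\mathbf{k}_i-\mathbf{z}\|^2$. The weights then become
$p_i = \exp(-\beta s_i/2)/\sum_j \exp(-\beta s_j/2)$,
because the constant factor $e^{\beta}$ cancels. So $p$ is the Gibbs distribution for the energies $E_i = s_i/2$ at temperature $\tau$.

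\textbf{Step 3 (free-energy bound).} By the Gibbs variational principle, $p$ minimizes $\sum_i q_i E_i - \tau H(q)$ over all distributions $q$ on the indices. Comparing $p$ with the point mass on $i^\star = \arg\min_i s_i$ gives
$\tfrac12\sum_i p_i s_i - \tau H(p) \le \tfrac12 s_{i^\star}$.
Since $H(p)\le \log N$, this yields $\sum_i p_i s_i \le s_{\min} + 2\tau\log N$.

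\textbf{Step 4 (from squared to plain distances).} By Jensen's inequality (concavity of the square root), $\sum_i p_i\|\mathbf{k}_i-\mathbf{z}\| \le \bigl(\sum_i p_i s_i\bigr)^{1/2}$. Subadditivity, $\sqrt{a+b}\le\sqrt a+\sqrt b$, then gives
$\sum_i p_i\|\mathbf{k}_i-\mathbf{z}\| \le \min_i\|\mathbf{k}_i-\mathbf{z}\| + \sqrt{2\tau\log N}$.
Substituting into Step 1 gives the temperature-dependent bound.

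\textbf{Step 5 (top-$k$ attention).} With top-$k$ sparsity, $p$ is a Gibbs distribution supported on the $k$ keys with largest $\mathbf{k}_i^\top\mathbf{z}$, so $H(p)\le\log k$. These keys are exactly the $k$ nearest in Euclidean distance on the sphere. Therefore the restricted minimum distance coincides with the global $\min_i\|\mathbf{k}_i-\mathbf{z}\|$, and the bound holds with $N$ replaced by $k$.

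\textbf{Main obstacle.} I expect Step 3 to be the crux. It requires recognizing the softmax as a Gibbs measure, which is where normalization is essential, and then getting the constant $2\tau\log N$ exactly from the entropy bound. A secondary issue is confirming that the ``locally Lipschitz'' assumption really covers all the pairs $(\mathbf{k}_i,\mathbf{z})$ used in Step 1. Otherwise the bound must be stated only for keys within the local neighbourhood of $\mathbf{z}$.
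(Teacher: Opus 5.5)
Your proposal is correct and follows the same route as the paper. Both write the error as a convex combination and apply the Lipschitz bound, use $\mathbf{k}_i^\top\mathbf{z} = 1-\tfrac12\|\mathbf{k}_i-\mathbf{z}\|^2$ to turn the softmax into a Gibbs distribution, bound the expected squared distance via the Gibbs variational principle, and finish with Jensen and $\sqrt{a+b}\le\sqrt a+\sqrt b$. Your Step 3 is actually the correct form of the paper's key step. The paper writes $\sum_i p_i a_i \le -\log\sum_i e^{-a_i}$, but the variational identity gives $\sum_i p_i a_i = -\log\sum_i e^{-a_i} + H(p) \ge -\log\sum_i e^{-a_i}$, so that inequality runs the wrong way; the paper's final bound still holds only because the $\log N$ slack absorbs $H(p)$, exactly as in your argument. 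Your top-$k$ observation, that the selected keys are the $k$ nearest neighbours on the sphere so the minimum is unchanged, is also sharper than the paper's remark.
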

\begin{remark}
	The bound indicates that the estimation error is governed by the attention‑weighted average distance to the anchors. A lower temperature $\tau$ sharpens the attention on the nearest anchor, which reduces the $\sqrt{2\tau\log N}$ term but may increase the estimation variance, as the bound reflects the worst‑case scenario. Empirically, we find $\tau = 0.05$ to strike a good balance in practice.
\end{remark}

\subsection{Algorithmic implementation.} Detailed pseudocodes for the proposed LR-RGDA and HopDC, along with the complete training pipeline, are presented in Appendix~\ref{app:algorithms} (see Algorithms~\ref{alg:main_pipeline}--\ref{alg:lr_rgda_inference}).

\section{Experiment Evaluations}
\begin{table*}[htbp]
	\centering
	\caption{CIL performance evaluations (\%) on four within-domain datasets using ViT/B-Sup21K over three random seeds. For methods including SLCA~\cite{zhang2023slca}, CoMA~\cite{10.1007/978-3-031-73209-6_18}, etc., we report their original results. For RanPAC~\cite{NEURIPS2023_2793dc35} and TSVD~\cite{peng2025tsvd}, we report the accuracies after FSA as presented in~\cite{peng2025tsvd}.}
	\setlength{\tabcolsep}{6pt}
	\renewcommand{\arraystretch}{0.85}
	\small
	\scalebox{0.94}{
		\begin{tabular}{@{}lccccccccccc@{}}
			\toprule
			\multirow{2}{*}{Method}
			& \multicolumn{2}{c}{CUB-200}
			& \multicolumn{2}{c}{Cars-196}
			& \multicolumn{2}{c}{CIFAR-100}
			& \multicolumn{2}{c}{ImageNet-R}
			& \multicolumn{2}{c}{Four datasets} \\
			\cmidrule(lr){2-3} \cmidrule(lr){4-5} \cmidrule(lr){6-7} \cmidrule(lr){8-9} \cmidrule(lr){10-11}
			& Last & Inc & Last & Inc & Last & Inc & Last & Inc & Last & Inc \\
			\midrule
			
			\textbf{Empirical upper bounds} \\
			\quad Joint Training & 88.43 & --- & 86.79 & --- & 93.56 & --- & 83.41 & --- & 88.05 & --- \\
			\midrule
			
			\textbf{Existing baselines} \\
			\quad CODA-Prompt \cite{smith2023coda} & 71.43 & 78.61 & 45.67 & 53.28 & 86.65 & 90.78 & 75.11 & 81.45 & 69.72 & 76.03 \\
			\quad LAE-Adapter \cite{gao2023unified} & 80.52 & 84.75 & 55.20 & 61.63 & 88.37 & 92.50 & 75.69 & 82.80 & 74.95 & 80.42 \\
			\quad RanPAC \cite{NEURIPS2023_2793dc35} & 88.13 & --- & 72.24 & --- & 87.02 & --- & 72.50 & --- & 79.97 & --- \\
			\quad SLCA \cite{zhang2023slca} & 84.71 & 90.94 & 67.73 & 76.93 & 91.53 & 94.09 & 77.00 & 81.17 & 80.24 & 85.78 \\
			\quad SLCA++ \cite{zhang2024slcaunleashpowersequential} & 86.59 & 91.63 & 73.97 & 79.49 & 91.46 & 94.20 & 78.09 & 82.95 & 82.53 & 87.07 \\
			\quad CoMA \cite{10.1007/978-3-031-73209-6_18} & 85.95 & 90.75 & 73.35 & 78.55 & \textbf{92.00} & 94.12 & 77.47 & 81.32 & 82.19 & 86.19 \\
			\quad HiDe-LoRA \cite{10970405} & 88.76 & 89.32 & 69.65 & 69.36 & 91.21 & 93.99 & 79.32 & 83.97 & 82.24 & 84.16 \\
			\quad TSVD \cite{peng2025tsvd} & \textbf{89.23} & --- & 75.13 & --- & 88.18 & --- & 73.63 & --- & 81.54 & --- \\
			\midrule
			
			\textbf{Proposed methods} \\
			\quad RanProj + \textit{LR-RGDA} & 85.73 & 90.92 & 66.71 & 77.18 & 91.28 & 94.01 & 75.93 & 79.64 & 79.91 & 85.44 \\
			\cmidrule(lr){2-11}
			
			\quad SeqFT + \textit{LR-RGDA} & 70.94 & 82.55 & 46.76 & 61.89 & 77.27 & 85.44 & 71.39 & 78.62 & 66.59 & 77.13 \\
			\quad \quad + \textit{HopDC} & 82.27 & 89.70 & 74.38 & 84.65 & 82.52 & 90.10 & 75.96 & 81.85 & 78.78 \textcolor{red}{\tiny +12.19} & 86.58 \textcolor{red}{\tiny +9.45} \\
			\cmidrule(lr){2-11}
			
			\quad SeqKD + \textit{LR-RGDA} & 84.20 & 91.10 & 71.42 & 83.12 & 89.49 & 94.03 & 78.40 & 83.93 & 80.88 & 88.05 \\
			\quad \quad + \textit{HopDC} & 87.22 & \textbf{92.14} & 82.50 & \textbf{87.50} & 90.91 & \textbf{94.52} & \textbf{80.14} & \textbf{84.60} & \textbf{85.19} \textcolor{red}{\tiny +4.31} & \textbf{89.69} \textcolor{red}{\tiny +1.64} \\
			\cmidrule(lr){2-11}
			
			\quad NSP-SeqFT + \textit{LR-RGDA} & 80.62 & 89.44 & 61.51 & 75.38 & 86.59 & 91.76 & 74.49 & 81.23 & 75.80 & 84.45 \\
			\quad \quad + \textit{HopDC} & 85.68 & 91.45 & 81.52 & 86.90 & 89.27 & 93.21 & 77.07 & 82.31 & 83.39 \textcolor{red}{\tiny +7.59} & 88.47 \textcolor{red}{\tiny +4.02} \\
			\cmidrule(lr){2-11}
			
			\quad NSP-SeqKD + \textit{LR-RGDA} & 85.23 & 91.24 & 69.78 & 81.37 & 89.49 & 94.03 & 77.88 & 83.32 & 80.60 & 87.49 \\
			\quad \quad + \textit{HopDC} & 87.65 & 92.12 & \textbf{82.61} & 87.36 & 90.91 & \textbf{94.52} & 79.42 & 83.96 & 85.15 \textcolor{red}{\tiny +4.55} & 89.49 \textcolor{red}{\tiny +2.00} \\
			\cmidrule(lr){2-11}
			
			\quad LoRA-SeqFT + \textit{LR-RGDA} & 83.82 & 89.74 & 57.26 & 69.37 & 88.14 & 91.92 & 76.83 & 81.95 & 76.51 & 83.25 \\
			\quad \quad + \textit{HopDC} & 86.72 & 91.53 & 80.94 & 86.77 & 87.56 & 92.32 & 78.32 & 83.27 & 83.39 \textcolor{red}{\tiny +6.88} & 88.47 \textcolor{red}{\tiny +5.22} \\
			\cmidrule(lr){2-11}
			
			\quad LoRA-SeqKD + \textit{LR-RGDA} & 87.07 & 91.63 & 80.52 & 86.02 & 91.30 & 93.95 & 79.97 & 83.68 & 84.72 & 88.82 \\
			\quad \quad + \textit{HopDC} & 87.07 & 91.65 & 80.73 & 86.07 & 91.60 & 93.93 & 79.80 & 83.94 & 84.80 \textcolor{red}{\tiny +0.08} & 88.90 \textcolor{red}{\tiny +0.08} \\
			\bottomrule
	\end{tabular}}
	\label{tab:results}
	\normalsize
\end{table*}
\subsection{Experimental Setup}
\textbf{Within-domain benchmarks.} 
We evaluate on four standard within-domain CIL datasets: CIFAR-100 \cite{krizhevsky2009learning}, ImageNet-R \cite{hendrycks2021many}, CUB-200 \cite{wah2011caltech}, and Cars-196 \cite{krause20133d}. Following \citet{sun2025pilot}, each dataset is split into 10 disjoint tasks. The average accuracy after the final task ($\textit{Last}$) and the incremental accuracy averaged over all tasks ($\textit{Inc}$) are reported.

\textbf{Baselines.} We apply LR-RGDA and HopDC to several Stage-1 adaptation strategies, which include frozen ViTs with random feature projection (RandProj)~\cite{NEURIPS2023_2793dc35, peng2025tsvd}, naive sequential fine-tuning (SeqFT), SeqFT with feature distillation (SeqKD)~\cite{heo2019comprehensive}, SeqFT with NSP (NSP-SeqFT)~\cite{wang2021training}, and SeqFT with NSP and feature distillation (NSP-SeqKD). We also evaluate LoRA-based strategies (LoRA-SeqFT and LoRA-SeqKD)~\cite{hu2022lora} with rank of 4. Results are presented in two phases: (1) using LR-RGDA alone as an analytic replacement for SGD-based classifiers, and (2) incorporating HopDC to mitigate distribution drift. 

\textbf{Implementation details.}
All experiments are conducted on NVIDIA RTX 4090 GPUs. In Stage 1, we use SGD with a learning rate of $5 \times 10^{-6}$ for full fine-tuning (e.g., SeqFT) and AdamW \cite{loshchilov2019decoupled} with $1 \times 10^{-4}$ for LoRA-based methods. Each task is trained for 1,000 or 1,500 steps with a cosine annealing scheduler. In Stage 2, following standard CIL protocols~\cite{zhang2023slca, wang2022learning}, we assume a uniform class prior distribution (i.e., $\pi_c = 1/|\mathcal{C}_t|$) for all datasets. The subspace rank of LR-RGDA is fixed at $r=64$; its regularization hyperparameters $\alpha_1=0.2$, $\alpha_2=2.0$, and $\alpha_3=0.5$ are selected via the cross-validation. For HopDC, we adopt $\tau=0.05$, $k=400$, and employ an auxiliary set of 1,024 unlabeled anchors drawn from ImageNet-1K \cite{deng2009imagenet}. Following the settings of SLDC \cite{rao2025compensating}, we also utilize samples of the current task as supplement anchors. Notably, we strictly keep the above default configurations throughout all our main experiments. The implementation details are provided in Appendix~\ref{app:implementation}.

\subsection{Results on Within-Domain Benchmarks}
\textbf{Performance on ViT/B-Sup21K.} Table~\ref{tab:results} summarizes the performance on ViT/B-Sup21K \cite{ridnik2021imagenet}. The average results on ViT/B-MoCoV3 \cite{chen2021empirical} are provided in Table~\ref{tab:results_mocov3_new_single}. Here, we focus on comparing LR-RGDA with strong CIL baselines across different Stage-1 strategies. Notable observations include:

\textit{1. New state-of-the-art performance.}
LR-RGDA paired with HopDC consistently establishes new benchmarks. On ViT/B-Sup21K (Table~\ref{tab:results}), our SeqKD variant achieves an average Last-Acc of 85.19\% across four datasets, surpassing the strongest competitor CoMA (82.19\%) by a significant margin of +3.0 pp. Similarly, on ViT/B-MoCoV3 (Appendix Table~\ref{tab:results_mocov3_new}), LoRA-SeqKD reaches 81.32\%, outperforming CoMA by +6.7 pp. It confirms that our analytic framework sets a new upper bound for CIL accuracy.

\textit{2. HopDC effectively resolves representation drift.}
HopDC yields substantial gains for highly plastic Stage-1 strategies. For SeqFT on MoCoV3, it boosts accuracy by +15.9 pp (62.6\% $\to$ 78.5\%), recovering its performance to match stable Stage-1 methods such as LoRA-SeqFT and NSP-SeqFT. This validates that the primary cause of degradation in fine-tuning is statistical misalignment, which HopDC effectively recalibrates.

\textit{3. Universality across optimization strategies.}
Our method generalizes across diverse backbones, supporting both stable (SeqKD) and plastic (SeqFT/LoRA) strategies. While effective on learned features, LR-RGDA slightly trails RanPAC on fixed random projections (RandProj) for complex datasets, likely due to the lower projection dimension and non-Gaussian effects of random ReLU projection.
\subsection{Results on Cross-Domain Benchmarks}
\begin{table}[htbp]
	\centering
	\caption{CIL performance evaluations (\%) on four within-domain datasets using ViT/B-MoCoV3 over three random seeds. The detailed results for each dataset are presented in Table \ref{tab:results_mocov3_new} of Appendix.}
	\setlength{\tabcolsep}{8pt}
	\renewcommand{\arraystretch}{0.7}
	\small
	\scalebox{0.95}{
		\begin{tabular}{@{}lcc@{}}
			\toprule
			\multirow{2}{*}{Method}
			& \multicolumn{2}{c}{Average (Four datasets)} \\
			\cmidrule(lr){2-3}
			& Last & Inc \\
			\midrule
			\textbf{Empirical upper bounds} \\
			\quad Joint-Training & 82.20 & --- \\
			\midrule
			\textbf{Existing baselines} \\
			\quad RanPAC & 73.31 & 80.91 \\
			\quad SLCA++ & 74.74 & 80.72 \\
			\quad CoMA & 74.63 & 81.08 \\
			\midrule
			\textbf{Proposed methods} \\
			\quad RanProj + \textit{LR-RGDA} & 71.91 & 78.52 \\
			\cmidrule(lr){2-3}
			
			\quad SeqFT + \textit{LR-RGDA} & 62.61 & 72.95 \\
			\quad \quad + \textit{HopDC} & 78.50 \textcolor{red}{\tiny +15.89} & 84.56 \textcolor{red}{\tiny +11.61} \\
			\cmidrule(lr){2-3}
			
			\quad SeqKD + \textit{LR-RGDA} & 71.81 & 81.44 \\
			\quad \quad + \textit{HopDC} & 81.22 \textcolor{red}{\tiny +9.41} & 85.81 \textcolor{red}{\tiny +4.37} \\
			\cmidrule(lr){2-3}
			
			\quad NSP-SeqFT + \textit{LR-RGDA} & 74.90 & 82.53 \\
			\quad \quad + \textit{HopDC} & 79.37 \textcolor{red}{\tiny +4.47} & 84.33 \textcolor{red}{\tiny +1.80} \\
			\cmidrule(lr){2-3}
			
			\quad NSP-SeqKD + \textit{LR-RGDA} & 75.87 & 83.28 \\
			\quad \quad + \textit{HopDC} & 79.90 \textcolor{red}{\tiny +4.03} & 84.83 \textcolor{red}{\tiny +1.55} \\
			\cmidrule(lr){2-3}
			
			\quad LoRA-SeqFT + \textit{LR-RGDA} & 63.02 & 73.23 \\
			\quad \quad + \textit{HopDC} & 78.06 \textcolor{red}{\tiny +15.04} & 84.99 \textcolor{red}{\tiny +11.76} \\
			\cmidrule(lr){2-3}
			
			\quad LoRA-SeqKD + \textit{LR-RGDA} & 74.84 & 83.93 \\
			\quad \quad + \textit{HopDC} & \textbf{81.32} \textcolor{red}{\tiny +6.48} & \textbf{86.82} \textcolor{red}{\tiny +2.89} \\
			\bottomrule
		\end{tabular}
	}
	\label{tab:results_mocov3_new_single}
	\normalsize
\end{table}

\begin{table*}[htbp]
	\centering
	\caption{CIL performance evaluations (\%) on a 128-shot 14-task cross-domain benchmark (1,001 classes, seven datasets). We compare the final class-iwse average accuracy (\%) of our analytic classifier (LR-RGDA) against iterative SGD and linear LDA classifiers across diverse backbone strategies and ViTs.}
	\renewcommand{\arraystretch}{0.8}
	\small
	\begin{tabular}{lccccccccc}
		\toprule
		\multirow{2}{*}{Training Method} 
		& \multicolumn{3}{c}{ViT/B-MoCoV3} 
		& \multicolumn{3}{c}{ViT/B-Sup21K} 
		& \multicolumn{3}{c}{ViT/B-CLIP} \\
		\cmidrule(lr){2-4} \cmidrule(lr){5-7} \cmidrule(lr){8-10}
		& LR-RGDA & SGD & LDA & LR-RGDA & SGD & LDA & LR-RGDA & SGD & LDA \\
		\midrule
		SeqFT               & \textbf{67.13} & 62.21 & 62.30 & \textbf{75.44} & 73.07 & 71.65 & \textbf{65.40} & 65.18 & 64.50 \\
		\quad + $\alpha_1$-SLDC & \textbf{69.55} & 65.09 & 64.43 & \textbf{74.84} & 72.56 & 72.04 & 66.77 & \textbf{67.00} & 66.45 \\
		\quad + HopDC       & \textbf{71.45} & 68.35 & 66.25 & \textbf{78.31} & 76.42 & 73.31 & \textbf{71.40} & 69.42 & 68.68 \\
		\midrule
		SeqKD               & \textbf{73.39} & 70.39 & 68.13 & \textbf{81.21} & 79.50 & 78.20 & \textbf{71.33} & 70.18 & 69.42 \\
		\quad + $\alpha_1$-SLDC & \textbf{73.52} & 71.08 & 68.24 & \textbf{80.97} & 79.65 & 77.45 & \textbf{76.24} & 74.47 & 74.54 \\
		\quad + HopDC       & \textbf{74.60} & 72.42 & 69.50 & \textbf{81.71} & 80.41 & 77.99 & \textbf{77.02} & 74.71 & 73.93 \\
		\midrule
		NSP-SeqKD           & \textbf{73.31} & 71.41 & 68.94 & \textbf{80.32} & 78.93 & 76.49 & \textbf{80.70} & 79.41 & 78.81 \\
		\quad + $\alpha_1$-SLDC & \textbf{72.84} & 72.13 & 68.01 & \textbf{80.38} & 79.54 & 76.36 & \textbf{81.00} & 79.60 & 78.90 \\
		\quad + HopDC       & \textbf{74.11} & 73.05 & 69.78 & \textbf{81.42} & 80.45 & 77.20 & \textbf{82.31} & 80.84 & 79.80 \\
		\midrule
		LoRA-SeqFT          & \textbf{68.44} & 61.84 & 63.14 & \textbf{77.47} & 74.43 & 73.18 & \textbf{72.62} & 71.26 & 70.79 \\
		\quad + $\alpha_1$-SLDC & \textbf{67.86} & 63.38 & 63.57 & \textbf{76.27} & 74.07 & 72.61 & \textbf{76.23} & 73.04 & 73.98 \\
		\quad + HopDC       & \textbf{72.44} & 68.24 & 66.88 & \textbf{79.37} & 77.10 & 74.33 & \textbf{77.16} & 75.27 & 74.33 \\
		\midrule
		LoRA-SeqKD          & \textbf{74.64} & 70.87 & 69.36 & \textbf{81.90} & 80.39 & 78.25 & \textbf{80.52} & 78.88 & 78.15 \\
		\quad + $\alpha_1$-SLDC & \textbf{74.20} & 71.22 & 69.15 & \textbf{81.67} & 80.20 & 77.86 & \textbf{80.88} & 78.96 & 78.26 \\
		\quad + HopDC       & \textbf{75.72} & 73.08 & 70.91 & \textbf{82.60} & 81.41 & 78.56 & \textbf{81.86} & 80.44 & 78.98 \\
		\bottomrule
	\end{tabular}
	\label{tab:merged_performance}
	\normalsize
\end{table*}

To evaluate robustness in a large-scale CIL scenario, we construct a 14-task cross-domain benchmark spanning seven diverse datasets: CIFAR-100, ImageNet-R, Cars-196, CUB-200, Caltech-101 \cite{fei2004learning}, Flower-102 \cite{nilsback2008automated}, and Food-101 \cite{bossard2014food}. Each dataset is split into two disjoint tasks, and we enforce class balance by capping samples at 128 per class. From Table~\ref{tab:merged_performance}, the key observations are:

\textit{1. LR-RGDA consistently outperforms SGD-based classifiers.}
Across all settings, LR-RGDA achieves higher accuracy than SGD classifiers. For example, with ViT/B-Sup21K, LR-RGDA surpasses SGD by an average of +0.8 pp to +2.4 pp across different backbone strategies.

\textit{2. HopDC acts as a universal drift compensator.}
HopDC consistently boosts performance over both the baseline (no compensation) and the competing method $\alpha_1$-SLDC \cite{rao2025compensating}. On ViT/B-CLIP with SeqFT, HopDC yields a +6.0 pp gain (65.40\% $\to$ 71.40\%) over the baseline and outperforms $\alpha_1$-SLDC by +4.6 pp. It confirms that using associative memory with unlabelled anchors is more effective than simple linear transformation of $\alpha_1$-SLDC for mitigating semantic drift.


\subsection{Ablation studies on LR-RGDA}
\begin{figure}
	\centering
	\includegraphics[width=1.0\linewidth]{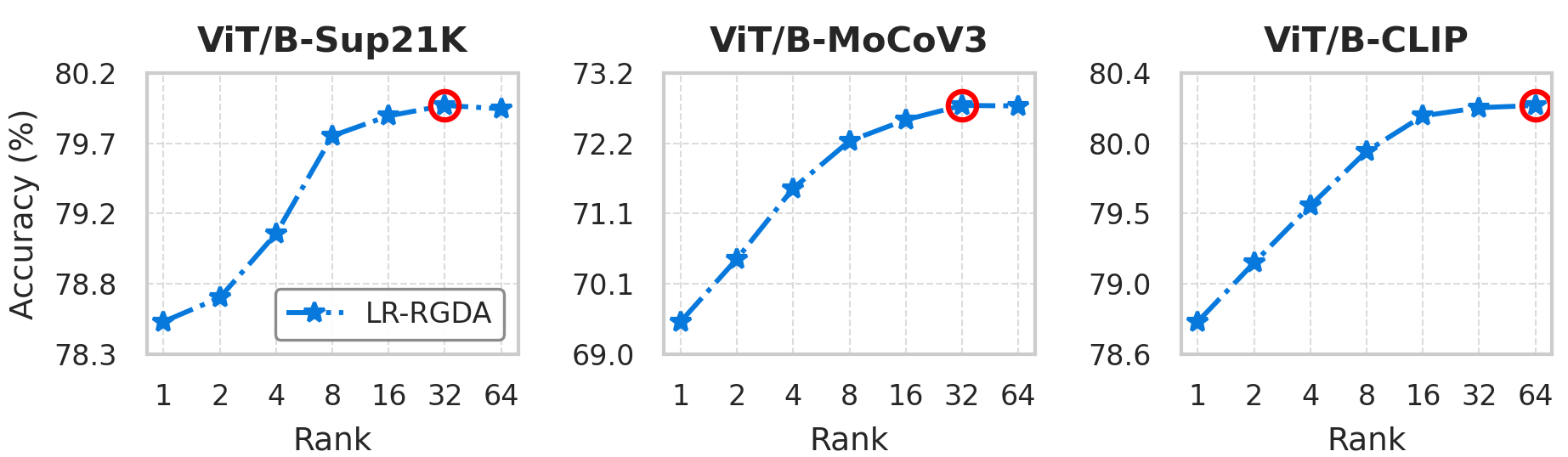}
	\caption{Ablation study on the low-rank dimension $r$ of LR-RGDA on the cross-domain CIL dataset with different ViTs.}
	\label{fig:rankablationcomparison}
	\vspace{-1pt}
\end{figure}
\textbf{Impact of subspace dimension.} Here we assess the influence of the subspace rank \( r \) in LR-RGDA on the cross-domain CIL dataset with fixed hyperparameters (\(\alpha_1 = 0.2, \alpha_2 = 2.0, \alpha_3 = 0.5\)). The ViT models undergo short-term full fine-tuning for 500 iterations. As shown in Figure \ref{fig:rankablationcomparison}, performance across all ViTs improves sharply as \( r \) increases from 1 to 16. It implies that the leading principal components of the class covariance matrices encapsulate the most discriminative information. In Appendix \ref{sec:ablation_rank}, we further present a comprehensive ablation study on the impact of subspace rank across cross-domain and within-domain datasets, both with and without additional fine-tuning (see Figures \ref{fig:rankablationcomparisonfour}, \ref{fig:rankablationcomparisonfour_iter500}, \ref{fig:rankablationcomparison_frozen}, and \ref{fig:rankablationcomparison_finetuned}).

\textbf{Sensitivity to regularization hyperparameters.}
\begin{figure}[t]
	\centering
	\includegraphics[width=1.0\linewidth]{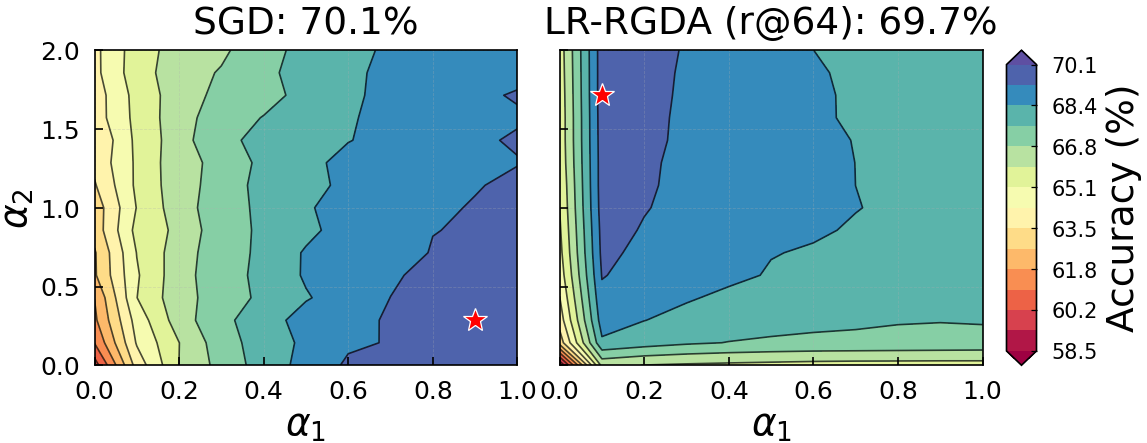}
	\caption{Sensitivity analysis of $\alpha_1$ and $\alpha_2$ in regularized covariance matrices for SGD-based and LR-RGDA classifiers.}
	\label{fig:contourcombinedvit-b-p16classwise}
\end{figure}
Utilizing the frozen ViT/B-Sup21K backbone while fixing $\alpha_3 = 0.5$, we further examine the hyperparameter sensitivity concerning $\alpha_1$ and $\alpha_2$. As illustrated by the accuracy contours in Figure \ref{fig:contourcombinedvit-b-p16classwise}, a sharp divergence emerges between the two classifiers: whereas SGD favors class-specific covariance ($\alpha_1 \to 1, \alpha_2 \to 0$), the analytic LR-RGDA relies predominantly on global regularization, characterized by small $\alpha_1 \approx 0.1$ and $\alpha_2 > 1.0$. This disparity underscores the distinct mechanisms that differentiate iterative optimization from analytic optimization, which presents an open question for future theoretical research. Furthermore, in Appendix \ref{sec:contours}, we provide a more detailed ablation study that investigates dataset-wise sensitivity and the sensitivity contours of different ViTs (see Figures \ref{fig:exp3datasetsensitivityvit-b-p16-clipalpha1} and \ref{fig:multirankcontourvit-b-p16iter0}).

\textbf{Empirical inference speed comparison.}
\begin{figure}[!htbp]
	\centering
	\includegraphics[width=0.95\linewidth]{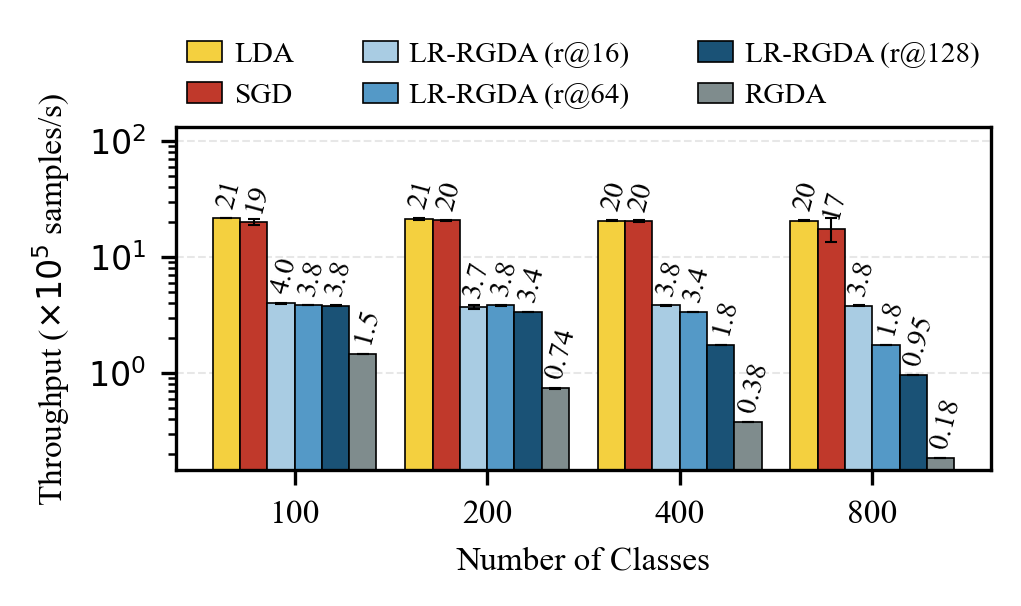}
	\caption{Inference throughput comparison of different classifiers (log scale). We compare the samples-per-second throughput of LR-RGDA against linear baselines (LDA, SGD) and the full-rank QDA as the number of classes increases.}
	\label{fig:throughputrot45}
\end{figure}
We evaluate the inference throughput on an RTX 4090 GPU as the number of classes increases from 100 to 800. As shown in Figure \ref{fig:throughputrot45}, while linear baselines (LDA, SGD) maintain consistently high speeds due to efficient parallelization, QDA suffers from a sharp decline because its computational complexity scales as $\mathcal{O}(Cd^2)$. LR-RGDA effectively bridges this performance gap: when operating at low ranks (e.g., $r=16$), the overhead introduced by the quadratic term is negligible, which enables LR-RGDA to sustain high throughput that remains almost constant as class count grows. Even at a rank of $r=128$, LR-RGDA is orders of magnitude faster than QDA. A more comprehensive empirical analysis, which includes construction time from scratch, classifier-only throughput, and end-to-end throughput, is provided in Appendix \ref{sub:comparison} (see Figure \ref{fig:computation_comparison}).

\subsection{Ablation studies on HopDC}

We evaluate the sensitivity of HopDC to sparsity $k$, temperature $\tau$, and auxiliary set size $N$, whose detailed quantitative results are provided in Appendix~\ref{sec:app_robustness}. Empirically, while the method demonstrates remarkable robustness to $k$ where varying neighbor counts yields negligible fluctuations (Table~\ref{tab:app_hyperparams}), the optimal temperature $\tau$ proves task-dependent, favoring lower values that filter out semantic noise. Furthermore, regarding data efficiency (Table~\ref{tab:app_data_efficiency}), HopDC maintains near-constant accuracy even when $N$ is reduced to 256 samples, which outperforms the recent optimal compensation method $\alpha_1$-SLDC~\cite{rao2025compensating} which suffers from unreliable global estimates under data scarcity.

\section{Conclusion}
This work addresses two critical bottlenecks in exemplar-free CIL with pre-trained ViTs: the computational inefficiency of SGD-based classifier reconstruction and the distribution misalignment induced by backbone updates. To this end, we propose a unified framework integrating LR-RGDA and HopDC. Specifically, as an alternative to SGD-based classifiers, LR-RGDA exploits the Woodbury matrix identity to decompose the high-dimensional discriminant function into a global affine term refined by low-rank quadratic perturbations. This design effectively reconciles the expressivity of RGDA with the linear inference efficiency of LDA. Furthermore, HopDC harnesses the associative memory dynamics of MCHNs to rectify historical distribution drifts using unlabeled anchors. Crucially, we theoretically ground this mechanism by deriving an explicit upper bound on the estimation error. Extensive experiments across diverse backbone optimization strategies, pre-trained ViTs, and CIL benchmarks confirm the superiority of our approach.

Despite these contributions, limitations remain: LR-RGDA has yet to be validated in online CIL scenarios, and HopDC is currently inapplicable to settings with undefined task boundaries. Additionally, the attention mechanism in HopDC introduces minor computational overhead. Future work will focus on optimizing the efficiency of the associative memory dynamics to facilitate broader deployment.


\section*{Impact Statement}
This paper presents work whose goal is to advance the field of Machine Learning, specifically in addressing efficiency and robustness challenges within class-incremental learning. There are many potential societal consequences of our work, none which we feel must be specifically highlighted here.

\bibliography{example_paper}
\bibliographystyle{icml2026}

\newpage
\onecolumn
\appendix

\section{Theoretical Analysis and Proofs}
\label{app:theory}

In this appendix, we provide complete mathematical derivations to support the proposed LR-RGDA framework. First, we present the formal proof of the Bayes-optimality of the RGDA's discriminant function (\textbf{Lemma \ref{eq:bayes_opt}}). Second, we proof the decomposition of the LR-RGDA's discriminant function (\textbf{Proposition \ref{prop:decomposed}}), which demonstrates that how the Woodbury matrix identity (\textbf{Lemma \ref{eq:woodbury_identity}}) enables the separation of the global affine term and the local low-rank quadratic corrections. Finally, we offer a detailed complexity analysis on comparing the storage, construction, and inference complexity of our method with LDA and RGDA classifiers, so as to theoretically validate the efficiency gains reported in the main paper.

\subsection{Proof of Bayes Optimality of RGDA}
\label{app:proof_bayes}

\noindent \textbf{Lemma \ref{eq:bayes_opt}} (Bayes Optimality of RGDA). \textit{Under the assumption that the class-wise features follow a Gaussian distribution $\mathcal{N}\left( \boldsymbol{\mu}_{c}, \boldsymbol{\Sigma}_{c}^{\mathrm{reg}} \right)$, maximizing the RGDA discriminant function corresponds to the Bayes-optimal decision rule.}

\begin{proof}
	Under the Gaussian distribution assumption for the class-wise deep features captured by the ViT mapping, the conditional probability density of a feature $\mathbf{x}\in\mathbb{R}^{d}$ under class $c$ is
	\begin{align}
		p(\mathbf{x} \mid c)=
		\frac{\exp\!\left(-\frac{1}{2} (\mathbf{x}-\boldsymbol{\mu}_{c})^{\top}
			\bigl(\boldsymbol{\Sigma}_{c}^{\mathrm{reg}}\bigr)^{-1}
			(\mathbf{x}-\boldsymbol{\mu}_{c})\right)}
		{(2\pi)^{d/2}|\boldsymbol{\Sigma}_{c}^{\mathrm{reg}}|^{1/2}},
	\end{align}
	where the regularized covariance matrix is defined as
	$\boldsymbol{\Sigma}_{c}^{\mathrm{reg}}
	= \alpha_{1}\boldsymbol{\Sigma}_{c}
	+ \alpha_{2}\boldsymbol{\Sigma}_{\mathrm{avg}}
	+ \alpha_{3}I_{d}$.
	Let the prior probability of class $c$ be $\pi_{c}=p(c)$, which satisfies
	$\sum_{c=1}^{C}\pi_{c}=1$.
	The Bayes-optimal classifier selects the class $\hat{c}$ that maximizes the posterior probability as
	\begin{align}
		\hat{c}(\mathbf{x})=\mathop{\arg\max}_{c}\; p(c\mid \mathbf{x})
		=\mathop{\arg\max}_{c}\; \bigl[ p(c)p(\mathbf{x}\mid c) \bigr].
	\end{align}
	By taking logarithms and dropping class-independent constants (specifically, the term $-\frac{d}{2}\log(2\pi)$), the maximization of the log-posterior is equivalent to maximizing
	\begin{align}
		\mathcal{J}(\mathbf{x}, c)
		&= \log \pi_{c}
		-\frac{1}{2} (\mathbf{x}-\boldsymbol{\mu}_{c})^{\top}
		\bigl(\boldsymbol{\Sigma}_{c}^{\mathrm{reg}}\bigr)^{-1}
		(\mathbf{x}-\boldsymbol{\mu}_{c})
		-\frac{1}{2} \log |\boldsymbol{\Sigma}_{c}^{\mathrm{reg}}| .
	\end{align}
	This expression is exactly the discriminant function of RGDA, which is denoted by
	\begin{align}
		g_{c}^{\mathrm{RGDA}}(\mathbf{x})=
		-\frac{1}{2} (\mathbf{x}-\boldsymbol{\mu}_{c})^{\top}
		\bigl(\boldsymbol{\Sigma}_{c}^{\mathrm{reg}}\bigr)^{-1}
		(\mathbf{x}-\boldsymbol{\mu}_{c})
		-\frac{1}{2} \log |\boldsymbol{\Sigma}_{c}^{\mathrm{reg}}|
		+\log \pi_{c}.
	\end{align}
	Therefore, for any input $\mathbf{x}$, we have
	\begin{align}
		\hat{c}(\mathbf{x}) = \mathop{\arg\max}_{c}\; p(c\mid \mathbf{x}) \iff \hat{c}(\mathbf{x}) = \mathop{\arg\max}_{c}\; g_{c}^{\mathrm{RGDA}}(\mathbf{x}).
	\end{align}
	It confirms that the RGDA classifier implements the Bayes-optimal decision rule under the assumed regularized Gaussian distributions. \qedhere
\end{proof}

\subsection{Derivation of Discriminant Decomposition (Proof of Proposition~\ref{prop:decomposed})}
\label{app:proof_decomposed}

\noindent \textbf{Proposition \ref{prop:decomposed}} (Decomposition of LR-RGDA's discriminant function).
\textit{
	The discriminant function of LR-RGDA, denoted by $g_c^{\mathrm{LR\text{-}RGDA}}(\mathbf{x})$, can be decomposed into a global affine term refined by a class-specific quadratic correction in an $r$-dimensional subspace as
	\begin{equation}
		g_c^{\mathrm{LR\text{-}RGDA}}(\mathbf{x}) = \mathcal{L}_c(\mathbf{x}) + \mathcal{Q}_c(\mathbf{x}).
	\end{equation}
	Here, $\mathcal{L}_c(\mathbf{x}) = \mathbf{w}_c^\top \mathbf{x} + b_c$ represents the affine discriminant function determined with parameters
	\begin{align}
		\mathbf{w}_c &= \mathbf{B}^{-1}\boldsymbol{\mu}_c, \\
		b_c &= -\frac{1}{2}\boldsymbol{\mu}_c^\top \mathbf{B}^{-1}\boldsymbol{\mu}_c
		-\frac{1}{2}\log\det(\boldsymbol{\Sigma}_c^{\mathrm{reg}})
		+ \log\pi_c.
	\end{align}
	The term $\mathcal{Q}_c(\mathbf{x}) = \frac{1}{2}\mathbf{u}_c^\top \mathbf{M}_c^{-1}\mathbf{u}_c$ represents the class-specific quadratic correction term, where
	\begin{equation}
		\mathbf{u}_c = \mathbf{P}_{c}^{\mathrm{proj}}(\mathbf{x}-\boldsymbol{\mu}_c)
		\in \mathbb{R}^r
	\end{equation}
	is the projection of the centered feature $(\mathbf{x} - \boldsymbol{\mu}_c)$ onto the $r$-dimensional principal subspace of $\boldsymbol{\Sigma}_{c}$, via the projection matrix $\mathbf{P}_{c}^{\mathrm{proj}} = \widetilde{\mathbf{U}}_c^\top \mathbf{B}^{-1} \in \mathbb{R}^{r \times d}$.
}

\begin{proof}
	We start from the RGDA discriminant function
	\begin{align}
		g_c^{\mathrm{RGDA}}(\mathbf{x})=
		-\frac{1}{2}(\mathbf{x}-\boldsymbol{\mu}_c)^\top
		\bigl(\boldsymbol{\Sigma}_c^{\mathrm{reg}}\bigr)^{-1}
		(\mathbf{x}-\boldsymbol{\mu}_c) -\frac{1}{2}\log\det(\boldsymbol{\Sigma}_c^{\mathrm{reg}})
		+\log\pi_c.
	\end{align}
	By applying the Woodbury matrix identity to $\bigl(\boldsymbol{\Sigma}_c^{\mathrm{reg}}\bigr)^{-1}$, we obtain
	\begin{align}
		g_c^{\mathrm{RGDA}}(\mathbf{x}) &=
		-\frac{1}{2}(\mathbf{x}-\boldsymbol{\mu}_c)^\top
		\Bigl(\mathbf{B}^{-1}
		-\mathbf{B}^{-1}\widetilde{\mathbf{U}}_c\mathbf{M}_c^{-1}
		\widetilde{\mathbf{U}}_c^\top\mathbf{B}^{-1}\Bigr)
		(\mathbf{x}-\boldsymbol{\mu}_c) \nonumber \\
		&\quad - \frac{1}{2}\log\det(\boldsymbol{\Sigma}_c^{\mathrm{reg}})
		+\log\pi_c,
	\end{align}
	where $\mathbf{M}_c = \mathbf{I}_r + \widetilde{\mathbf{U}}_c^\top \mathbf{B}^{-1} \widetilde{\mathbf{U}}_c$. Expanding the quadratic form yields
	\begin{align}
		\label{eq:232}
		(\mathbf{x}-\boldsymbol{\mu}_c)^\top
		\bigl(\boldsymbol{\Sigma}_c^{\mathrm{reg}}\bigr)^{-1}
		(\mathbf{x}-\boldsymbol{\mu}_c)
		&=
		(\mathbf{x}-\boldsymbol{\mu}_c)^\top\mathbf{B}^{-1}(\mathbf{x}-\boldsymbol{\mu}_c) \nonumber \\
		&\quad -\bigl(\widetilde{\mathbf{U}}_c^\top
		\mathbf{B}^{-1}(\mathbf{x}-\boldsymbol{\mu}_c)\bigr)^{\top}
		\mathbf{M}_c^{-1}
		\bigl(\widetilde{\mathbf{U}}_c^\top
		\mathbf{B}^{-1}(\mathbf{x}-\boldsymbol{\mu}_c)\bigr).
	\end{align}
	Let the projected residual be $\mathbf{u}_c = \widetilde{\mathbf{U}}_c^\top \mathbf{B}^{-1}(\mathbf{x}-\boldsymbol{\mu}_c)$. Then, \eqref{eq:232} becomes
	\begin{align}
		\label{eq:quadratic_rewrite}
		(\mathbf{x}-\boldsymbol{\mu}_c)^\top
		\bigl(\boldsymbol{\Sigma}_c^{\mathrm{reg}}\bigr)^{-1}
		(\mathbf{x}-\boldsymbol{\mu}_c)
		&= (\mathbf{x}-\boldsymbol{\mu}_c)^\top \mathbf{B}^{-1} (\mathbf{x}-\boldsymbol{\mu}_c)
		-\mathbf{u}_c^\top \mathbf{M}_c^{-1}\mathbf{u}_c.
	\end{align}
	The first term in \eqref{eq:quadratic_rewrite} expands as
	\begin{align}
		(\mathbf{x}-\boldsymbol{\mu}_c)^\top \mathbf{B}^{-1} (\mathbf{x}-\boldsymbol{\mu}_c)
		= \mathbf{x}^\top\mathbf{B}^{-1}\mathbf{x}
		-2\boldsymbol{\mu}_c^\top\mathbf{B}^{-1}\mathbf{x}
		+\boldsymbol{\mu}_c^\top\mathbf{B}^{-1}\boldsymbol{\mu}_c .
	\end{align}
	Since the term $\mathbf{x}^\top\mathbf{B}^{-1}\mathbf{x}$ is class-independent, it can be dropped when applying the $\arg \max$ operator for classification. Collecting the remaining terms gives the discriminant function of LR-RGDA as
	\begin{align}\label{eq:lr_rgda}
		g_c^{\mathrm{LR\text{-}RGDA}}(\mathbf{x})
		= \boldsymbol{\mu}_c^\top\mathbf{B}^{-1}\mathbf{x}
		-\frac{1}{2}\boldsymbol{\mu}_c^\top\mathbf{B}^{-1}\boldsymbol{\mu}_c
		+\frac{1}{2}\mathbf{u}_c^\top \mathbf{M}_c^{-1}\mathbf{u}_c - \frac{1}{2}\log\det(\boldsymbol{\Sigma}_c^{\mathrm{reg}})
		+\log\pi_c .
	\end{align}
	Finally, by regrouping the terms in \eqref{eq:lr_rgda}, we identify the affine part $\mathcal{L}_c(\mathbf{x})$ and the quadratic correction $\mathcal{Q}_c(\mathbf{x})$ as
	\begin{align}
		\mathcal{L}_c(\mathbf{x})
		&= \underbrace{\boldsymbol{\mu}_c^\top\mathbf{B}^{-1}}_{\mathbf{w}_c^\top}\mathbf{x}
		\underbrace{-\frac{1}{2}\boldsymbol{\mu}_c^\top\mathbf{B}^{-1}\boldsymbol{\mu}_c
			-\frac{1}{2}\log\det(\boldsymbol{\Sigma}_c^{\mathrm{reg}})
			+ \log\pi_c}_{b_c}, \\
		\mathcal{Q}_c(\mathbf{x})
		&= \frac{1}{2}\mathbf{u}_c^\top \mathbf{M}_c^{-1}\mathbf{u}_c.
	\end{align}
	Thus, $g_c^{\mathrm{LR\text{-}RGDA}}(\mathbf{x}) = 	\mathcal{L}_c(\mathbf{x}) + \mathcal{Q}_c(\mathbf{x})$.
	\qedhere
\end{proof}

\subsection{Complexity Comparison Analysis}
\label{sec:complexity_comp}
In this section, we provide the detailed comparison of the storage, construction, and inference complexities for the proposed LR-RGDA against standard LDA and RGDA classifiers. Our analysis assumes that class-wise statistics ($\boldsymbol{\mu}_c, \boldsymbol{\Sigma}_c$) are available from the incremental accumulation stage. Therefore, ``construction'' refers strictly to the computation required to derive discriminant parameters from these statistics. We first summarize the pre-computed terms required by each classifier in Table~\ref{tab:precomputation} and the resulting complexity metrics in Table~\ref{tab:complexities}.

\begin{table}[htbp]
	\centering
	\caption{Summary of pre-computed terms during the inference of different discriminant analysis classifiers under the provided statistics $\mathcal{N}\left( \boldsymbol{\mu}_{c}, \boldsymbol{\Sigma}_{c} \right)$ for $c \in \mathcal{C}$. Note that the cost of calculating the bias terms $b_c$, which involves the quadratic form and determinant, can be absorbed into the listed complexities due to the reuse of intermediate variables (e.g., $\mathbf{w}_c$ and $\mathbf{M}_c$). Here, $d$ is the feature dimension, $C$ is the number of classes, and $r$ is the rank of the class-specific subspace ($r \ll d$).}
	\begin{tabular}{l l l l}
		\toprule
		Method & Scope & Pre-computed terms during inference & Computational cost \\
		\midrule
		\multirow{2}{*}{LDA} & Global & Precision matrix $\boldsymbol{\Sigma}_{\mathrm{avg}}^{-1}$ & $\mathcal{O}\left(d^3\right)$ \\
		\cmidrule{2-4}
		& Class-specific & Affine weights $\mathbf{w}_c, b_c$ & $\mathcal{O}\left(C d^2\right)$ \\
		\midrule
		\multirow{2}{*}{RGDA} & \multirow{2}{*}{Class-specific} & Precision matrix $\left(\boldsymbol{\Sigma}_c^{\mathrm{reg}}\right)^{-1}$ & \multirow{2}{*}{$\mathcal{O}\left(C d^3\right)$} \\
		& & Determinant $\det\left(\boldsymbol{\Sigma}_c^{\mathrm{reg}}\right)$ & \\
		\midrule
		\multirow{5}{*}{LR-RGDA} & Global & Base inverse $\mathbf{B}^{-1}$ & $\mathcal{O}\left(d^3\right)$ \\
		\cmidrule{2-4}
		& \multirow{4}{*}{Class-specific} & SVD component $\widetilde{\mathbf{U}}_c$ & $\mathcal{O}\left(C d^2 r\right)$ \\
		& & Core inverse $\mathbf{M}_c^{-1}$ & $\mathcal{O}\left(C d^2 r\right)$ \\
		& & Projection matrix $\mathbf{P}_c^{\mathrm{proj}}$ & $\mathcal{O}\left(C d^2 r\right)$ \\
		& & Affine weights $\mathbf{w}_c, b_c$ & $\mathcal{O}\left(C d^2\right)$ \\
		\bottomrule
	\end{tabular}
	\label{tab:precomputation}
\end{table}

\begin{table}[htbp]
	\centering
	\caption{Storage, construction and inference complexities comparison of LDA, RGDA and LR-RGDA classifiers. }
	\begin{tabular}{llll}
		\toprule
		Method & Storage complexity & Construction complexity & Inference complexity (per-sample)\\
		\midrule
		LDA &
		$\mathcal{O}\left(d^{2}+Cd\right)$ &
		$\mathcal{O}\left(d^{3} + Cd^{2}\right)$ &
		$\mathcal{O}\left(Cd\right)$ \\
		RGDA &
		$\mathcal{O}\left(Cd^{2}\right)$ &
		$\mathcal{O}\left(Cd^{3}\right)$&
		$\mathcal{O}\left(Cd^{2}\right)$ \\
		LR-RGDA &
		$\mathcal{O}\left(d^{2}+Cdr\right)$ &
		$\mathcal{O}\left(d^{3}+Cd^{2}r\right)$ &
		$\mathcal{O}\left(d^{2}+Cdr\right)$ \\
		\bottomrule
	\end{tabular}
	\label{tab:complexities}
\end{table}

\noindent \subsubsection{Complexities of LDA}
LDA assumes a homoscedastic Gaussian distribution where all classes share a single global covariance matrix $\boldsymbol{\Sigma}_{\mathrm{avg}}$. Consequently, the discriminant function can be simplified into a linear boundary as
\begin{equation}
	g_c^{\mathrm{LDA}}(\mathbf{x}) = \mathbf{w}_c^\top \mathbf{x} + b_c, 
\end{equation}
where the weight vector is $\mathbf{w}_c = \boldsymbol{\Sigma}_{\mathrm{avg}}^{-1}\boldsymbol{\mu}_c$, and the bias term is defined as $b_c = -\frac{1}{2}\boldsymbol{\mu}_c^\top \mathbf{w}_c + \log \pi_c$.

\begin{itemize}
	\item \textbf{Storage}: $\mathcal{O}(d^2 + Cd)$. 
	Although a linear classifier typically requires only $\mathcal{O}(Cd)$ storage for parameters, in the CIL context, we additionally maintain the shared inverse covariance matrix $\boldsymbol{\Sigma}_{\mathrm{avg}}^{-1}$ to facilitate future incremental updates. Thus, the total storage includes the shared matrix plus the class-specific vector $\mathbf{w}_c$ and scalar $b_c$.
	
	\item \textbf{Construction}: $\mathcal{O}(d^3 + Cd^2)$. 
	The construction phase consists of two primary operations. First, inverting the shared covariance matrix $\boldsymbol{\Sigma}_{\mathrm{avg}}$ incurs a one-time cost of $\mathcal{O}(d^3)$. Second, computing $\mathbf{w}_c$ requires a dense matrix-vector multiplication with the cost of $\mathcal{O}(d^2)$. Note that the bias $b_c$ is efficiently derived via the dot product $\mathbf{w}_c^\top \boldsymbol{\mu}_c$ with the cost of $\mathcal{O}(d)$, which imposes negligible overhead compared to computing $\mathbf{w}_c$.
	
	\item \textbf{Inference}: $\mathcal{O}(Cd)$. 
	During the inference, classification depends on the pre-computed parameters only. Evaluating $g_c^{\mathrm{LDA}}(\mathbf{x})$ involves a simple dot product between the input $\mathbf{x}$ and $\mathbf{w}_c$ for each class. However, while this linear scaling makes LDA extremely fast, it limits LDA's ability to leverage the class-specific fine-grained information contained in covariance matrices.
\end{itemize}

\noindent \subsubsection{Complexities of RGDA}
RGDA relaxes the homoscedastic assumption by modeling each class with a unique regularized covariance matrix $\boldsymbol{\Sigma}_c^{\mathrm{reg}}$. The discriminant function is dominated by the class-specific Mahalanobis distance
\begin{equation}
	g_c^{\mathrm{RGDA}}(\mathbf{x}) = -\frac{1}{2}(\mathbf{x}-\boldsymbol{\mu}_c)^\top (\boldsymbol{\Sigma}_c^{\mathrm{reg}})^{-1} (\mathbf{x}-\boldsymbol{\mu}_c) + \log\pi_c - \frac{1}{2}\log\det(\boldsymbol{\Sigma}_c^{\mathrm{reg}}).
\end{equation}
\begin{itemize}
	\item \textbf{Storage}: $\mathcal{O}(Cd^2)$. 
	Unlike LDA, RGDA must store a unique precision matrix $(\boldsymbol{\Sigma}_c^{\mathrm{reg}})^{-1}$ of size $d \times d$ for every class $c \in \{1, \dots, C\}$. This quadratic dependency on $d$ multiplied by the number of classes $C$ creates a severe memory bottleneck in large-scale scenarios.
	
	\item \textbf{Construction}: $\mathcal{O}(Cd^3)$. 
	This represents the most significant computational bottleneck. Constructing the classifier requires inverting the covariance matrix for each individual class. Since a single matrix inversion scales as $\mathcal{O}(d^3)$, performing this operation for all $C$ classes results in the  $\mathcal{O}(Cd^3)$ complexity.
	
	\item \textbf{Inference}: $\mathcal{O}(Cd^2)$. 
	The inference bottleneck lies in evaluating the quadratic term $(\mathbf{x}-\boldsymbol{\mu}_c)^\top (\boldsymbol{\Sigma}_c^{\mathrm{reg}})^{-1} (\mathbf{x}-\boldsymbol{\mu}_c)$. This operation necessitates a vector-matrix-vector multiplication chain: first computing $(\boldsymbol{\Sigma}_c^{\mathrm{reg}})^{-1} (\mathbf{x}-\boldsymbol{\mu}_c)$ with cost $\mathcal{O}(d^2)$, which is followed by a dot product. Repeating it for $C$ classes renders RGDA orders of magnitude slower than LDA.
\end{itemize}

\noindent \subsubsection{Complexities of LR-RGDA}
Our proposed LR-RGDA bridges the gap between the speed of LDA and the expressivity of RGDA. By decomposing the covariance into a shared base $\mathbf{B}$ and a rank-$r$ class-specific update, and omitting the class-independent shared term which can be dropped during the $\arg\max$ operation, we derive the efficient discriminant function as
\begin{equation}
	g_c^{\mathrm{LR\text{-}RGDA}}(\mathbf{x}) = \underbrace{\mathbf{w}_c^\top \mathbf{x} + b_c}_{\text{Affine term}} + \underbrace{\frac{1}{2}\mathbf{u}_c^\top \mathbf{M}_c^{-1} \mathbf{u}_c}_{\text{Correction}},
\end{equation}
where $\mathbf{u}_c = \mathbf{P}_c^{\mathrm{proj}}(\mathbf{x}-\boldsymbol{\mu}_c)$ projects the centered features $(\mathbf{x}-\boldsymbol{\mu}_c)$ into an efficient $r$-dimensional subspace.

\begin{itemize}
	\item \textbf{Storage}: $\mathcal{O}(d^2 + Cdr)$. 
	We reduce the memory usage of RGDA by replacing the $C$ full dense matrices of RGDA with compact low-rank factors. Specifically, we store the single shared base inverse $\mathbf{B}^{-1}$ (size $d \times d$) and, for each class, a projection matrix $\mathbf{P}_c^{\mathrm{proj}}$ (size $r \times d$) and a small core matrix $\mathbf{M}_c^{-1}$ (size $r \times r$). Since the rank $r \ll d$, the class-specific storage scaling as $\mathcal{O}(Cdr)$ is minimal compared to $\mathcal{O}(Cd^2)$.
	
	\item \textbf{Construction}: $\mathcal{O}(d^3 + Cd^2r)$. 
	 The construction process involves one global inversion of $\mathbf{B}$ costing $\mathcal{O}(d^3)$, which is then followed by SVD for each class scaling as $\mathcal{O}(d^2r)$. Crucially, we optimize the computation of the bias term $b_c$ by applying the \textit{Matrix Determinant Lemma}, as
	\begin{equation}
		\log\det(\boldsymbol{\Sigma}_c^{\mathrm{reg}}) = \log\det(\mathbf{M}_c) + \log\det(\mathbf{B}).
	\end{equation}
	It allows us to compute the determinant on the small $r \times r$ core matrix $\mathbf{M}_c$ with negligible cost $\mathcal{O}(r^3)$, rather than the expensive $\mathcal{O}(d^3)$ required for the full matrix.
	
	\item \textbf{Inference}: $\mathcal{O}(d^2 + Cdr)$. 
	The inference is accelerated by discarding the computationally heavy shared quadratic term $-\frac{1}{2}\mathbf{x}^\top \mathbf{B}^{-1} \mathbf{x}$, as it is common to all classes and redundant for the decision rule. The computational cost is thus dominated by the class-specific evaluations: an affine score costing $\mathcal{O}(d)$ and a low-rank quadratic correction. The correction involves projecting $\mathbf{x}$ to $\mathbb{R}^r$ with a cost of $\mathcal{O}(dr)$ and evaluating the form in the subspace requiring $\mathcal{O}(r^2)$. Summing them over $C$ classes yields a highly scalable complexity of $\mathcal{O}(Cdr)$.
\end{itemize}

\subsection{Empirical computational comparison of different classifiers}
\label{sub:comparison}
\begin{figure*}[htbp]
	\centering
	\small
	\begin{subfigure}{0.55\textwidth}
		\centering
		\includegraphics[width=\linewidth]{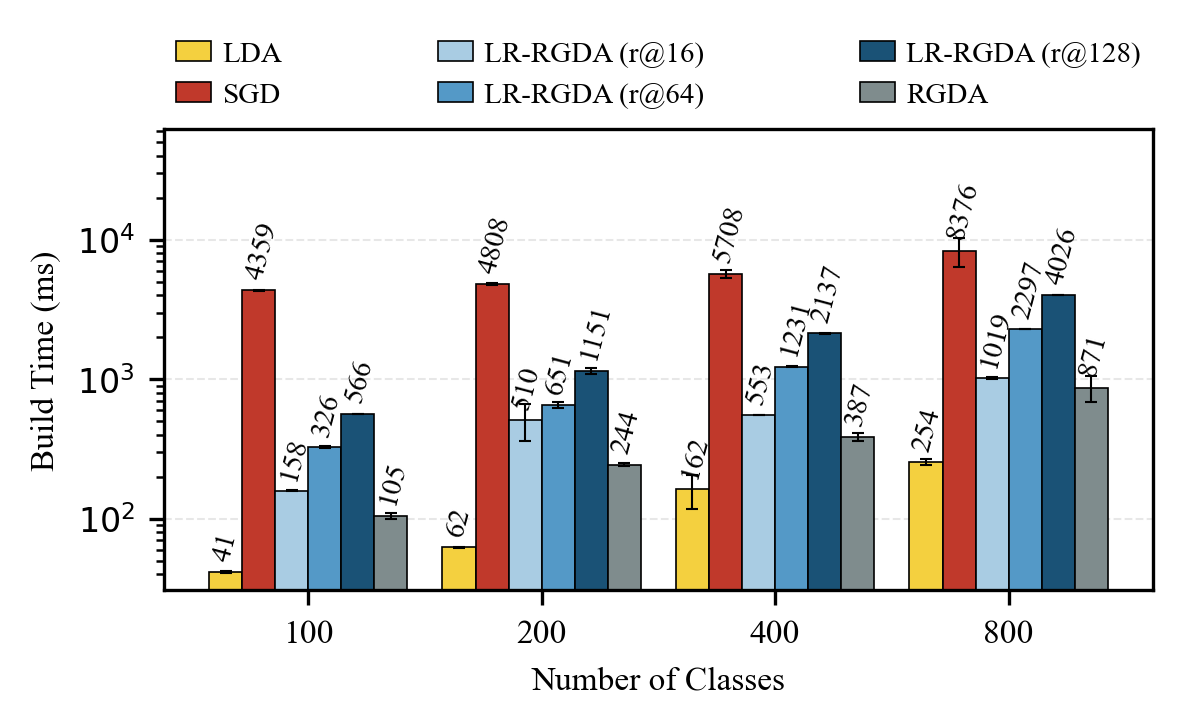}
		\caption{Construction time from scratch of different classifiers (ms)}
		\label{fig:build}
	\end{subfigure}
	\hfill
	\begin{subfigure}{0.55\textwidth}
		\centering
		\includegraphics[width=\linewidth]{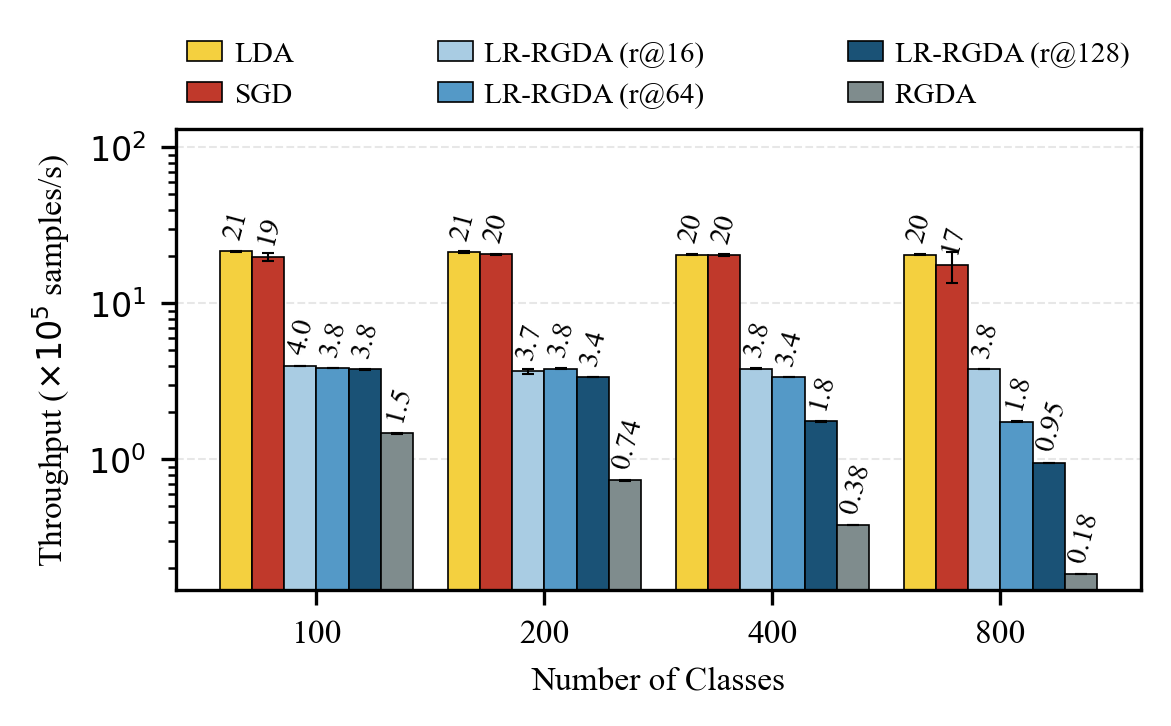}
		\caption{Classifier-only  throughput}
		\label{fig:throughput}
	\end{subfigure}
	\hfill
	\begin{subfigure}{0.55\textwidth}
		\centering
		\includegraphics[width=\linewidth]{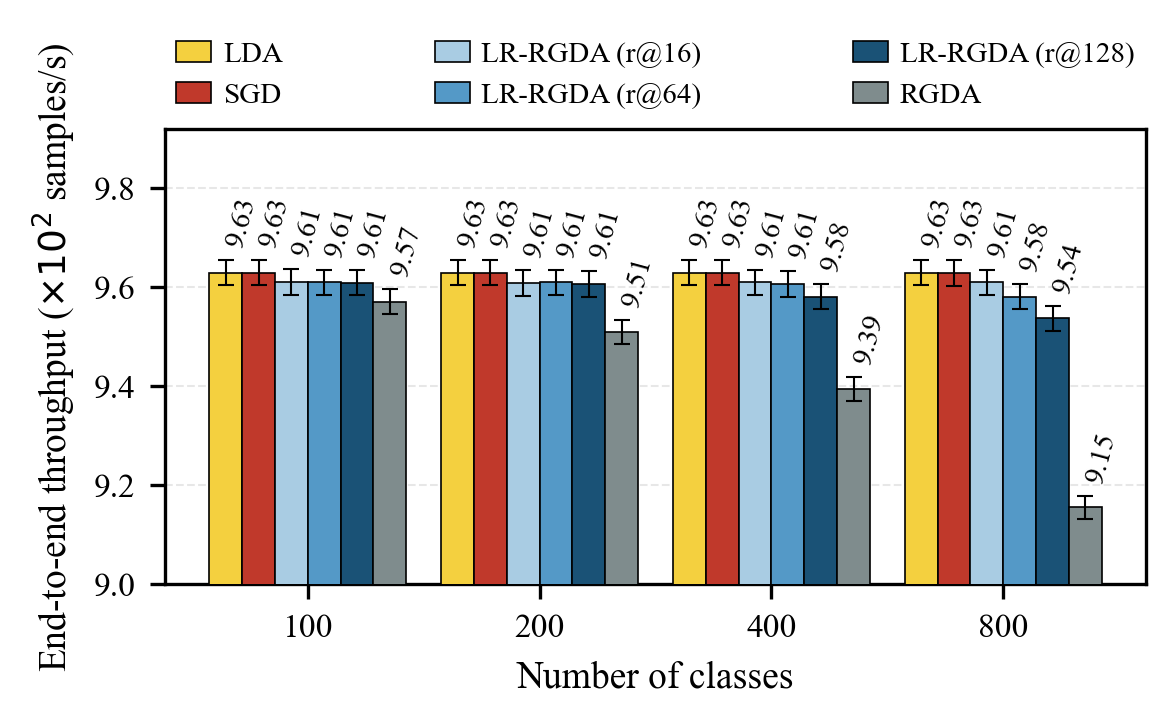}
		\caption{End-to-end throughput}
		\label{fig:end2end}
	\end{subfigure}
	
	\caption{\textbf{Empirical computational comparison of different classifiers.} 
		(a) Comparison of total construction time (log scale), which reveals that while SGD (red) incurs high absolute latency due to iteration, LR-RGDA supports efficient incremental updates since its modular structure allows historical SVD components to be cached. 
		(b) Classifier-only inference throughput (log scale), where the full-rank RGDA (gray) experiences a significant drop due to quadratic scaling, whereas LR-RGDA (blue) maintains near-linear speeds that are comparable to LDA and SGD. 
		(c) End-to-end throughput including the ViT backbone, which demonstrates that LR-RGDA eliminates the computational bottleneck inherent in standard quadratic classifiers, thereby matching the practical speeds of linear baselines when subspace rank is small (e.g., 16 and 64).}
	\label{fig:computation_comparison}
\end{figure*}

Here, we evaluate the computational efficiency of the proposed LR-RGDA against linear baselines LDA and SGD-based classifiers and the full-rank RGDA across three dimensions, i.e., the construction time, classifier inference throughput, and end-to-end system latency.

\paragraph{Construction complexity.} 
As illustrated in Figure~\ref{fig:build}, which reports the total time required to construct classifier parameters from scratch(i.e., given all Gaussian distributions), analytic methods (e.g., RGDA and LR-RGDA) exhibit an increase in latency as the number of classes grows, which is attributed to the accumulation of per-class matrix operations. 
Although SGD-based classifiers (red bars) appear to maintain a flatter growth rate, they incur a significantly higher absolute latency due to the necessity of iterative optimization.
Crucially, we highlight that this ``from-scratch'' comparison does not fully reflect the efficiency of LR-RGDA in CIL contexts where historical statistics are strictly modularized.
Unlike SGD-based methods that often require re-optimization over the buffered data to adjust global decision boundaries, LR-RGDA enables a streaming update mechanism where the SVD results of historical classes are cached and reused.
Consequently, the actual computational cost during an incremental step is determined solely by the decomposition of the newly added classes, which corresponds to the negligible latency observed in the few-class regime (the far-left bars), making LR-RGDA orders of magnitude faster than SGD in practical streaming scenarios.

\paragraph{Classifier inference throughput.}
Figure~\ref{fig:throughput} reports the throughput of the classifier module in isolation. 
It is evident that as the class count increases, the $\mathcal{O}(Cd^2)$ complexity of the full-rank RGDA causes a precipitous drop in throughput.
In contrast, LR-RGDA, which employs a low-rank factorization to reduce the quadratic overhead, successfully bridges this gap.
Particularly at lower ranks ($r=16$), LR-RGDA maintains a throughput that is comparable to linear classifiers (LDA), thereby ensuring scalability for large-scale datasets.

\textbf{End-to-end inference throughput.}
Figure~\ref{fig:end2end} illustrates the practical throughput when the ViT backbone is included. 
Although the heavy backbone computation dominates the total system latency, the results demonstrate that the computational overhead of the standard RGDA classifier creates a noticeable bottleneck that drags down overall throughput.
LR-RGDA effectively eliminates this bottleneck, which ensures that the system throughput is limited only by the feature extractor. It also implies that the LR-RGDA can achieve the end-to-end speeds that are on par with SGD and LDA classifiers.

\newpage
\section{Theoretical Analysis of HopDC}
\label{app:hopdc_theory}

This section provides the theoretical foundation for the proposed HopDC. We first establish that the drift retrieval process is mathematically equivalent to performing one step of energy minimization in a MCHN \cite{DBLP:conf/iclr/RamsauerSLSWGHA21}. Subsequently, we derive a theoretical error bound for the drift estimation, demonstrating that the estimation error is constrained by the local Lipschitz properties of the semantic drift and the attention-weighted distance to the anchors.

\subsection{Energy Landscape and Retrieval Dynamics}
\label{subsec:energy_landscape}
Following \citet{DBLP:conf/iclr/RamsauerSLSWGHA21}, let $\mathbf{F}^{\text{old}} = [\mathbf{k}_1, \dots, \mathbf{k}_N]^\top \in \mathbb{R}^{N \times d}$ be the stored patterns (Keys) representing the historical anchor features. For a query state $\mathbf{z} \in \mathbb{R}^d$ (a pseudo-feature from an old class), the energy function $E(\mathbf{z})$ is defined as
\begin{equation}
	E(\mathbf{z}) = -\frac{1}{\beta}\, \text{lse}\bigl(\beta, \mathbf{K}\mathbf{z}\bigr) 
	+ \frac{1}{2}\,\mathbf{z}^\top\mathbf{z} 
	+ \frac{1}{2}\max_i \|\mathbf{k}_i\|^2,
	\label{eq:energy_function}
\end{equation}
where $\beta = 1/\tau > 0$ is the inverse temperature parameter, $C = \frac{1}{2}\max_i \|\mathbf{k}_i\|^2$ is a constant term ensuring boundedness, and $\text{lse}(\beta, \mathbf{x}) = \log\left(\sum_{i=1}^N \exp(\beta x_i)\right)$ is the Log-Sum-Exp function. Crucially, the Log-Sum-Exp function is convex. The minima of this energy function correspond to the fixed points that represent the stored memories (or their metastable states).

\textbf{Retrieval as energy minimization.}
The retrieval process consists of updating the state $\mathbf{z}$ in the direction that decreases $E(\mathbf{z})$. Computing the gradient of $E$ with respect to $\mathbf{z}$ yields
\begin{align}
	\nabla_{\mathbf{z}} E(\mathbf{z})
	&= -\frac{1}{\beta}\, \nabla_{\mathbf{z}} \text{lse}\bigl(\beta, \mathbf{K}\mathbf{z}\bigr) + \mathbf{z} \nonumber = - \sum_{i=1}^N 
	\frac{\exp(\beta \mathbf{k}_i^\top\mathbf{z})}
	{\sum_{j=1}^N \exp(\beta \mathbf{k}_j^\top\mathbf{z})} \,
	\mathbf{k}_i \;+\; \mathbf{z} \nonumber \\
	&= -\, \text{Softmax}\left(\beta \mathbf{K}\mathbf{z}\right)^{\!\top} \mathbf{K} \;+\; \mathbf{z}.
\end{align}
A gradient‑descent step with a specific step size therefore gives the update
\begin{equation}
	\mathbf{z}^{\text{new}} = \text{Softmax}(\beta \mathbf{K}\mathbf{z})^{\!\top} \mathbf{K}.
	\label{eq:mchn_update}
\end{equation}
Equation~\eqref{eq:mchn_update} shows that a single step of energy minimization is exactly the attention mechanism that computes a convex combination of the stored patterns, with weights determined by the softmax output of the dot products. Crucially, for MCHNs, this update often converges to a fixed point in one step \cite{DBLP:conf/iclr/RamsauerSLSWGHA21}, which justifies the use of a single forward pass in HopDC.
 
In HopDC we employ a hetero‑associative variant of this dynamics.  While the attention weights are obtained from the feature manifold (the Keys $\mathbf{K} = \mathbf{F}^{\text{old}}$), we use them to retrieve the associated drift vectors (the Values $\mathbf{D}$) rather than the features themselves. Hence the estimated drift for a set of pseudo‑features $\mathbf{Q}$ becomes
\begin{equation}
	\boldsymbol{\Delta}^{\text{est}} = 
	\text{Softmax}\!\left(\frac{1}{\tau}\,\mathbf{Q} \mathbf{K}^{\!\top}\right) \mathbf{D},
\end{equation}
which is precisely the one‑step energy‑minimizing recall that interpolates the drift features of the anchor points.

\subsection{Theoretical Error Bound of Drift Estimation}
\label{subsec:error_bound}

To quantify the reliability of the drift estimation, we derive an upper bound on the approximation error.  The bound depends on two factors: the local smoothness of the true drift function and the distances between the query points and the anchor points, weighted by the attention distribution.

\begin{assumption}[Local Lipschitz continuity of the drift function]
	\label{assump:lipschitz}
	Let $\delta: \mathbb{R}^d \to \mathbb{R}^d$ be the true drift function of the ViT mapping between successive tasks, i.e., $\delta(\mathbf{k}_i) = \mathbf{d}_i$ for each anchor $\mathbf{k}_i$.  
	We assume that $\delta$ is locally Lipschitz continuous, namely, for any query $\mathbf{z}$ and any anchor $\mathbf{k}_i$, there exists a finite constant $L_{\mathbf{z},i} \ge 0$ such that
	\begin{equation}
		\bigl\|\delta(\mathbf{k}_i) - \delta(\mathbf{z})\bigr\|
		\;\le\; L_{\mathbf{z},i}\, \|\mathbf{k}_i - \mathbf{z}\|.
		\label{eq:local_lipschitz}
	\end{equation}
	The constant $L_{\mathbf{z},i}$ may depend on the pair $(\mathbf{z},\mathbf{k}_i)$. Intuitively, as $\mathbf{k}_i \to \mathbf{z}$, the drift discrepancy vanishes.  
	For the purpose of deriving a uniform bound we define $L = \sup_{i} L_{\mathbf{z},i}$ over the anchors that receive non‑negligible attention.
\end{assumption}

\begin{proposition}[General error bound of HopDC's drift estimation]
	\label{prop:general_bound}
	Let $\mathbf{z}$ be a query sample. The HopDC's drift estimation is given by $\hat{\boldsymbol{\delta}}(\mathbf{z}) = \sum_{i=1}^N p_i \delta(\mathbf{k}_i)$, where $p_i = \frac{\exp\left(\beta \mathbf{k}_i^\top\mathbf{z}\right)}
	{\sum_j \exp\left(\beta \mathbf{k}_j^\top\mathbf{z}\right)}$
	are the attention weights produced by the MCHN dynamics. The estimation error of HopDC is bounded by
	\begin{equation}
		\bigl\|\hat{\boldsymbol{\delta}}(\mathbf{z}) - \delta(\mathbf{z})\bigr\|
		\;\le\; L \sum_{i=1}^{N} p_i \|\mathbf{k}_i - \mathbf{z}\|.
		\label{eq:general_bound_}
	\end{equation}
\end{proposition}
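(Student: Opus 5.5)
The argument rests on one observation: the softmax weights form a probability vector, so $\delta(\mathbf{z})$ can be written as a convex combination of itself and moved inside the sum. Concretely, $p_i>0$ and $\sum_{i=1}^N p_i = 1$ by the definition of the softmax in Proposition~\ref{prop:general_bound}. Hence $\delta(\mathbf{z}) = \sum_{i=1}^N p_i\,\delta(\mathbf{z})$, and the error becomes a weighted sum of pointwise discrepancies:
\begin{equation*}
\hat{\boldsymbol{\delta}}(\mathbf{z}) - \delta(\mathbf{z}) = \sum_{i=1}^N p_i\bigl(\delta(\mathbf{k}_i) - \delta(\mathbf{z})\bigr).
\end{equation*}

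Next, take norms and apply the triangle inequality. Nonnegativity of the weights lets them pass outside the norm:
\begin{equation*}
\bigl\|\hat{\boldsymbol{\delta}}(\mathbf{z}) - \delta(\mathbf{z})\bigr\| \le \sum_{i=1}^N p_i\,\bigl\|\delta(\mathbf{k}_i) - \delta(\mathbf{z})\bigr\|.
\end{equation*}
Each summand is then controlled by Assumption~\ref{assump:lipschitz}, which gives $\|\delta(\mathbf{k}_i)-\delta(\mathbf{z})\|\le L_{\mathbf{z},i}\|\mathbf{k}_i-\mathbf{z}\|$. Bounding $L_{\mathbf{z},i}\le L$ yields $L\sum_i p_i\|\mathbf{k}_i-\mathbf{z}\|$, which is the claimed inequality.

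The only delicate point is the uniform constant $L$. Assumption~\ref{assump:lipschitz} defines $L$ as a supremum over anchors receiving non-negligible attention. For the bound to be rigorous, I would simply take $L=\sup_i L_{\mathbf{z},i}$ over all $N$ anchors, or over all anchors with $p_i>0$; this supremum is finite because it is a maximum of finitely many finite constants. For the top-$k$ variant, anchors outside the top-$k$ set have $p_i=0$, so only the selected anchors matter. Beyond this point, the proof is a one-line convexity argument with no real obstacle.
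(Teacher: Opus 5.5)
Your proof is correct and matches the paper's argument: write the error as $\sum_i p_i\bigl(\delta(\mathbf{k}_i)-\delta(\mathbf{z})\bigr)$ using $\sum_i p_i=1$, move the weights outside the norm with the triangle inequality (the paper calls this Jensen's inequality), and apply the Lipschitz condition term by term. You also take $L$ as the maximum of $L_{\mathbf{z},i}$ over all anchors with $p_i>0$, which for the full softmax means all $N$ anchors; this is a worthwhile tightening, since the paper's ``non-negligible attention'' definition of $L$ would not by itself make the inequality rigorous.
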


\begin{proof}
	Since $\sum_i p_i = 1$, we have
	\begin{align*}
		\hat{\boldsymbol{\delta}}(\mathbf{z}) - \delta(\mathbf{z})
		= \sum_{i=1}^{N} p_i \delta(\mathbf{k}_i) - \delta(\mathbf{z})
		= \sum_{i=1}^{N} p_i \left( \delta(\mathbf{k}_i) - \delta(\mathbf{z}) \right).
	\end{align*}
	Taking norms and applying the Jensen's inequality gives
	\begin{align}
		\bigl\|\hat{\boldsymbol{\delta}}(\mathbf{z}) - \delta(\mathbf{z})\bigr\|
		\le \sum_{i=1}^{N} p_i \bigl\|\delta(\mathbf{k}_i) - \delta(\mathbf{z})\bigr\|.
		\label{eq:jeson}
	\end{align}
	Substituting the Lipschitz condition~\eqref{eq:local_lipschitz} into \eqref{eq:jeson} yields the bound~\eqref{eq:general_bound}.
\end{proof}

The bound~\eqref{eq:general_bound_} has a clear interpretation: the error is controlled by the expected distance between the query and the anchors, where the expectation is taken with respect to the attention distribution $\{p_i\}$.  Consequently, the estimation becomes accurate when the attention mechanism assigns large weights to anchors that are close to $\mathbf{z}$.

\subsection{Temperature-Dependent Guarantees and Practical Implications}
\label{subsec:temperature_bound}

The general bound in Proposition \ref{prop:general_bound} can be further refined to yield explicit, computable guarantees that highlight the roles of the temperature $\tau$ and the anchor set size $N$.

\begin{corollary}[Nearest‑neighbour limit]
	\label{cor:nn_limit}
	When the attention distribution becomes one‑hot on the nearest anchor \(i^* = \arg\min_i \|\mathbf{k}_i - \mathbf{z}\|\) (e.g., as $\tau \to 0$), we recover a nearest‑neighbour guarantee:
	\begin{equation}
		\bigl\|\hat{\boldsymbol{\delta}}(\mathbf{z}) - \delta(\mathbf{z})\bigr\| \le L \min_i \|\mathbf{k}_i - \mathbf{z}\|.
		\label{eq:nn_bound}
	\end{equation}
\end{corollary}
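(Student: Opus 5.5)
The plan is to obtain Corollary~\ref{cor:nn_limit} as a direct specialization of Proposition~\ref{prop:general_bound}. Nothing about the Lipschitz argument needs to be redone; the only work is identifying the limiting attention distribution.

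First, I would fix the query $\mathbf{z}$ and the nearest anchor index $i^* = \arg\min_i \|\mathbf{k}_i - \mathbf{z}\|$. I assume, as is generic, that this minimizer is unique.

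Second, I would justify that the attention weights become one-hot on $i^*$ as $\tau \to 0$, i.e.\ as $\beta \to \infty$. Here I use the $\ell_2$-normalization of keys and query from the second part of Proposition~\ref{prop:error_bound}. Under it,
\begin{equation}
\|\mathbf{k}_i - \mathbf{z}\|^2 = 2 - 2\,\mathbf{k}_i^\top\mathbf{z},
\end{equation}
so the largest dot product is attained exactly at the nearest anchor $i^*$.

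Writing
\begin{equation}
p_i = \Bigl(\sum_j \exp\bigl(\beta(\mathbf{k}_j^\top\mathbf{z} - \mathbf{k}_i^\top\mathbf{z})\bigr)\Bigr)^{-1},
\end{equation}
each $p_i$ with $i \neq i^*$ contains a term with strictly positive exponent in the denominator. Hence $p_i \to 0$ for $i \neq i^*$, and $p_{i^*} \to 1$. If the statement is instead read as simply assuming the one-hot distribution, this step is just the hypothesis.

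Third, I would plug $p_i = \mathbb{1}[i = i^*]$ into the bound of Proposition~\ref{prop:general_bound}. The sum collapses to $L\|\mathbf{k}_{i^*} - \mathbf{z}\| = L\min_i \|\mathbf{k}_i - \mathbf{z}\|$, which is the claimed inequality.

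To be careful about the limit itself, I would pass to the limit in the general bound. The right-hand side $L\sum_i p_i \|\mathbf{k}_i - \mathbf{z}\|$ is continuous in the weights $p$. Likewise $\hat{\boldsymbol{\delta}}(\mathbf{z}) = \sum_i p_i \delta(\mathbf{k}_i)$ converges to $\delta(\mathbf{k}_{i^*})$. The inequality is therefore preserved in the limit. Alternatively, one can apply the Lipschitz condition directly to the single pair $(\mathbf{z}, \mathbf{k}_{i^*})$.

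The only delicate point is tie-breaking, when several anchors are equidistant from $\mathbf{z}$. In that case the limiting weights spread uniformly over the minimizers, but each of them is at distance $\min_i\|\mathbf{k}_i - \mathbf{z}\|$. The collapsed sum therefore still equals that minimum, and the bound holds unchanged.
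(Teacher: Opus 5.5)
Your proposal is correct and follows the paper's route. The paper gives no separate proof and treats the corollary as an immediate specialization of Proposition~\ref{prop:general_bound}: setting $p_{i^*}=1$ and $p_i=0$ for $i\neq i^*$ collapses the weighted sum to $L\|\mathbf{k}_{i^*}-\mathbf{z}\| = L\min_i\|\mathbf{k}_i-\mathbf{z}\|$, and your justification that $\tau\to 0$ actually produces this one-hot distribution, together with your tie analysis, fills in details the paper leaves implicit. Your use of the $\ell_2$-normalization is genuinely needed for the "as $\tau\to 0$" reading, since without it the softmax concentrates on the largest dot product rather than on the nearest anchor.
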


In practice, the softmax weights are distributed, but we can still obtain a bound that explicitly involves the temperature $\tau$ and the number of anchors $N$.

\begin{corollary}[Temperature‑dependent bound]
	\label{cor:temp_bound}
	Assume the anchor keys $\mathbf{k}_i$ and the query $\mathbf{z}$ are $\ell_2$-normalised (as done in HopDC). Then
	$$
	\bigl\|\hat{\boldsymbol{\delta}}(\mathbf{z}) - \delta(\mathbf{z})\bigr\| \le L \left( \min_i \|\mathbf{k}_i - \mathbf{z}\| + \sqrt{2\tau\log N} \right).
	$$
\end{corollary}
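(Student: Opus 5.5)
The plan is to reduce the statement to Proposition~\ref{prop:general_bound} and then bound the attention-weighted mean distance $\sum_i p_i\|\mathbf{k}_i-\mathbf{z}\|$ by $\min_i\|\mathbf{k}_i-\mathbf{z}\| + \sqrt{2\tau\log N}$. Proposition~\ref{prop:general_bound} already gives $\|\hat{\boldsymbol{\delta}}(\mathbf{z})-\delta(\mathbf{z})\|\le L\sum_i p_i\|\mathbf{k}_i-\mathbf{z}\|$, so what remains is a purely geometric inequality about softmax weights on the unit sphere.

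\textbf{Step 1: rewrite distances as similarities.} Write $s_i=\mathbf{k}_i^\top\mathbf{z}$ and $d_i=\|\mathbf{k}_i-\mathbf{z}\|$. Because $\|\mathbf{k}_i\|=\|\mathbf{z}\|=1$, we have $d_i^2 = 2-2s_i$. Hence the nearest anchor is the most similar one, and
\begin{equation*}
\min_i d_i^2 = 2-2\max_i s_i .
\end{equation*}

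\textbf{Step 2: softmax nearly attains the maximum.} This is the key and main step. I would use the Gibbs variational identity: the softmax distribution $p$ maximizes $\beta\langle q,s\rangle + H(q)$ over the simplex, with maximal value $\mathrm{lse}(\beta,s)$. Comparing against the point mass on $\arg\max_i s_i$ gives
\begin{equation*}
\beta\sum_i p_i s_i + H(p) = \mathrm{lse}(\beta,s) \ge \beta\max_i s_i .
\end{equation*}
Since $H(p)\le\log N$ and $\beta=1/\tau$, it follows that $\sum_i p_i s_i \ge \max_i s_i - \tau\log N$. Combining with Step 1 yields
\begin{equation*}
\sum_i p_i d_i^2 = 2-2\sum_i p_i s_i \le \min_i d_i^2 + 2\tau\log N .
\end{equation*}

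\textbf{Step 3: pass from squared to unsquared distances.} By Jensen (concavity of the square root), $\sum_i p_i d_i\le\sqrt{\sum_i p_i d_i^2}$. Applying the bound from Step 2 and then $\sqrt{a+b}\le\sqrt a+\sqrt b$ gives
\begin{equation*}
\sum_i p_i d_i \le \sqrt{\min_i d_i^2 + 2\tau\log N} \le \min_i d_i + \sqrt{2\tau\log N}.
\end{equation*}
Multiplying by $L$ and invoking Proposition~\ref{prop:general_bound} finishes the proof.

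\textbf{Sparse top-$k$ variant.} The same argument applies with the softmax restricted to the $k$ anchors of highest similarity, so the entropy bound becomes $\log k$. The minimum over this support coincides with the global $\min_i\|\mathbf{k}_i-\mathbf{z}\|$, because the nearest anchor is exactly the most similar one and hence always lies in the top-$k$ set.

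The only delicate point is the entropy/log-sum-exp inequality in Step 2. Everything else is elementary once the unit-norm identity $d_i^2=2-2s_i$ is used.
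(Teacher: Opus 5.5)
Your proof is correct and follows the paper's own route. Both arguments use the unit-norm identity $\mathbf{k}_i^\top\mathbf{z}=1-\frac{1}{2}\|\mathbf{k}_i-\mathbf{z}\|^2$ and the Gibbs variational principle to obtain $\sum_i p_i\|\mathbf{k}_i-\mathbf{z}\|^2\le\min_i\|\mathbf{k}_i-\mathbf{z}\|^2+2\tau\log N$, then Jensen and $\sqrt{a+b}\le\sqrt{a}+\sqrt{b}$ before invoking Proposition~\ref{prop:general_bound}. Your bookkeeping of the Gibbs step through the identity $\beta\sum_i p_i s_i+H(p)=\mathrm{lse}(\beta,s)$ and $H(p)\le\log N$ is actually the correct one: the paper's displayed intermediate inequality $\sum_i p_i a_i\le-\log\sum_i e^{-a_i}$ is reversed (the gap is exactly $H(p)\ge 0$), although its endpoint $\sum_i p_i a_i\le\min_i a_i+\log N$ is true.
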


\begin{proof}
	Because $\|\mathbf{k}_i\| = \|\mathbf{z}\| = 1$, we have $\mathbf{k}_i^\top \mathbf{z} = 1 - \frac{1}{2}\|\mathbf{k}_i - \mathbf{z}\|^2$. Hence
	$$
	p_i = \frac{\exp\!\bigl( \frac{1}{\tau} (1 - \frac{1}{2}\|\mathbf{k}_i - \mathbf{z}\|^2) \bigr)}{\sum_j \exp\!\bigl( \frac{1}{\tau} (1 - \frac{1}{2}\|\mathbf{k}_j - \mathbf{z}\|^2) \bigr)}
	= \frac{\exp\!\bigl( -\frac{\|\mathbf{k}_i - \mathbf{z}\|^2}{2\tau} \bigr)}{\sum_j \exp\!\bigl( -\frac{\|\mathbf{k}_j - \mathbf{z}\|^2}{2\tau} \bigr)}.
	$$
	Define $a_i = \frac{\|\mathbf{k}_i - \mathbf{z}\|^2}{2\tau}$. Then $ p_i = e^{-a_i} / \sum_j e^{-a_j} $. From the Gibbs variational principle \cite{boyd2004convex},
	$$
	\sum_i p_i a_i = \frac{\sum_i a_i e^{-a_i}}{\sum_j e^{-a_j}} \le -\log\!\Bigl( \sum_i e^{-a_i} \Bigr) \le \min_i a_i + \log N.
	$$
	Multiplying by $2\tau$ gives
	$$
	\sum_i p_i \|\mathbf{k}_i - \mathbf{z}\|^2 \le \min_i \|\mathbf{k}_i - \mathbf{z}\|^2 + 2\tau \log N.
	$$
	By Jensen’s inequality, we have $\bigl( \sum_i p_i \|\mathbf{k}_i - \mathbf{z}\| \bigr)^2 \le \sum_i p_i \|\mathbf{k}_i - \mathbf{z}\|^2$. Therefore,
	$$
	\sum_i p_i \|\mathbf{k}_i - \mathbf{z}\| \le \sqrt{ \min_i \|\mathbf{k}_i - \mathbf{z}\|^2 + 2\tau \log N }
	\le \min_i \|\mathbf{k}_i - \mathbf{z}\| + \sqrt{2\tau \log N}.
	$$
Substituting this into Proposition \ref{prop:general_bound} yields the claimed bound.
\end{proof}

\begin{remark}[Coverage and scaling]
	\label{rem:coverage}
	If the anchors form an $\epsilon$-cover of the query distribution, then $\min_i \|\mathbf{k}_i - \mathbf{z}\| \le \epsilon$. The covering number typically grows as $N \sim (1/\epsilon)^d$ where $d$ is the intrinsic dimension of the feature manifold, so $\epsilon = O(N^{-1/d})$. Hence the total error bound is $O(N^{-1/d} + \sqrt{\tau \log N})$. For large $N$, the first term dominates, showing that increasing the anchor set improves accuracy.
\end{remark}

\begin{remark}[Bias–variance trade‑off]
	\label{rem:bias_variance}
	The temperature $\tau$ controls the softness of the attention. A smaller $\tau$ reduces the $\sqrt{2\tau \log N}$ term but makes the distribution sharper, which may increase variance if individual anchor drifts are noisy. A larger $\tau$ smooths the estimate, reducing variance at the cost of a larger bias term. The optimal $\tau$ balances these effects; in our experiments we found $\tau = 0.05$ to work well across diverse settings.
\end{remark}

\begin{remark}[Effect of top‑$k$ sparsification]
	\label{rem:topk}
	The top‑$k$ operation restricts the sum to the $k$ anchors with largest $p_i$. In that case, the same proof applies with $N$ replaced by $k$, and the minimum distance is taken over the selected set (which is at most the global minimum plus a small margin). Consequently, the bound becomes $L \bigl( \min_{i \in \text{top-}k} \|\mathbf{k}_i - \mathbf{z}\| + \sqrt{2\tau \log k} \bigr)$, which is even tighter due to the reduced $\log k$ term.
\end{remark}

\subsection{Discussion}

The analysis above provides a rigorous foundation for HopDC.  By viewing the drift retrieval as one step of energy minimization in an MCHN, we connect the practical attention‑based formula to a well‑studied dynamical system with provable convergence properties.  The derived error bounds (Proposition \ref{prop:general_bound}, Corollaries \ref{cor:nn_limit} and \ref{cor:temp_bound}) highlight how the accuracy depends on the interplay between the local smoothness of the drift, the geometry and density of the anchor points, and the hyperparameters ($\tau$, $k$).

Importantly, Corollary \ref{cor:temp_bound} and the subsequent remarks provide explicit, actionable guarantees: the estimation error can be made arbitrarily small by using a sufficiently dense set of anchors ($N \to \infty$) and an appropriately chosen temperature $\tau$. The bound also justifies the use of a sparse top‑$k$ attention in practice: restricting the sum not only accelerates computation but also tightens the theoretical guarantee by reducing the $\log N$ term to $\log k$ (Remark \ref{rem:topk}).

In summary, HopDC leverages the associative memory dynamics of MCHNs to construct a non‑parametric, training‑free estimator of representation drift.  Its theoretical guarantees, together with its empirical effectiveness shown in the main text, make it a reliable component for mitigating distribution shift in class‑incremental learning.

\newpage
\section{Algorithms and Implementation Details}
\label{app:implementation}

This section outlines the implementation details. We first provide the pseudocodes for the proposed framework, which is decomposed into the overall CIL pipeline, the drift compensation mechanism (HopDC), and the analytic construction and inference phases of LR-RGDA. Subsequently, we detail the specific experimental protocols, including dataset preprocessing, ViT configurations, and the hyperparameter settings used for both Stage-1 backbone optimization (e.g., SeqFT, LoRA) and Stage-2 classifier reconstruction.

\subsection{Pseudocode of Proposed Algorithms}
\label{app:algorithms}

The proposed framework is summarized in four algorithms. Algorithm~\ref{alg:main_pipeline} outlines the complete CIL pipeline, which consists of backbone optimization, distribution compensation, and classifier construction. Algorithm~\ref{alg:hopdc} details the HopDC for recalibrating historical statistics. Algorithm~\ref{alg:lr_rgda_construction} describes the construction phase of LR-RGDA. Specifically, we focus on the low-rank factorization and the pre-computation of projection matrices. Finally, Algorithm~\ref{alg:lr_rgda_inference} presents the efficient inference mechanism of LR-RGDA that leverages these pre-computed parameters.

\begin{algorithm}[htbp]
	\caption{The Overall CIL Framework with LR-RGDA \& HopDC}
	\label{alg:main_pipeline}
	\begin{algorithmic}[1]
		\STATE {\bfseries Input:} Datasets $\{\mathcal{D}_t\}_{t=1}^T$, unlabeled anchor set $\mathcal{A}$, Pre-trained ViT $\mathcal{F}_{\boldsymbol{\theta}}$
		\STATE {\bfseries Output:} Classifier parameters $\mathcal{P}_{\text{all}}$ for all observed classes
		\STATE Initialize class-wise feature statistics $\mathcal{H}_0 \leftarrow \emptyset$, Anchor features $\mathbf{F}^{\text{old}} \leftarrow \mathcal{F}_{\boldsymbol{\theta}}(\mathcal{A})$
		
		\FOR{task $t=1$ {\bfseries to} $T$}
		\STATE \textcolor{gray}{\itshape // Stage 1: Backbone Optimization}
		\STATE Update $\boldsymbol{\theta}$ on $\mathcal{D}_t$ (e.g., via SeqFT, LoRA-SeqFT, etc)
		
		\STATE \textcolor{gray}{\itshape // Stage 2: Distribution Compensation}
		\IF{$t > 1$}
		\STATE $\mathcal{H}_{t-1} \leftarrow$ \textbf{HopDC}$(\mathcal{H}_{t-1}, \mathcal{A}, \mathbf{F}^{\text{old}}, \mathcal{F}_{\boldsymbol{\theta}})$ (see Algorithm~\ref{alg:hopdc})
		\ENDIF
		
		\STATE \textcolor{gray}{\itshape // Update Statistics for New Classes}
		\STATE Compute new statistics $\mathcal{S}_{\text{new}} \leftarrow \{ (\boldsymbol{\mu}_c, \boldsymbol{\Sigma}_c) \mid c \in \mathcal{Y}_t \}$ via updated $\mathcal{F}_{\boldsymbol{\theta}}$
		\STATE Update statistics $\mathcal{H}_t \leftarrow \mathcal{H}_{t-1} \cup \mathcal{S}_{\text{new}}$
		\STATE Update anchor history $\mathbf{F}^{\text{old}} \leftarrow \mathcal{F}_{\boldsymbol{\theta}}(\mathcal{A})$
		
		\STATE \textcolor{gray}{\itshape // Analytic Classifier Construction}
		\STATE $\mathcal{P}_{\text{all}} \leftarrow$ \textbf{LR-RGDA-Construction}$(\mathcal{H}_t)$ (see Algorithm~\ref{alg:lr_rgda_construction})
		\ENDFOR
		\STATE {\bfseries Return} $\mathcal{P}_{\text{all}}$
	\end{algorithmic}
\end{algorithm}

\begin{algorithm}[tb]
	\caption{Hopfield-based Distribution Compensator (HopDC)}
	\label{alg:hopdc}
	\begin{algorithmic}[1]
		\STATE {\bfseries Input:} Historical statistics set $\mathcal{H}_{t-1}$, Anchor set $\mathcal{A}$, Previous feature matrix $\mathbf{F}^{\text{old}}$, Current backbone $\mathcal{F}_{\boldsymbol{\theta}}$
		\STATE {\bfseries Hyperparameters:} Temperature $\tau$, Sparsity $k$, Sample count $M$
		\STATE {\bfseries Output:} Calibrated Statistics $\mathcal{H}_{t-1}^{\text{cal}}$
		
		\STATE Compute the new anchor feature matrix $\mathbf{F}^{\text{new}} \leftarrow \mathcal{F}_{\boldsymbol{\theta}}(\mathcal{\tilde{A}})$.
		\STATE Compute the drift matrix $\mathbf{D} \leftarrow \mathbf{F}^{\text{new}} - \mathbf{F}^{\text{old}}$
		\STATE Construct Keys: $\mathbf{K} \leftarrow l_2\text{-Normalize}(\mathbf{F}^{\text{old}})$
		
		\FOR{each old class $c \in \mathcal{C}_{1:t-1}$}
		\STATE Retrieve $\boldsymbol{\mu}_c, \boldsymbol{\Sigma}_c$ from $\mathcal{H}_{t-1}$
		\STATE Sample $M$ pseudo-features $\mathbf{Z}_c \sim \mathcal{N}(\boldsymbol{\mu}_c, \boldsymbol{\Sigma}_c)$
		\STATE Construct Queries: $\mathbf{Q}_c \leftarrow l_2\text{-Normalize}(\mathbf{Z}_c)$
		\STATE Estimate drift: $\boldsymbol{\Delta}_c^{\text{est}} \leftarrow \text{Softmax}_{\text{top-}k}(\frac{1}{\tau}\mathbf{Q}_c \mathbf{K}^\top) \mathbf{D}$
		\STATE Calibrate samples: $\mathbf{Z}_c^{\text{cal}} \leftarrow \mathbf{Z}_c + \boldsymbol{\Delta}_c^{\text{est}}$
		\STATE Re-estimate calibrated $\boldsymbol{\mu}_c, \boldsymbol{\Sigma}_c$ using $\mathbf{Z}_c^{\text{cal}}$
		\ENDFOR
		\STATE {\bfseries Return} Updated history $\mathcal{H}_{t-1}^{\text{cal}}$ using calibrated statistics
	\end{algorithmic}
\end{algorithm}

\begin{algorithm}[tb]
	\caption{LR-RGDA Construction (Pre-computation)}
	\label{alg:lr_rgda_construction}
	\begin{algorithmic}[1]
		\STATE {\bfseries Input:} Historical statistics $\mathcal{H}_t$
		\STATE {\bfseries Hyperparameters:} $\alpha_{1}, \alpha_{2}, \alpha_{3}$, Rank $r$
		\STATE {\bfseries Output:} Inference Parameters $\mathcal{P}_{\text{all}} = \{\mathbf{w}_c, b_c, \mathbf{P}_c, \mathbf{M}_c^{-1}\}_{c \in \mathcal{C}_t}$
		
		\STATE Compute the global average covariance $\boldsymbol{\Sigma}_{\text{avg}}$ from $\mathcal{H}_t$
		\STATE Compute the base inverse: $\mathbf{B}^{-1} = (\alpha_2 \boldsymbol{\Sigma}_{\text{avg}} + \alpha_3 \mathbf{I})^{-1}$
		
		\FOR{each class $c \in \mathcal{C}_{t}$}
		\STATE Decompose residual: $\mathbf{U}_c \mathbf{S}_c \mathbf{U}_c^\top \approx \alpha_1 \boldsymbol{\Sigma}_c$ via rank-$r$ SVD
		\STATE Compute the scaled eigenvector: $\widetilde{\mathbf{U}}_c \leftarrow \alpha_1^{1/2} \mathbf{U}_c \mathbf{S}_c^{1/2}$
		\STATE Compute the core inverse: $\mathbf{M}_c^{-1} \leftarrow (\mathbf{I}_r + \widetilde{\mathbf{U}}_c^\top \mathbf{B}^{-1} \widetilde{\mathbf{U}}_c)^{-1}$
		\STATE \textcolor{gray}{\itshape // Pre-compute projection matrix for fast inference }
		\STATE $\mathbf{P}_c \leftarrow \widetilde{\mathbf{U}}_c^\top \mathbf{B}^{-1}$
		\STATE Compute linear weights: $\mathbf{w}_c \leftarrow \mathbf{B}^{-1} \boldsymbol{\mu}_c$
		\STATE Compute the bias $b_c$ (using matrix determinant Lemma)
		\ENDFOR
		\STATE {\bfseries Return} $\{\mathbf{w}_c, b_c, \mathbf{P}_c, \mathbf{M}_c^{-1}\}_{c}$
	\end{algorithmic}
\end{algorithm}

\begin{algorithm}[tb]
	\caption{Efficient Inference Mechanism of LR-RGDA}
	\label{alg:lr_rgda_inference}
	\begin{algorithmic}[1]
		\STATE {\bfseries Input:} Test sample $\mathbf{x}$, Parameters $\mathcal{P}_{\text{all}}$
		\STATE Extract the feature $\mathbf{f} = \mathcal{F}_{\boldsymbol{\theta}}(\mathbf{x})$
		\STATE Initialize the score vector $\mathbf{s} \in \mathbb{R}^{|\mathcal{C}_t|}$
		
		\FOR{each class $c \in \mathcal{C}_t$}
		\STATE \textcolor{gray}{\itshape // 1. Global Linear Score ($\mathcal{O}(d)$)}
		\STATE $\mathcal{L}_c \leftarrow \mathbf{w}_c^\top \mathbf{f} + b_c$
		
		\STATE \textcolor{gray}{\itshape // 2. Low-Rank Quadratic Correction ($\mathcal{O}(rd)$)}
		\STATE \textcolor{gray}{\itshape // Use pre-computed $\mathbf{P}_c$ to avoid $\mathcal{O}(d^2)$ cost}
		\STATE Project the residual: $\mathbf{u}_c \leftarrow \mathbf{P}_c (\mathbf{f} - \boldsymbol{\mu}_c)$
		\STATE Compute the correction: $\mathcal{Q}_c \leftarrow \frac{1}{2} \mathbf{u}_c^\top \mathbf{M}_c^{-1} \mathbf{u}_c$
		
		\STATE \textcolor{gray}{\itshape // 3. Final Discriminant Score}
		\STATE $\mathbf{s}[c] \leftarrow \mathcal{L}_c + \mathcal{Q}_c$
		\ENDFOR
		\STATE {\bfseries Return:} Predictive class $\hat{y} = \arg\max_c \mathbf{s}[c]$
	\end{algorithmic}
\end{algorithm}

\subsection{Datasets and Preprocessing Protocols}
\label{app:datasets}
Our experiments are conducted on standardized CIL benchmarks using a unified preprocessing pipeline. Input images are consistently resized to $224 \times 224$ and normalized using the canonical ImageNet mean and standard deviation.

\textbf{Within-domain CIL benchmarks.} We employ four datasets with distinct granularity: CIFAR-100, CUB-200, Cars-196, and ImageNet-R. For CIFAR-100, the dataset is evenly partitioned into 10 disjoint tasks (10 classes per task). The remaining three datasets adopt a setup starting with 20 base classes followed by 20-class increments ($T=10$), with the exception of Cars-196, where the final task comprises the residual 16 classes.

\textbf{Cross-domain CIL benchmarks.} To stress-test the robustness of LR-RGDA against domain shifts, we construct a large-scale CIL benchmark spanning seven diverse datasets: CIFAR-100, ImageNet-R, Cars-196, CUB-200, Caltech-101, Oxford-Flower-102, and Food-101. In this setting, each dataset is bisected into two incremental splits. To maintain class balance across varying dataset sizes, we enforce a maximum cap of 128 training samples per class.

\subsection{Pre-trained Backbone Architectures}
\label{app:backbones}
We leverage the \texttt{timm} library to instantiate pre-trained models. Specifically, we consistently employ the ViT-B/16 (Base) architecture across all settings to eliminate architectural variations. Our evaluation encompasses three distinct representation strategies: for supervised pre-training, we utilize the standard \texttt{vit\_base\_patch16\_224} (pre-trained on ImageNet-21K and fine-tuned on ImageNet-1K); for self-supervised learning, we evaluate DINO \texttt{(vit\_base\_patch16\_224.dino)} and MoCoV3. For MoCoV3, we manually load the official weights into the standard architecture to strictly align with protocols established in SLCA and CoMA; and finally, for vision-language pre-training, we adopt the CLIP-based \texttt{vit\_base\_patch16\_clip\_224.openai} model to leverage text-aligned image representations for superior transferability. All training and inference processes are executed on NVIDIA RTX 4090 GPUs.

\subsection{Backbone optimization Strategies}
We utilize distinct optimization strategies for full fine-tuning methods and PEFT-based adaptation, which are detailed as follows.

\paragraph{SeqFT and its variants.} For methods involving the full backbone updates (e.g., SeqFT, SeqKD, NSP-SeqFT), we utilize SGD with a momentum of $0.9$. Based on the cross-validation on a held-out validation set from ViT/B-Sup21K and ImageNet-R, we select a backbone learning rate of $5 \times 10^{-6}$ from $\{1\times 10^{-6}, 5\times 10^{-6}, 1\times 10^{-5}\}$ and apply this learning rate across all datasets. The classifier learning rate is set to $5 \times 10^{-5}$, which is $10$ times larger than the backbone's learning rate. We use a cosine annealing scheduler with a 10\% warmup phase. The learning rate is decayed to one-third of its initial value by the end of each task. We restrict the fine-tuning to the two MLP modules within all Transformer layers. The training duration is 1,000 steps for method without distillation and NSP, and 1,500 steps for distillation, NSP or LoRA-based variants. We apply SCE loss with $\lambda_1=0.5, \lambda_2=0.5$.

\paragraph{LoRA Adaptation.} For PEFT approaches (LoRA-SeqFT, LoRA-SeqKD), we employ the AdamW optimizer with an adapter learning rate of $1 \times 10^{-4}$ and a classifier learning rate of $1 \times 10^{-3}$. The LoRA adaptation is formulated as $W' = W + BA$, where $A \in \mathbb{R}^{r_{\rm lora} \times d_{\text{in}}}$ is initialized via Kaiming uniform distribution where $a=\sqrt{5}$, and $B \in \mathbb{R}^{d_{\text{out}} \times r_{\rm lora}}$ is initialized to zero values. We set $r_{\rm lora}=4$. The LoRA adapters are applied to the QKV linear projections and MLP modules across all Transformer blocks.

\paragraph{Knowledge distillation.} For distillation baselines (SeqKD, NSP-SeqKD), we implement feature-level distillation on the \texttt{cls\_token} of the final transformer block. The regularization coefficient is $0.5$ for NSP-SeqKD and $1.0$ for other variants. 

\paragraph{Null space projection.} For NSP, gradients are projected onto the null space of the feature covariance matrix. We adopt a ``Hard Projection" strategy with soft relaxation ($\epsilon=0.05$) and identity interpolation ($\lambda=0.02$).

\subsection{Engineering construction of Classifiers}
\label{app:classifier_recon}

This subsection details the engineering realization of the classifier reconstruction phase. Given the updated historical statistics $\mathcal{H}_t = \{(\boldsymbol{\mu}_c, \boldsymbol{\Sigma}_c)\}_{c=1}^C$.

\paragraph{SGD-based classifier.}
Unlike analytic methods, SGD classifiers require iterative optimization. To ensure scalability as the number of classes $C$ grows, we implement a dynamic training schedule: the total steps are defined as $T = T_{\text{base}} + N_{\text{steps/cls}} \times C$, with $T_{\text{base}}=5000$ and $N_{\text{steps/cls}}=4$. Before optimization, we sample 256 pseudo-features per class from $\mathcal{N}(\boldsymbol{\mu}_c, \boldsymbol{\Sigma}_c)$. The classifier is optimized via the AdamW optimizer (initial lr $10^{-3}$, weight decay $10^{-4}$) coupled with a cosine annealing scheduler that decays the learning rate to $10^{-4}$. To prevent the overfitting, we incorporate an early stopping mechanism with a patience of 100 steps.

\paragraph{LDA Classifier.}
For LDA, we compute the global average covariance $\boldsymbol{\Sigma}_{\text{avg}}$ in a \textbf{streaming fashion} to minimize peak memory usage, avoiding the need to stack all $C$ covariance matrices simultaneously. To ensure the numerical stability, we apply spherical regularization as $\boldsymbol{\Sigma}_{\text{reg}} = (1-\gamma)\boldsymbol{\Sigma}_{\text{avg}} + \gamma\mathbf{I}$, where $\gamma=0.1$ is selected via the cross-validation from values $\{0.0, 0.1, 0.2, 0.3 \}$. The weights $\mathbf{W}$ and biases $\mathbf{b}$ are then derived analytically via closed-form solutions.

\paragraph{RGDA classifier.}
The RGDA classifier requires inverting class-specific covariance matrices. We compute the precision matrix $\boldsymbol{\Lambda}_c = (\boldsymbol{\Sigma}_c^{\text{reg}})^{-1}$ efficiently using \texttt{torch.cholesky\_inverse}, which offers the superior speed and numerical stability compared to standard inversion. To handle potential singularities in degenerate covariance matrices, we implement an automatic fallback mechanism that switches to SVD-based inversion if Cholesky decomposition fails. What's more, the inference is fully vectorized using \texttt{torch.einsum} to parallelize the Mahalanobis distance computation.

\paragraph{LR-RGDA classifier.}
To realize the theoretical complexity benefits of LR-RGDA on GPU hardware, we implement a batched initialization pipeline (typically with a batch size of 12 classes) that pre-computes the inference parameters without overloading GPU memory.
\begin{enumerate}
	\item \textbf{Randomized SVD:} Instead of full SVD, we employ \texttt{torch.svd\_lowrank} to decompose the class-specific covariance matrices to obtain their low-rank approximations. Compared to the standard SVD, it reduces the factorization cost from $\mathcal{O}(Cd^3)$ to $\mathcal{O}(Cd^2r)$ since the smallest singular values should be ignored. 
	\item \textbf{Woodbury Inversion:} We solve the global base inverse $\mathbf{B}^{-1}$ and the core inverse $\mathbf{M}_c^{-1}$ using the Woodbury matrix identity. Crucially, we pre-compute and store the exact affine weights $\mathbf{w}_c \in \mathbb{R}^d$ and the projection matrix $\mathbf{P}_c = \widetilde{\mathbf{U}}_c^\top \mathbf{B}^{-1} \in \mathbb{R}^{r \times d}$ during the construction phase.
	\item \textbf{Projection Pre-computation:} By freezing these pre-computed tensors, the inference phase avoids all heavy matrix inversions. The projection of a test sample $\mathbf{x}$ into the subspace becomes a simple matrix-vector multiplication of size $(r \times d) \times (d \times 1)$.
\end{enumerate}

\newpage
\section{Extended Experimental Analysis}
\label{sec:extended_experiments}

This section provides a more comprehensive empirical analysis to validate the robustness and generalization capabilities of LR-RGDA and HopDC. We specifically evaluate the impact of different backbone adaptation strategies, the sensitivity to the low-rank dimension $r$, the hyperparameter landscape, and the resilience of the HopDC mechanism against varying conditions.

\subsection{Empirical Validation of the Low-Rank Assumption}
\label{app:low_rank_validation}

The core design of LR-RGDA hinges on the assumption that class-specific covariance matrices in pre-trained ViT feature spaces exhibit a low-rank structure. To empirically validate this assumption, we analyze the spectral properties of class-specific covariance matrices computed from a frozen ViT/B-CLIP backbone across the seven diverse datasets comprising our cross-domain CIL benchmark (totaling 1001 classes).

\begin{figure}[htbp]
	\centering
	\includegraphics[width=0.65\linewidth]{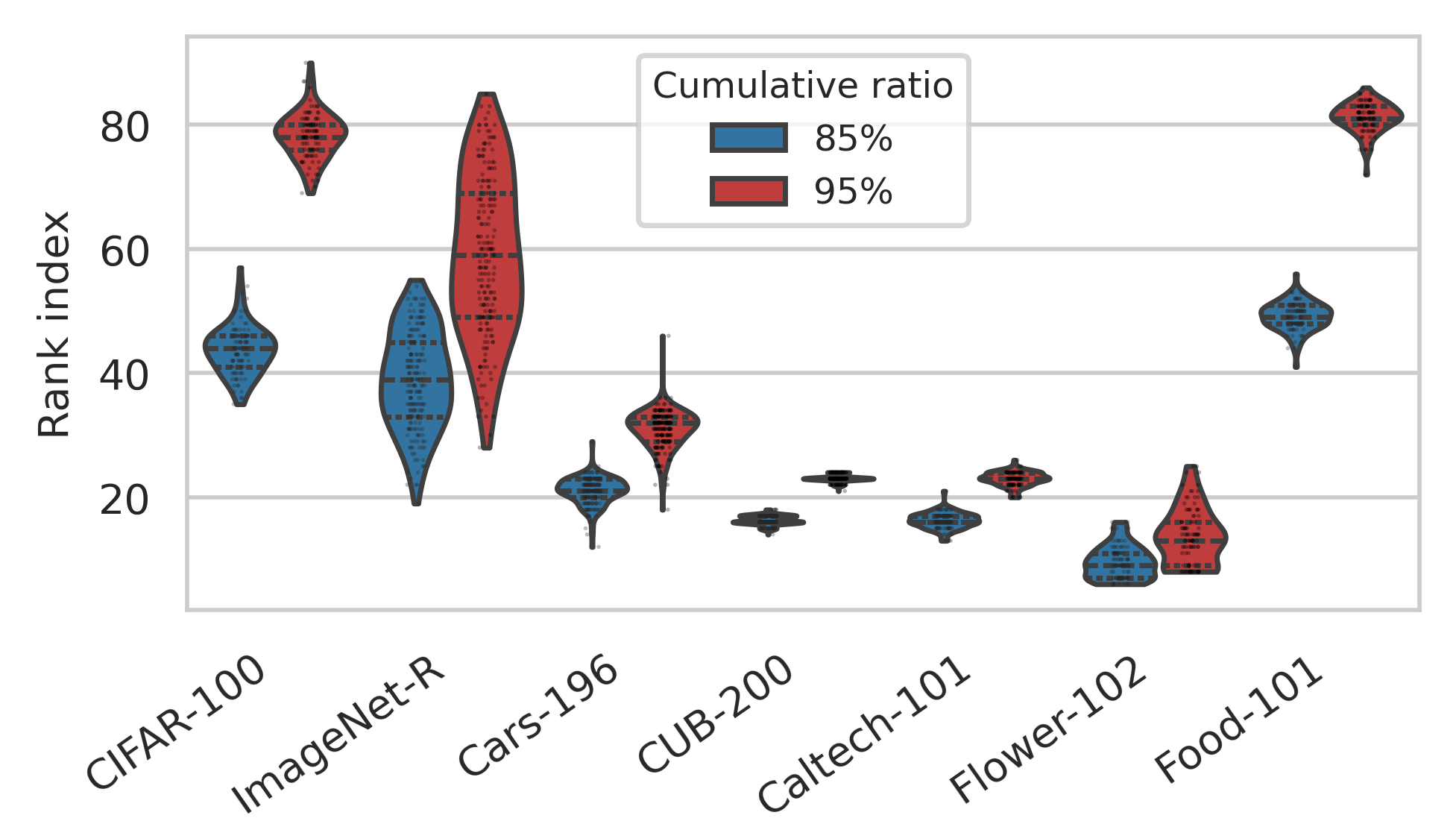}
	\caption{Statistical distributions of the minimum number of largest singular values needed to capture 85\% and 95\% of the total variance for class-specific covariance matrices on seven datasets (1001 classes). The ViT/B-CLIP is used as the feature extractor. Crucially, the top-60 singular values are consistently sufficient to capture the dominant variance ($>85\%$) across diverse datasets, validating the low-rank assumption of LR-RGDA.}
	\label{fig:vit-b-p16-clip95percentrankboxplot}
\end{figure}

As shown in Figure~\ref{fig:vit-b-p16-clip95percentrankboxplot}, we plot the distribution (via box plots) of the minimum rank required to explain 85\% and 95\% of the total variance (i.e., the cumulative explained variance ratio) for each class-specific covariance matrix. The key observation is that the median rank required to capture 85\% of the variance lies consistently below 60 across all datasets, despite the feature dimension $d=768$. Even for the more stringent 95\% threshold, the median rank rarely exceeds 120. This confirms that the discriminative information is concentrated within a low-dimensional subspace of the original feature space. This empirical evidence strongly supports the low-rank factorization strategy employed by LR-RGDA, justifying the use of a small subspace rank $r \ll d$ (e.g., $r=64$) to achieve an efficient yet expressive approximation of the full covariance structure.

\subsection{Results on ViT/B-MoCoV3}
\label{app:results_mocov3}

\begin{table*}[t]
	\centering
	\caption{CIL performance evaluations (\%) on four within-domain datasets using ViT/B-MoCoV3 (averaged over three random seeds).}
	\setlength{\tabcolsep}{5pt}
	\renewcommand{\arraystretch}{0.8}
	\small
	\scalebox{1.0}{
		\begin{tabular}{@{}lccccccccccc@{}}
			\toprule
			\multirow{2}{*}{Method}
			& \multicolumn{2}{c}{CUB-200}
			& \multicolumn{2}{c}{Cars-196}
			& \multicolumn{2}{c}{CIFAR-100}
			& \multicolumn{2}{c}{ImageNet-R}
			& \multicolumn{2}{c}{Four datasets} \\
			\cmidrule(lr){2-3} \cmidrule(lr){4-5} \cmidrule(lr){6-7} \cmidrule(lr){8-9} \cmidrule(lr){10-11}
			& Last & Inc & Last & Inc & Last & Inc & Last & Inc & Last & Inc \\
			\midrule
			\textbf{Empirical upper bounds} \\
			\quad Joint-Training & 82.36 & --- & 82.13 & --- & 88.52 & --- & 75.78 & --- & 82.20 & --- \\
			\midrule
			\textbf{Existing baselines} \\
			\quad BiC & 74.39 & 82.13 & 65.57 & 73.95 & 80.57 & 89.39 & 57.36 & 68.07 & 69.47 & 78.39 \\
			\quad RanPAC & 74.43 & 83.63 & 63.21 & 74.01 & 86.47 & 90.81 & 69.11 & 75.20 & 73.31 & 80.91 \\
			\quad SLCA & 73.01 & 82.13 & 66.04 & 72.59 & 85.27 & 89.51 & 68.07 & 73.04 & 73.10 & 79.32 \\
			\quad SLCA++ & 75.48 & 82.94 & 69.71 & 75.67 & 84.77 & 89.53 & 69.01 & 74.75 & 74.74 & 80.72 \\
			\quad CoMA & 75.12 & 82.76 & 67.48 & 74.90 & 86.59 & 91.02 & 69.33 & 75.64 & 74.63 & 81.08 \\
			\midrule
			\textbf{Proposed methods} \\
			\quad RanProj + \textit{LR-RGDA} & 70.33 & 82.94 & 63.15 & 69.10 & \textbf{87.89} & 91.27 & 66.27 & 70.77 & 71.91 & 78.52 \\
			\cmidrule(lr){2-11}
			
			\quad SeqFT + \textit{LR-RGDA} & 59.91 & 76.15 & 46.05 & 56.08 & 78.36 & 85.84 & 66.10 & 73.74 & 62.61 & 72.95 \\
			\quad \quad + \textit{HopDC} & 79.88 & 86.85 & 78.87 & 83.45 & 82.90 & 89.91 & 72.33 & 78.03 & 78.50 \textcolor{red}{\tiny +15.89} & 84.56 \textcolor{red}{\tiny +11.61} \\
			\cmidrule(lr){2-11}
			
			\quad SeqKD + \textit{LR-RGDA} & 69.92 & 83.34 & 61.76 & 74.41 & 84.39 & 90.19 & 71.15 & 77.83 & 71.81 & 81.44 \\
			\quad \quad + \textit{HopDC} & 81.50 & 87.48 & 82.07 & 85.34 & 86.55 & 91.05 & \textbf{74.77} & 79.36 & 81.22 \textcolor{red}{\tiny +9.41} & 85.81 \textcolor{red}{\tiny +4.37} \\
			\cmidrule(lr){2-11}
			
			\quad NSP-SeqFT + \textit{LR-RGDA} & 76.42 & 85.22 & 66.94 & 78.20 & 85.65 & 90.16 & 70.60 & 76.52 & 74.90 & 82.53 \\
			\quad \quad + \textit{HopDC} & 79.84 & 86.43 & 79.92 & 83.34 & 86.25 & 90.64 & 71.45 & 76.90 & 79.37 \textcolor{red}{\tiny +4.47} & 84.33 \textcolor{red}{\tiny +1.80} \\
			\cmidrule(lr){2-11}
			
			\quad NSP-SeqKD + \textit{LR-RGDA} & 77.10 & 85.62 & 68.91 & 79.59 & 86.28 & 90.70 & 71.17 & 77.20 & 75.87 & 83.28 \\
			\quad \quad + \textit{HopDC} & 80.48 & 86.81 & 80.24 & 83.86 & 86.85 & 91.05 & 72.02 & 77.61 & 79.90 \textcolor{red}{\tiny +4.03} & 84.83 \textcolor{red}{\tiny +1.55} \\
			\cmidrule(lr){2-11}
			
			\quad LoRA-SeqFT + \textit{LR-RGDA} & 65.99 & 78.11 & 50.74 & 62.36 & 75.71 & 84.09 & 59.65 & 68.34 & 63.02 & 73.23 \\
			\quad \quad + \textit{HopDC} & 81.56 & 87.72 & 80.98 & 86.98 & 79.75 & 88.71 & 69.96 & 76.55 & 78.06 \textcolor{red}{\tiny +15.04} & 84.99 \textcolor{red}{\tiny +11.76} \\
			\cmidrule(lr){2-11}
			
			\quad LoRA-SeqKD + \textit{LR-RGDA} & 73.95 & 84.98 & 70.14 & 82.32 & 84.97 & 91.08 & 70.28 & 77.34 & 74.84 & 83.93 \\
			\quad \quad + \textit{HopDC} & \textbf{81.77} & \textbf{87.90} & \textbf{83.07} & \textbf{87.88} & 86.52 & \textbf{91.62} & 73.93 & \textbf{79.88} & \textbf{81.32} \textcolor{red}{\tiny +6.48} & \textbf{86.82} \textcolor{red}{\tiny +2.89} \\
			\bottomrule
		\end{tabular}
	}
	\label{tab:results_mocov3_new}
	\normalsize
\end{table*}

Table~\ref{tab:results_mocov3_new} presents the comprehensive CIL performance on the ViT/B-MoCoV3 backbone, which is pre-trained via self-supervised contrastive learning. Compared to the supervised ViT/B-Sup21K results in the main text, MoCoV3 presents a more challenging scenario due to its less task-aligned feature space. The analysis reveals several key insights that further validate the robustness and generalizability of our proposed framework.

\textbf{1. Superiority over state-of-the-art Baselines.}
Our framework consistently outperforms all existing baselines across all four datasets. The strongest baseline, CoMA, achieves an average Last-Acc of 74.63\%. In contrast, our best-performing variant (LoRA-SeqKD + LR-RGDA + HopDC) reaches 81.32\%. 

\textbf{2. Critical role of HopDC in mitigating severe Drift.}
The results highlight an even more dramatic effect of HopDC on MoCoV3 compared to Sup21K. For highly plastic strategies like SeqFT and LoRA-SeqFT, the performance without HopDC is severely degraded (62.61\% and 63.02\% average Last-Acc, respectively). This indicates that self-supervised features undergo more substantial and nonlinear representation drift when the backbone is aggressively updated. The application of HopDC recovers these strategies to competitive levels (78.50\% and 78.06\%), with improvements exceeding +15 pp. 

\textbf{3. Synergy between LR-RGDA and stable adaptation strategies.}
While HopDC is essential for plastic strategies, the combination of LR-RGDA with stable adaptation methods like SeqKD and NSP-SeqKD already yields strong baseline performance (71.81\% and 75.87\%). This suggests that LR-RGDA effectively leverages the well-preserved feature structure maintained by these strategies. Furthermore, adding HopDC provides a consistent boost, pushing the final accuracy to 81.22\% and 79.90\%, respectively. 

\textbf{4. Effectiveness of parameter-efficient fine-Tuning (LoRA).}
The LoRA-based variants show compelling results. LoRA-SeqKD + LR-RGDA + HopDC achieves the overall best performance (81.32\% Last-Acc, 86.82\% Inc-Acc). Notably, LoRA-SeqFT, which is highly parameter-efficient, benefits tremendously from HopDC (+15.04 pp), achieving performance nearly on par with much more computationally expensive full fine-tuning methods. 

\textbf{5. Dataset-specific observations.}
On fine-grained datasets like Cars-196, the gains are particularly pronounced. For instance, SeqFT + LR-RGDA improves from 46.05\% to 78.87\% with HopDC. This indicates that fine-grained distinctions are especially vulnerable to representation drift, and our associative memory-based compensation is highly effective at preserving these nuanced class boundaries. On CIFAR-100, LR-RGDA alone (e.g., with RanProj) achieves the best single score (87.89\%), highlighting the inherent strength of the analytic classifier on this dataset even with a frozen backbone.

\subsection{Performance Comparison of Different Classifiers With or Without HopDC on Within-Domain CIL Datasets}
\label{sec:extended_experiments_within}
\begin{figure*}[htbp]
	\centering
	\begin{subfigure}[b]{0.49\linewidth}
		\centering
		\includegraphics[width=\linewidth]{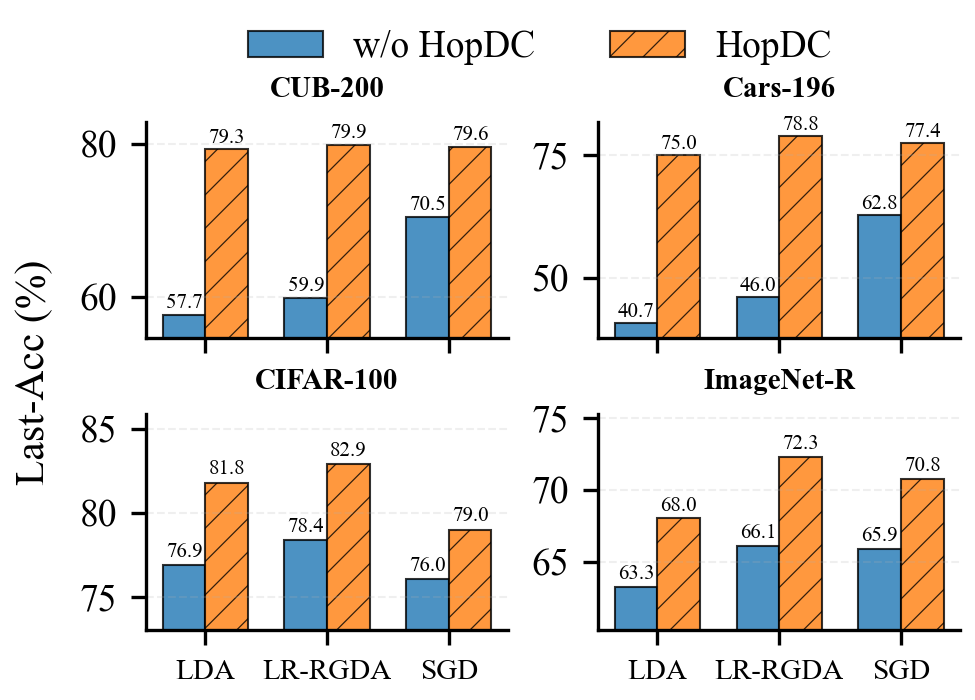}
		\caption{SeqFT}
		\label{fig:seqft}
	\end{subfigure}
	\hfill
	\begin{subfigure}[b]{0.49\linewidth}
		\centering
		\includegraphics[width=\linewidth]{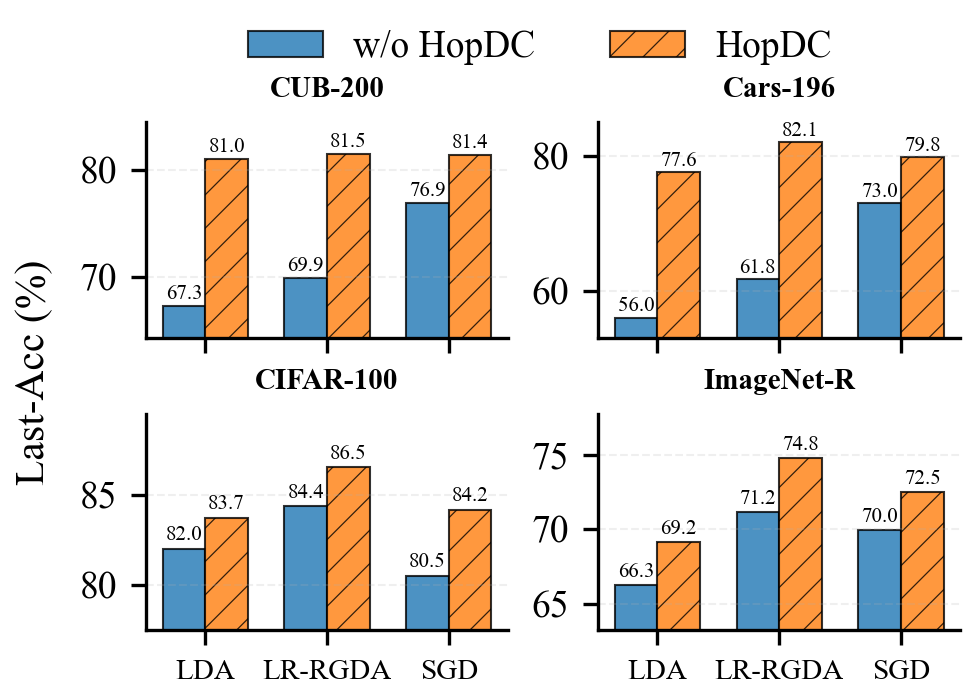}
		\caption{SeqKD}
		\label{fig:seqkd}
	\end{subfigure}
	\vskip\baselineskip
	\begin{subfigure}[b]{0.49\linewidth}
		\centering
		\includegraphics[width=\linewidth]{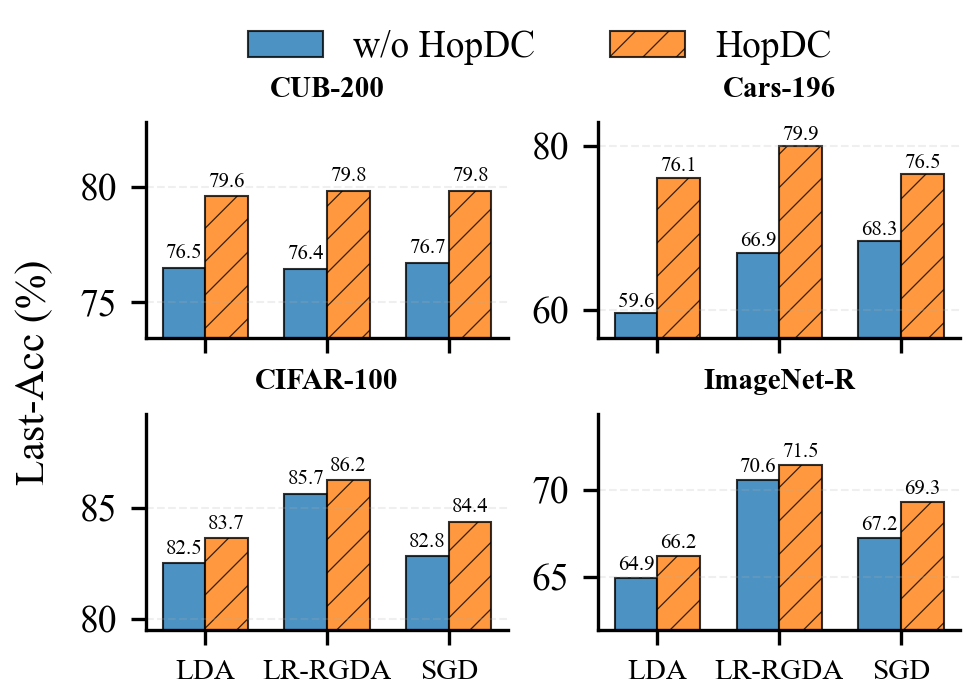}
		\caption{NSP-SeqFT}
		\label{fig:nsp-seqft}
	\end{subfigure}
	\hfill
	\begin{subfigure}[b]{0.49\linewidth}
		\centering
		\includegraphics[width=\linewidth]{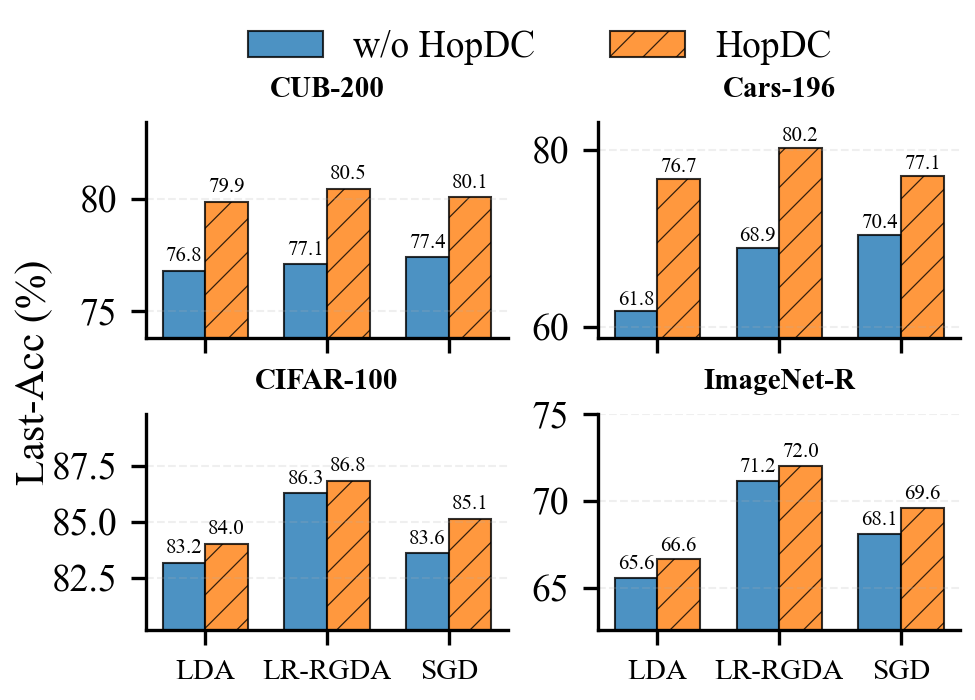}
		\caption{NSP-SeqKD}
		\label{fig:nsp-seqkd}
	\end{subfigure}
	\caption{Comparative analysis of Last-Acc (\%) on four within-domain CIL datasets using the ViT/B-MoCoV3 backbone. We evaluate LDA, LR-RGDA of rank 64, and SGD-trained classifiers under four backbone adaptation strategies: (a) SeqFT, (b) SeqKD, (c) NSP-SeqFT, and (d) NSP-SeqKD. \textbf{Key Observations are described as follows.} (1) LR-RGDA consistently outperforms both the linear baselines (LDA and SGD-based classifiers) across all settings. (2) HopDC serves as a vital drift rectification mechanism, especially for highly plastic strategies (e.g., SeqFT). Notably, in the SeqFT setting on Cars-196 (a), HopDC improves the accuracy of LDA from $\sim 40.7\%$ to $\sim 75.0\%$ and LR-RGDA from $\sim 46.0\%$ to $\sim 78.8\%$, which indicates that performance degradation stems primarily from feature distribution misalignment.}
	\label{fig:seqftcomparison}
\end{figure*}

Figure~\ref{fig:seqftcomparison} presents a detailed comparison of classifiers across four backbone update strategies. LR-RGDA consistently demonstrates superior performance compared to both the linear baseline (LDA) and the iteratively optimized SGD classifier. Notably, in the SeqFT setting on Cars-196 (Figure~\ref{fig:seqft}), LR-RGDA achieves substantial gains over LDA even in the absence of drift compensation. This confirms that the low-rank quadratic perturbations in LR-RGDA effectively capture fine-grained, class-specific manifold structures that are typically inaccessible to linear decision boundaries.

The advantages of HopDC are prominent in highly plastic strategies. In SeqFT (Figure~\ref{fig:seqft}), which is prone to severe catastrophic forgetting due to significant representation drift, HopDC serves as a critical rectification mechanism. On the Cars-196 dataset, applying HopDC boosts the accuracy of LDA from $\sim 40.7\%$ to $\sim 75.0\%$, and LR-RGDA from $\sim 46.0\%$ to $\sim 78.8\%$. These improvements suggest that the observed ``forgetting'' in SeqFT is largely attributable to distribution misalignment rather than irreversible information loss. Furthermore, even with stability-oriented strategies like SeqKD (Figure~\ref{fig:seqkd}) and NSP variants, where baselines already perform well, HopDC consistently yields further performance enhancement. 

\textit{Crucially, these consistent gains across the full spectrum of classifiers, that ranges from LDA and SGD to the proposed LR-RGDA, substantiate HopDC as a versatile and classifier-agnostic solution for mitigating distribution drift.}

\subsection{Impact of Low-Rank Dimension $r$ of LR-RGDA}
\label{sec:ablation_rank}

To provide a comprehensive understanding of the low-rank assumption inherent in LR-RGDA, we conduct extensive ablation studies on the subspace dimension $r$ of LR-RGDA across both cross-domain and within-domain CIL scenarios. 

\begin{figure}[htbp]
	\centering
	\includegraphics[width=1.0\linewidth]{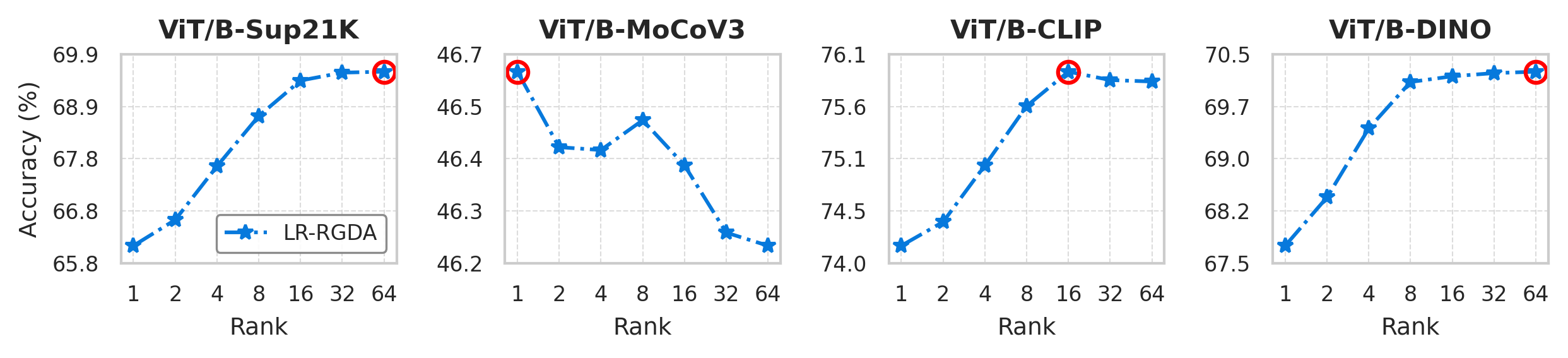}
	\caption{Impact of low-rank dimension $r$ of LR-RGDA on the cross-domain benchmarks across different ViT backbones. We utilize frozen pre-trained ViTs without further fine-tuning. The red circles denote the peak performance points. For most backbones (Sup21K, CLIP, DINO), accuracy improves significantly as $r$ increases to 16, after which it stabilizes or slightly degrades. It indicates that the primary discriminative information is concentrated in the top principal components.}
	\label{fig:rankablationcomparisonfour}
\end{figure}

\begin{figure}[htbp]
	\centering
	\includegraphics[width=1.0\linewidth]{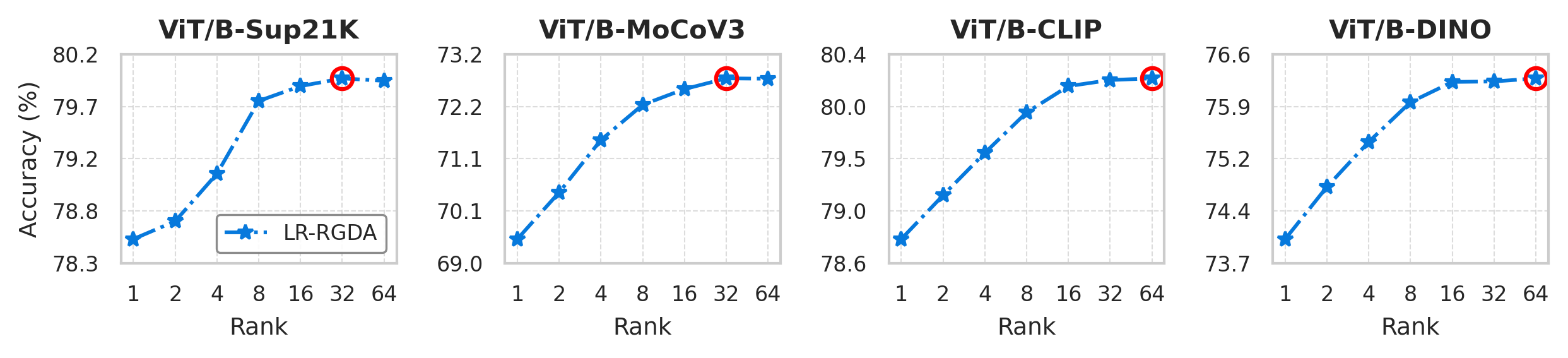}
	\caption{Impact of low-rank dimension $r$ of LR-RGDA on the cross-domain benchmarks across different ViT backbones. \textbf{We utilize ViT backbones that have been fine-tuned for 500 iterations.} The red circles denote the peak performance points. Compared to the frozen baseline, the optimal rank tends to shift towards higher values, and the performance remains robust at higher ranks (e.g., $r=32$ or $64$). This indicates that short-term fine-tuning rectifies the feature manifold and enriches the discriminative information contained in the trailing principal components.}
	\label{fig:rankablationcomparisonfour_iter500}
\end{figure}

\paragraph{Analysis of cross-domain benchmarks.} 
We investigate the sensitivity of LR-RGDA to the subspace dimension $r$ of LR-RGDA across diverse pre-trained ViTs. By contrasting the performance of frozen backbones (Figure~\ref{fig:rankablationcomparisonfour}), which rely on static pre-trained features, with their counterparts that are subjected to short-term optimization for 500 iterations (Figure~\ref{fig:rankablationcomparisonfour_iter500}), we observe how the backbone actively reshapes its representation space. This comparison reveals two distinct phases of spectral evolution:
\begin{enumerate}
	\item As illustrated in Figure~\ref{fig:rankablationcomparisonfour}, distinct trends emerge depending on the pre-training paradigms. For strong transfer learners like ViT/B-Sup21K, ViT/B-CLIP, and ViT/B-DINO, the classification accuracy exhibits a steep ascent as $r$ increases from 1 to 16. It validates our core hypothesis regarding the \textit{spectral concentration} of semantic information: the principal directions of the class-specific covariance matrices capture the robust, discriminative features, while the trailing eigenvectors are often dominated by task-irrelevant noises. Notably, the performance tends to saturate or even decline slightly beyond $r=32$. This ``peaking phenomenon'' suggests that a full-rank estimation (as in standard RGDA) incurs a higher risk of overfitting to the training samples, whereas a truncated low-rank approximation ($r \approx 16$) acts as an effective regularizer.
	
	\item When the backbones are fine-tuned for 500 iterations only, as shown in Figure~\ref{fig:rankablationcomparisonfour_iter500}, we observe a significant shift in spectral properties where the optimal rank generally moves towards higher values across all architectures. This trend indicates a dimensionality expansion of the effective feature space, which is most dramatic in \textbf{ViT/B-MoCoV3}. Although the frozen baseline of this architecture suffers from a severe performance collapse at high ranks that indicates noisy trailing eigenvectors, its fine-tuned counterpart exhibits a consistent monotonic improvement that peaks at $r=32$ or $64$. Such a transformation confirms that short-term optimization effectively ``rectifies'' the feature manifold by suppressing noise in the lower-variance directions while encoding task-relevant semantics into a higher-dimensional subspace.
\end{enumerate}

\paragraph{Analysis of within-domain benchmarks.} 
We systematically investigate the impact of backbone optimization on the spectral properties of class covariances by comparing the frozen baseline on within-domain datasets (Figure~\ref{fig:rankablationcomparison_frozen}) with the model fine-tuned for 500 iterations (Figure~\ref{fig:rankablationcomparison_finetuned}). Our empirical findings reveal four critical phases of representation evolution:
\begin{itemize}
	\item \textbf{Low-rank concentration in frozen representations:} 
	In the absence of task-specific adaptation, performance consistently peaks at a low rank (typically $r \in \{2, 4, 8\}$) across most ViT architectures. It implies that the effective fine-grained knowledge in pre-trained ViTs is intrinsically low-dimensional. Notably, on fine-grained datasets (e.g., CUB-200, Cars-196), the accuracy curves exhibit a sharp ``inverted-V'' shape, which indicates that increasing $r$ beyond the optimal point introduces high-frequency noise from the pre-training manifold, leading to the significant degradation, where such sensitivity is most pronounced in ViT/B-MoCoV3.
	
	\item \textbf{Dimensionality expansion via fine-tuning:} 
	Short-term optimization (500 steps) drives the optimal rank towards higher values (typically shifting to $r \in \{8, 16, 32\}$). This rightward shift demonstrates that fine-tuning effectively extracts task-specific fine-grained information from downstream datasets, thereby expanding the intrinsic dimensionality of the class manifolds. Consequently, LR-RGDA necessitates a higher-rank subspace to accurately reconstruct these enriched feature correlations.
	
	\item \textbf{Feature space rectification and stability:} 
	Beyond the rank shift, fine-tuning fundamentally alters the ``peak-to-tail'' dynamics. Unlike the sharp decay observed in frozen models, fine-tuned models exhibit a stable performance plateau even at full rank ($r=64$). This suggests that backbone optimization acts as a \textit{feature space rectification} process: it aligns the trailing principal components with task-relevant semantics while suppressing irrelevant noise.
	
	\item \textbf{Heterogeneous architecture sensitivity:} 
	The impact of fine-tuning varies by pre-training paradigm. For instance, ViT/B-CLIP shows marginal gains on coarse-grained tasks (e.g., CIFAR-100) where its zero-shot priors are already robust, yet achieves substantial improvement on fine-grained tasks (e.g., Cars-196). It indicates that while vision-language pre-training provides excellent separability for generic concepts, backbone adaptation is essential for capturing visual nuances that textual descriptions may miss.
\end{itemize}

\begin{figure}[htbp]
	\centering
	\includegraphics[width=1.0\linewidth]{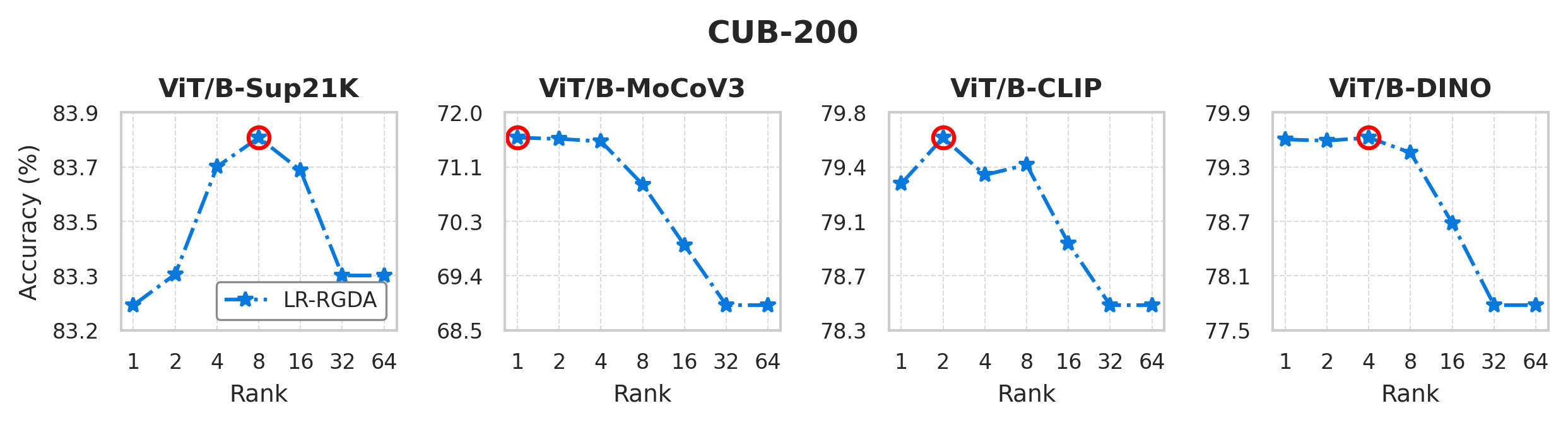}
	\vspace{-0.5em}
	\includegraphics[width=1.0\linewidth]{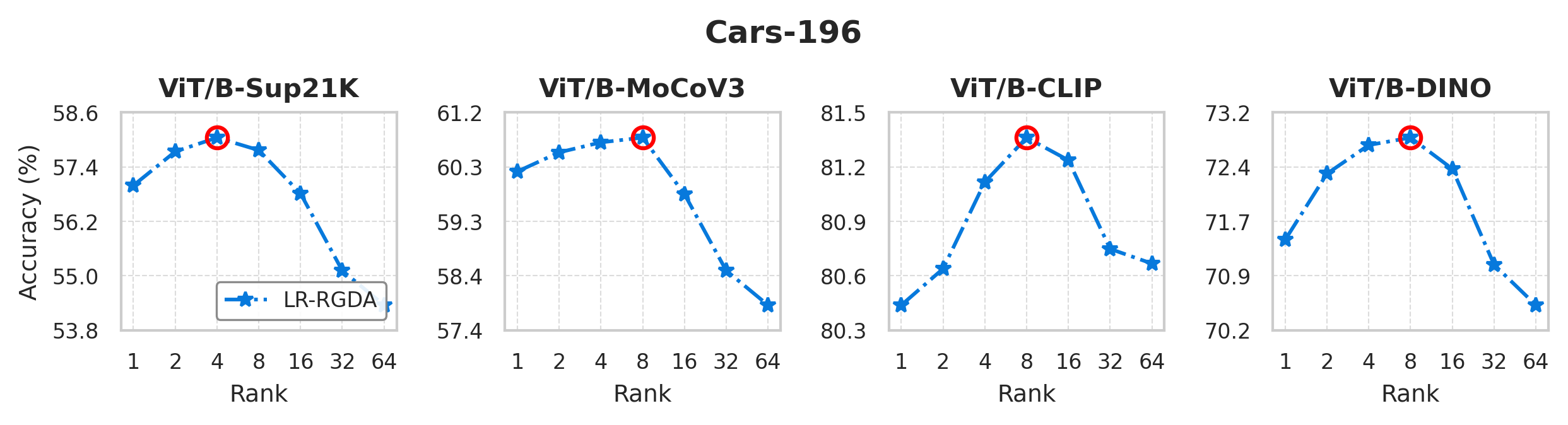}
	\vspace{-0.5em}
	\includegraphics[width=1.0\linewidth]{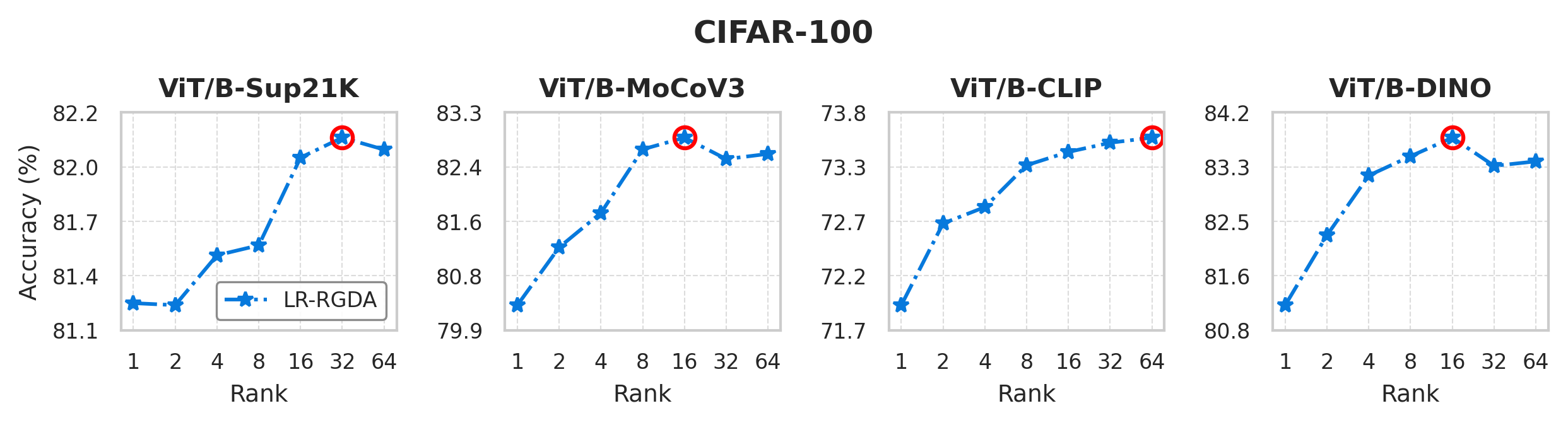}
	\vspace{-0.5em}
	\includegraphics[width=1.0\linewidth]{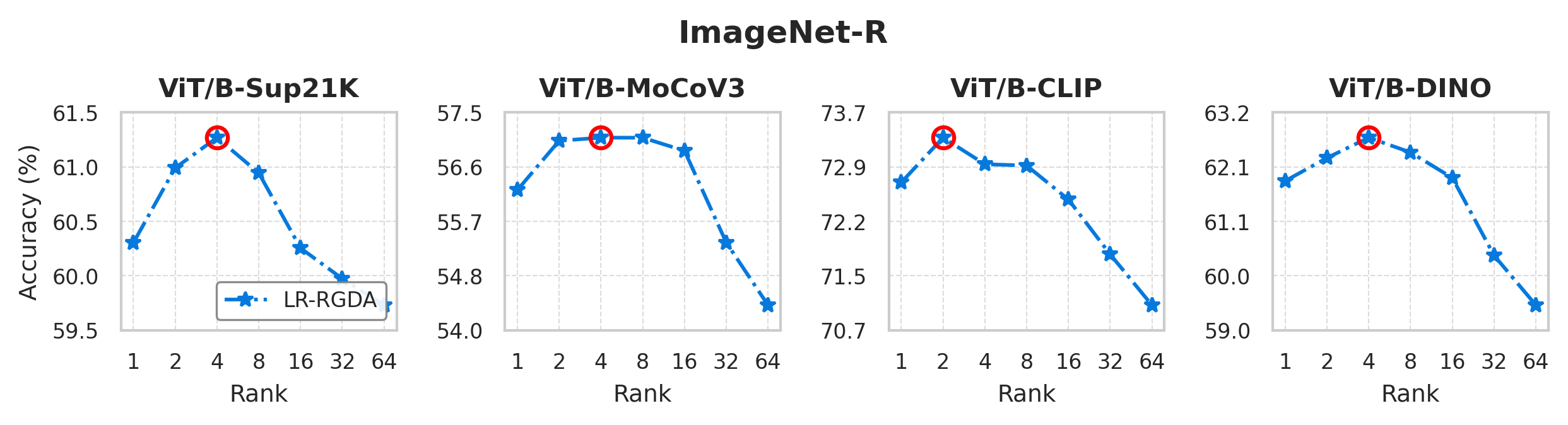}
	\caption{Rank sensitivity analysis on within-domain benchmarks using frozen pre-trained ViT backbones. Performance consistently peaks at lower ranks (e.g., $r=4$ or $8$) and degrades at higher ranks. It implies that discriminative information in frozen features is concentrated in a small subspace. It is worth noting that ViT/B-MoCoV3 has distinct behavior s: it suffers from severe performance collapse at high ranks, and the sharp ``inverted-V'' trends on fine-grained datasets (CUB-200, Cars-196), which highlights their reliance on specific covariance structures.}
	\label{fig:rankablationcomparison_frozen}
	\end{figure}

\begin{figure}[htbp]
	\centering
	\includegraphics[width=1.0\linewidth]{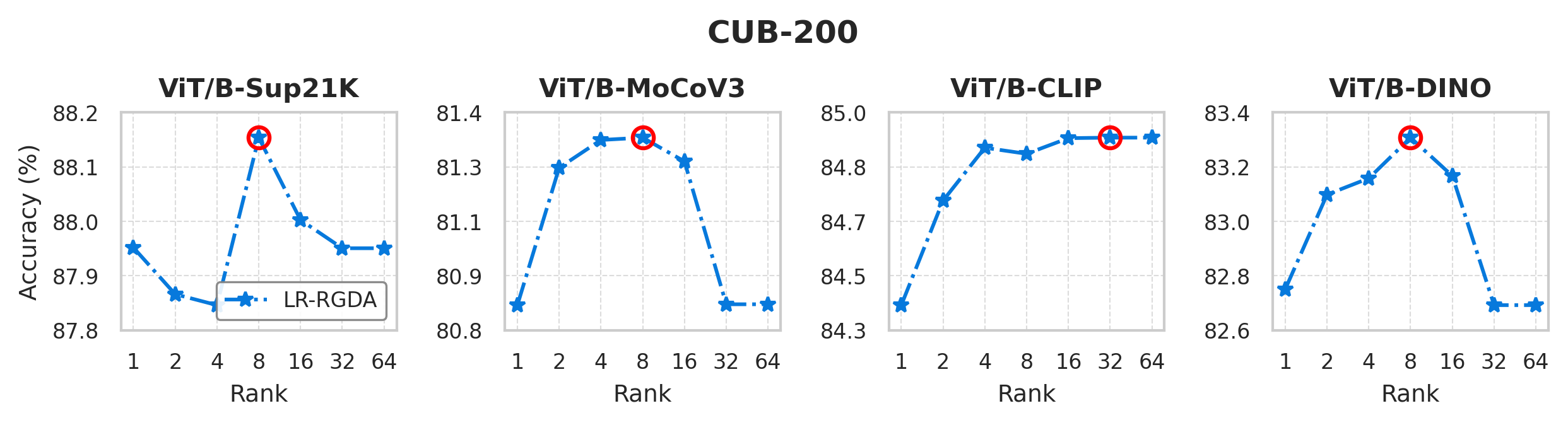}
	\vspace{-0.5em}
	\includegraphics[width=1.0\linewidth]{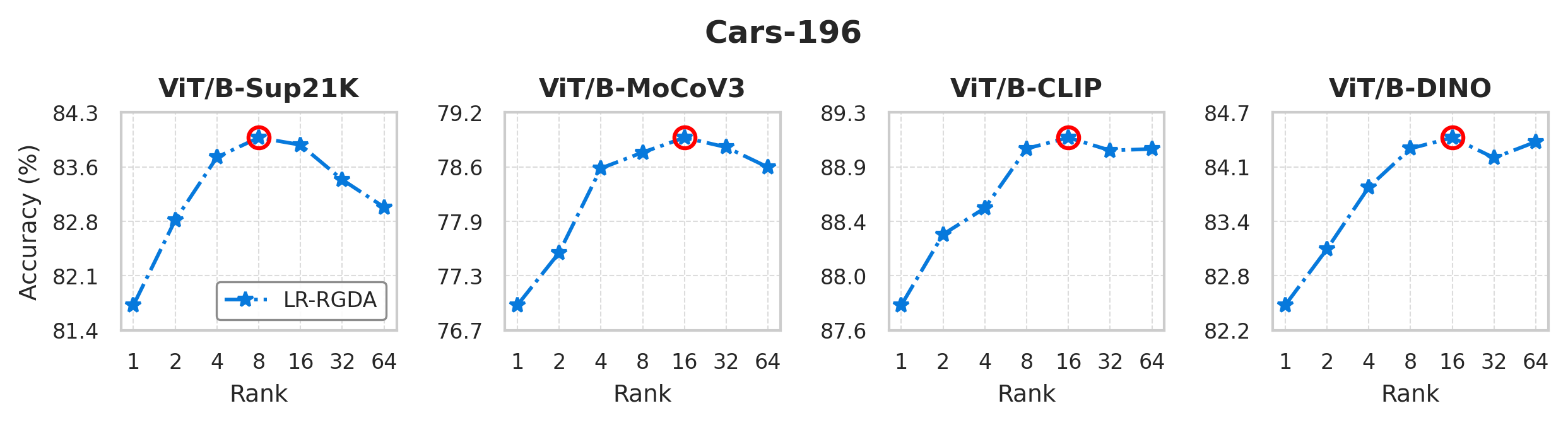}
	\vspace{-0.5em}
	\includegraphics[width=1.0\linewidth]{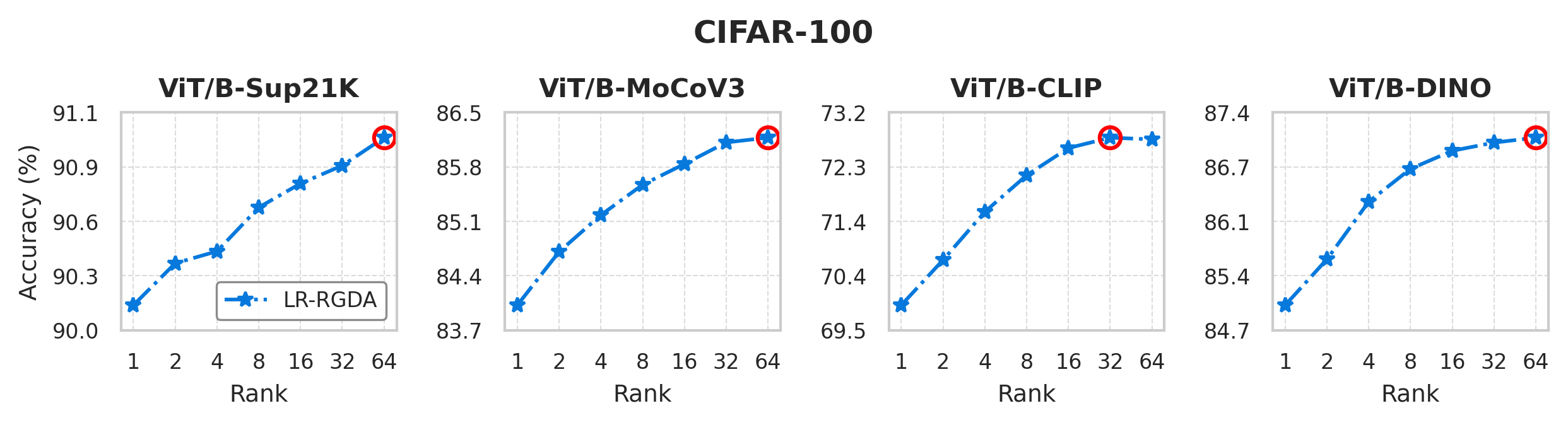}
	\vspace{-0.5em}
	\includegraphics[width=1.0\linewidth]{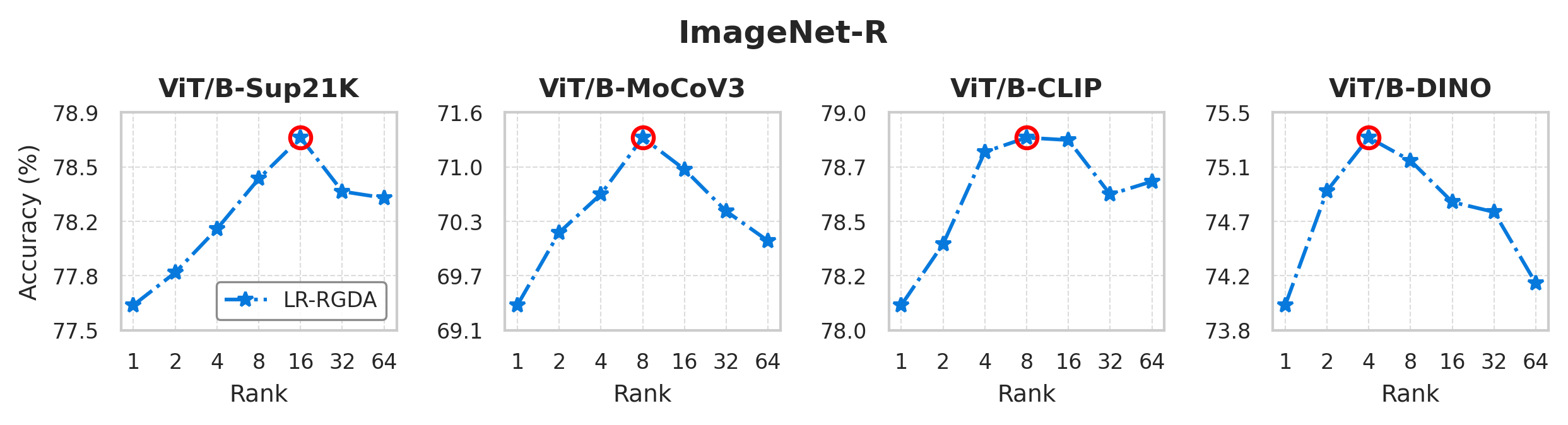}
	\caption{Rank sensitivity analysis on within-domain benchmarks after short-term backbone fine-tuning (500 iterations). 
	Compared to the frozen baseline, the optimal rank shifts to higher values (e.g., $r \in \{16, 32\}$). It indicates that the fine-tuning can enrich the feature space with task-specific details. Crucially, the performance at high ranks ($r=64$) stabilizes into a plateau rather than decaying, which suggests that backbone optimization rectifies the feature manifold and suppresses task-irrelevant noise in the trailing principal components.}
	\label{fig:rankablationcomparison_finetuned}
	\end{figure}
	
\subsection{Hyperparameter Sensitivity and Performance Contours}
\label{sec:contours}
We perform a comprehensive sensitivity analysis of the regularization hyperparameters $\alpha_1$ (class-specific weight) and $\alpha_2$ (global shared weight) to understand the operating mechanism of LR-RGDA. We fix $\alpha_3=0.5$ across all experiments.
\begin{figure}[t]
	\centering
	\includegraphics[width=0.70\linewidth]{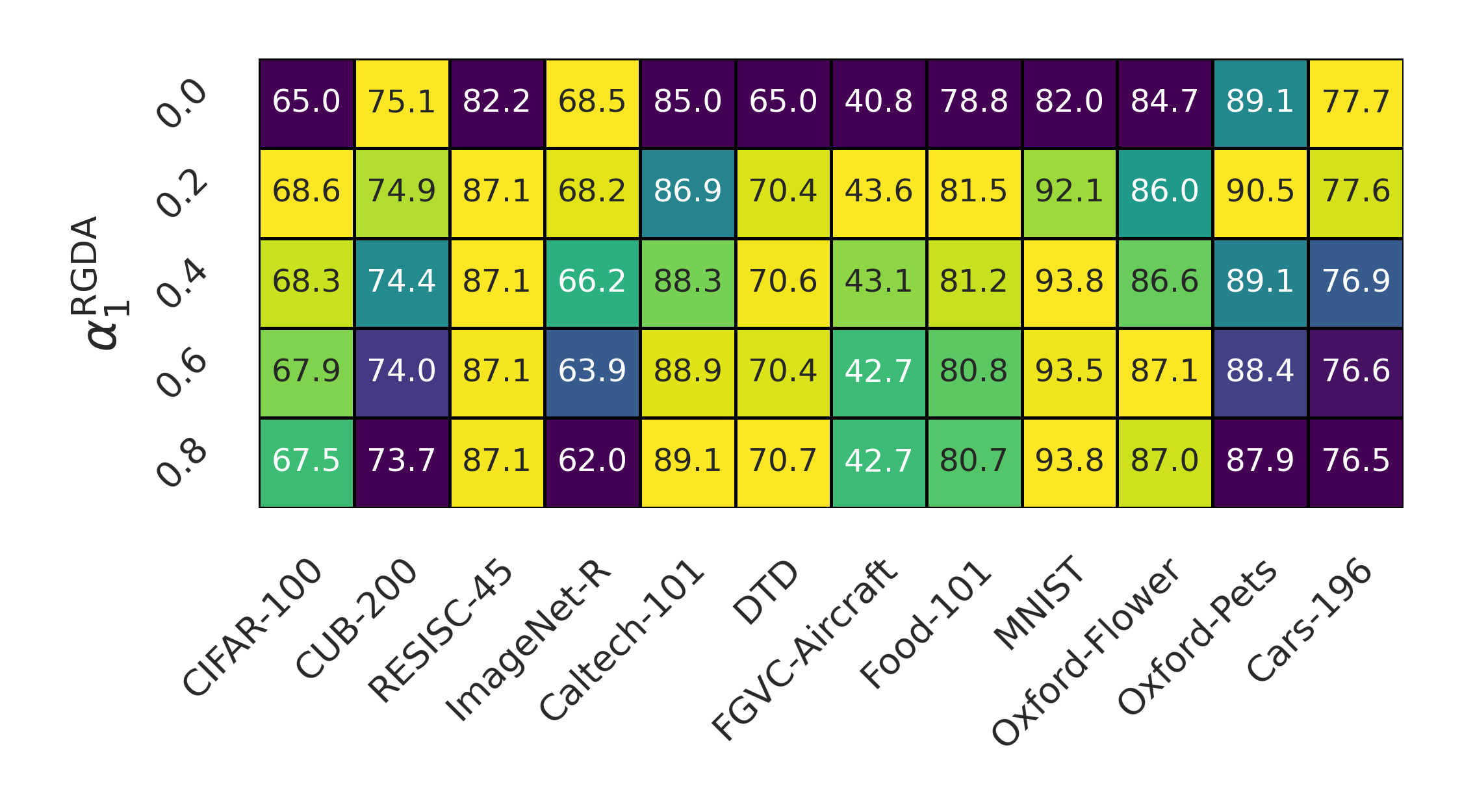}
	\includegraphics[width=0.70\linewidth]{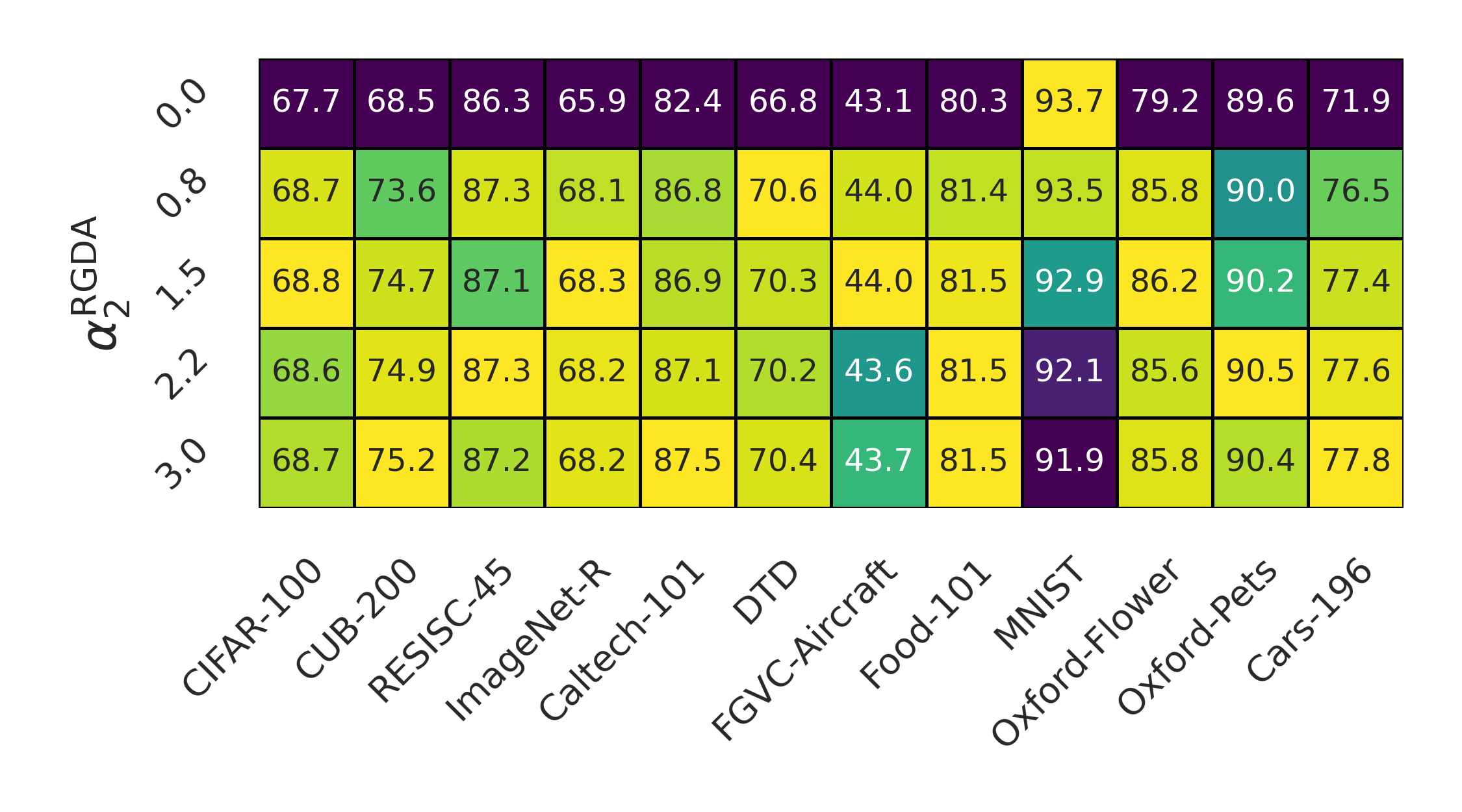}
	\caption{Per-dataset hyperparameter sensitivity analysis. We evaluate LR-RGDA on an expanded 12-dataset cross-domain CIL benchmark using a frozen ViT/B-CLIP. Note that the notations $\alpha_{\rm 1}^{\rm RGDA}$ and $\alpha_{\rm 2}^{\rm RGDA}$ in the figure correspond to $\alpha_1$ and $\alpha_2$ in the main text, respectively. The results highlight that $\alpha_2$ provides a universal stability term for all datasets, while $\alpha_1$ acts as a fine-tuning knob dependent on dataset characteristics.}
	\label{fig:exp3datasetsensitivityvit-b-p16-clipalpha1}
\end{figure}

\paragraph{Per-Dataset sensitivity on expanded benchmarks.}
To rigorously evaluate the robustness across diverse data distributions, we extend the cross-domain CIL evaluation from the main text's 7-dataset benchmark to a larger collection of 12-datasets by adding RESISC-45 \cite{cheng2017remote}, DTD \cite{cimpoi14describing}, FGVC-Aircraft \cite{maji13fine-grained}, MNIST \cite{lecun2010mnist}, and Oxford-Pets \cite{parkhi12a}. Figure~\ref{fig:exp3datasetsensitivityvit-b-p16-clipalpha1} visualizes the performance heatmaps using a frozen ViT/B-CLIP encoder.

\begin{itemize}
	\item Nuance of class-specific tuning ($\alpha_1$): The top heatmap shows that while small $\alpha_1$ values ($0.1 - 0.2$) are generally optimal, and the dataset heterogeneity exists. For examples, highly structured datasets like MNIST and Cars-196 tolerate slightly higher $\alpha_1$ values compared to texture-heavy datasets like DTD. It suggests that datasets with distinct geometric boundaries benefit more from the class-specific covariance information, whereas vague categories require stronger global regularization to prevent overfitting.
	
	\item Criticality of global regularization ($\alpha_2$): The bottom heatmap reveals a consistent pattern: setting $\alpha_2 \approx 0.0$ causes a catastrophic performance collapse across all datasets (indicated by deep purple cells). This confirms that the global average covariance acts as a foundational term that can stabilize the ill-conditioned class-specific statistics derived from few-shot datasets.
\end{itemize}

\begin{figure}[htbp]
	\centering
	\begin{subfigure}{1.0\linewidth}
		\centering
		\includegraphics[width=\linewidth]{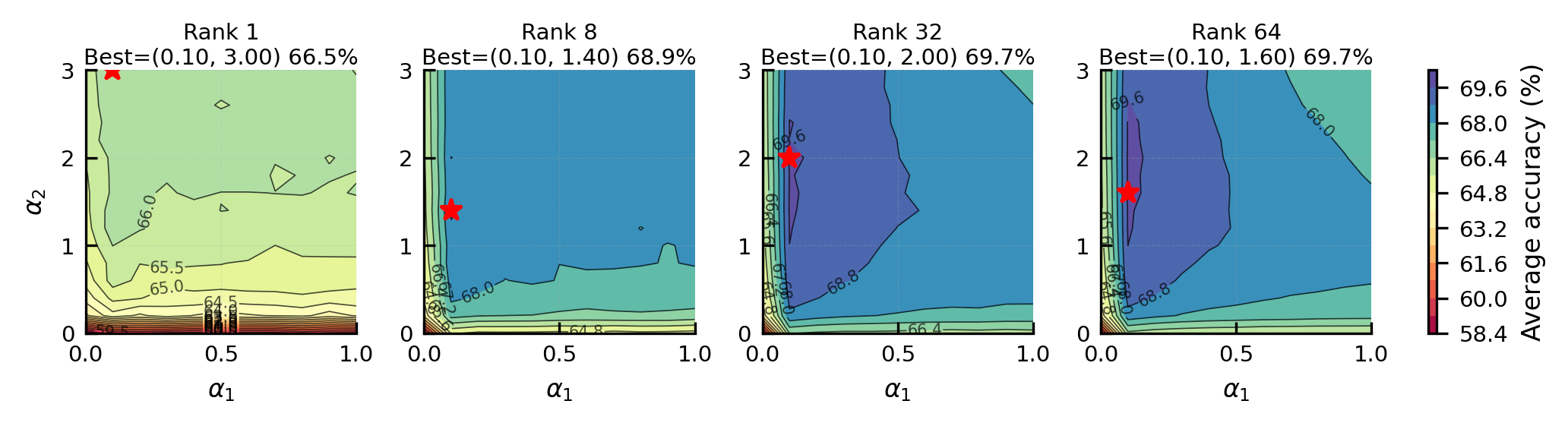}
		\caption{ViT/B-Sup21K}
	\end{subfigure}
	\begin{subfigure}{1.0\linewidth}
		\centering
		\includegraphics[width=\linewidth]{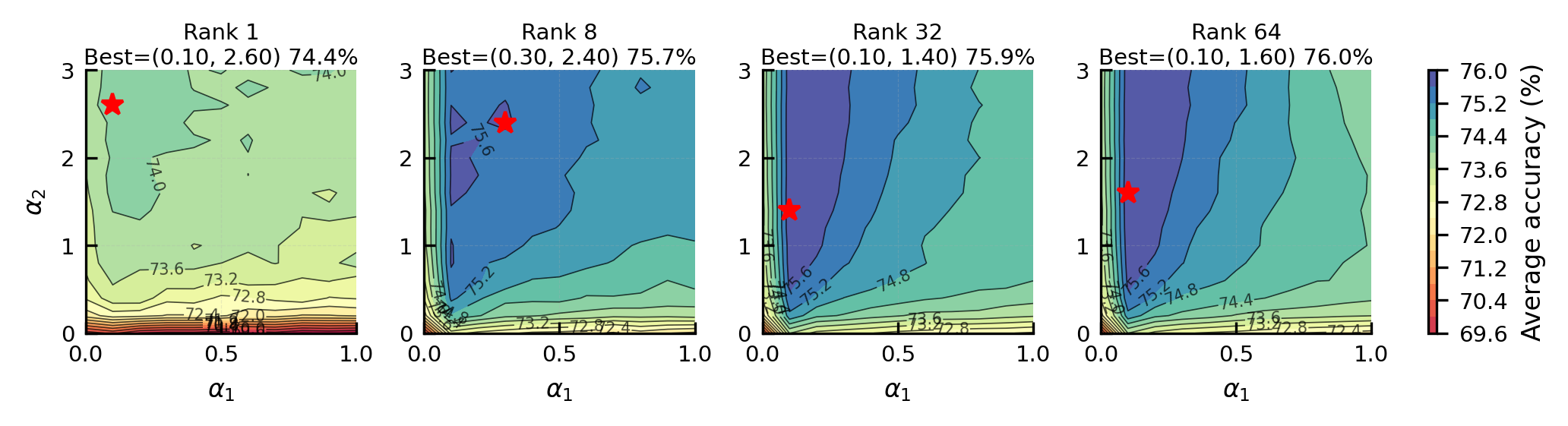}
		\caption{ViT/B-CLIP}
	\end{subfigure}
	\begin{subfigure}{1.0\linewidth}
		\centering
		\includegraphics[width=\linewidth]{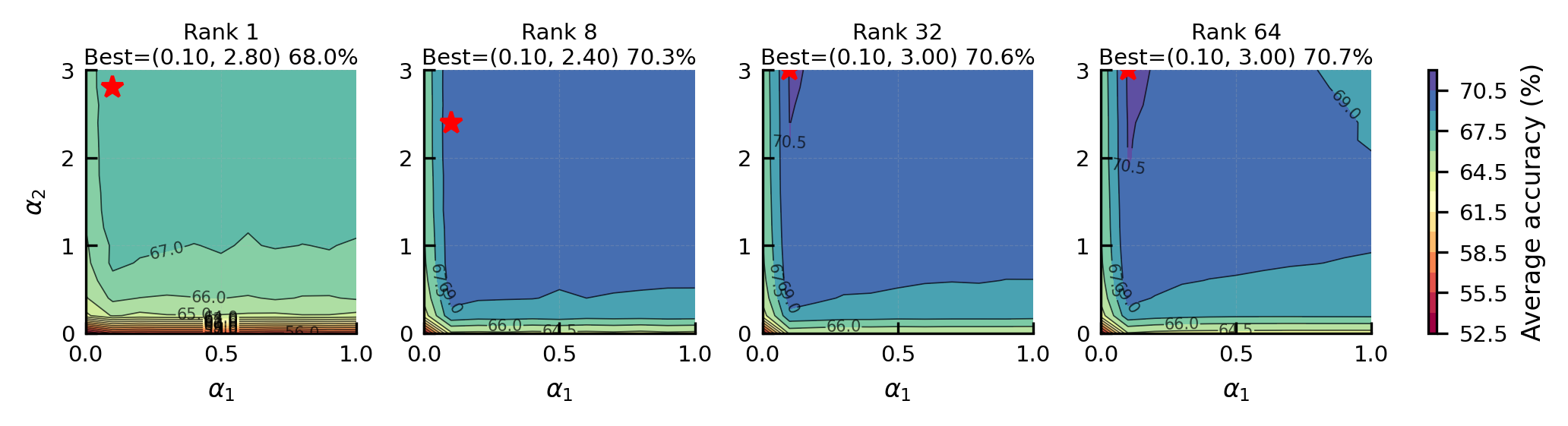}
		\caption{ViT/B-DINO}
	\end{subfigure}
	\begin{subfigure}{1.0\linewidth}
		\centering
		\includegraphics[width=\linewidth]{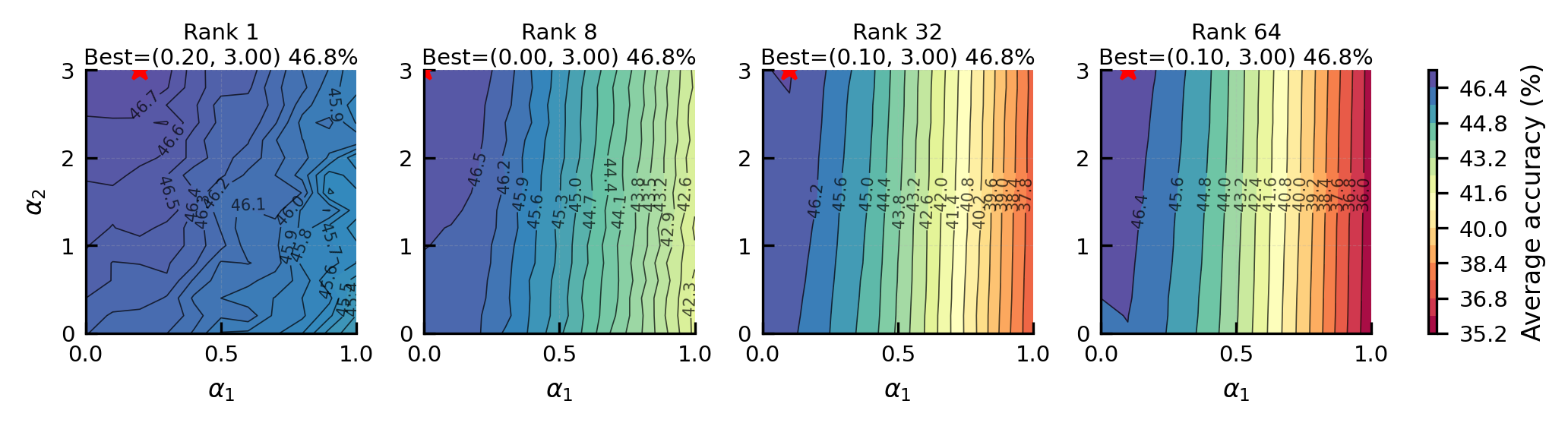}
		\caption{ViT/B-MoCoV3}
	\end{subfigure}
    \caption{Performance contours of LR-RGDA across different pre-trained backbones. The x-axis represents $\alpha_1$ (class-specific weight) and the y-axis represents $\alpha_2$ (the global weight). The consistent convergence of optimal regions to the upper-left quadrant demonstrates the structural universality of LR-RGDA across diverse representation spaces of different ViTs.}
	\label{fig:multirankcontourvit-b-p16iter0}
\end{figure}

\paragraph{Performance Landscapes Across Architectures.}
Figure~\ref{fig:multirankcontourvit-b-p16iter0} presents the performance contour plots on the 7-dataset cross-domain CIL benchmark for four distinct ViT architectures (Sup21K, MoCoV3, CLIP, DINO) with different subspace ranks. We sample $\alpha_1 \in [0, 1.0]$ and $\alpha_2 \in [0, 3.0]$ with 11 steps. The visualization yields three key discoveries regarding the geometry of the performance landscape:

\begin{itemize}
	\item The ``upper-left'' universality: Regardless of the pre-training paradigm (supervised, contrastive, or multi-modal), the peak performance regions (dark blue) consistently converge to the upper-left quadrant (low $\alpha_1 \approx 0.1$, high $\alpha_2 \in [1.5, 3.0]$). It empirically validates our theoretical factorization: the global base must dominate to ensure invertibility and generalization, while the class-specific term should act as a low-rank \textit{perturbation}.

	\item The stability threshold of $\alpha_2$: We observe a ``ridge of stability'' along the vertical axis. Once $\alpha_2$ exceeds a threshold (typically $>1.0$), the contours become sparse vertically. It indicates that performance becomes largely insensitive to the exact magnitude of $\alpha_2$. This property significantly simplifies hyperparameter tuning—practitioners can set a sufficiently large $\alpha_2$ and focus solely on tuning $\alpha_1$.
	
	\item Robustness of vision-language PTMs: When comparing the contour shapes across different pre-trained ViTs, ViT/B-CLIP exhibits a notably broader high-performance basin compared to MoCoV3 or DINO. It implies that the feature space aligned with language is more robust to regularization mismatches, whereas self-supervised features require more precise hyperparameter calibration to balance the noise and signal.
\end{itemize}

\subsection{Robustness and Efficiency Analysis of HopDC}
\label{sec:app_robustness}
In this section, we assess the practical deployability of the HopDC mechanism. A robust drift compensator should meet two criteria: (1) it should be insensitive to hyperparameter choices to ensure the ease of use, and (2) it should be data-efficient, such that it can function effectively even when the auxiliary unlabelled anchor set is small.

\paragraph{Robustness to hyperparameters $\tau$ and $k$.}
We evaluate the sensitivity of HopDC to the softmax temperature $\tau$ (which controls the sharpness of the attention mechanism) and the retrieval sparsity $k$ (the number of top anchors considered). The results are summarized in Table~\ref{tab:app_hyperparams}.
\begin{itemize}
	\item Impact of temperature ($\tau$):
	The optimal temperature correlates with task granularity. For fine-grained datasets like Cars-196, analytic classifiers (LR-RGDA and LDA) benefit significantly from lower temperatures (e.g., $\tau=0.01$). A lower $\tau$ enforces a sharper attention distribution, which ensures that the drift is estimated only from the most semantically similar anchors, thereby filtering out noise from ``neighboring'' but distinct sub-classes. In contrast, for coarse-grained tasks like ImageNet-R, the performance remains stable across a wide range of $\tau$, as the class boundaries are more distinct. Interestingly, SGD-based classifiers are largely insensitive to $\tau$, likely because the iterative optimization implicitly smooths out the noise in the compensated statistics.
	
	\item Impact of sparsity ($k$):
	HopDC exhibits remarkable stability with respect to $k$. Varying $k$ from 100 to 400 results in negligible performance fluctuations ($<0.2\%$) across all datasets and classifiers. It implies that the drift signal is concentrated in the top-100 nearest anchors, and retrieving more anchors does not introduce significant interference. This property allows us to set a conservative $k$ (e.g., 400) without tuning.
\end{itemize}

\begin{table*}[htbp]
	\centering
	\caption{Hyperparameter sensitivity analysis of HopDC. We report the Last-Acc (\%) on Cars-196 and ImageNet-R while varying the temperature $\tau$ and sparsity $k$. The results highlight that while fine-grained tasks prefer to sharper attention (lower $\tau$), the method is generally robust to variations in sparsity $k$.}
	\setlength{\tabcolsep}{4.5pt}
	\renewcommand{\arraystretch}{1.1}
	\begin{tabular}{ll|cccc|ccc}
		\toprule
		\multirow{2}{*}{\textbf{Dataset}} & \multirow{2}{*}{\textbf{Classifier}} & \multicolumn{4}{c|}{\textbf{Temperature} ($\tau$)} & \multicolumn{3}{c}{\textbf{Sparsity} ($k$)} \\
		\cmidrule(lr){3-6} \cmidrule(lr){7-9}
		& & 0.01 & 0.05 & 0.25 & 1.00 & 100 & 200 & 400 \\
		\midrule
		\multirow{3}{*}{Cars-196} 
		& LR-RGDA (rank@64) & 82.91 & 82.76 & 81.69 & 81.46 & 82.76 & 82.78 & 82.66 \\
		& SGD      & 80.04 & 80.16 & 79.63 & 79.53 & 80.24 & 80.24 & 80.16 \\
		& LDA      & 78.54 & 78.29 & 77.27 & 77.01 & 78.45 & 78.45 & 78.29 \\
		\midrule
		\multirow{3}{*}{ImageNet-R} 
		& LR-RGDA (rank@64)  & 74.80 & 74.93 & 74.80 & 74.78 & 74.93 & 74.87 & 74.90 \\
		& SGD      & 72.53 & 72.30 & 72.42 & 72.38 & 72.42 & 72.25 & 72.30 \\
		& LDA      & 69.22 & 69.17 & 69.18 & 69.18 & 69.13 & 69.17 & 69.17 \\
		\bottomrule
	\end{tabular}
	\label{tab:app_hyperparams}
\end{table*}

\paragraph{Data efficiency (impact of auxiliary set size $N$).}
A key constraint in real-world applications of HopDC is the storage budget for auxiliary data. To test the data efficiency, we compare HopDC against the linear baseline $\alpha_1$-SLDC \cite{rao2025compensating} by progressively reducing the auxiliary set size $N$ from 2048 to 256. Table~\ref{tab:app_data_efficiency} presents the comparison results.
\begin{itemize}
	\item Degradation of linear baselines:
	$\alpha_1$-SLDC relies on estimating global drift statistics (e.g., mean drift vectors) from the auxiliary set. As $N$ decreases, these global estimates become statistically unreliable, leading to significant performance drops (e.g., a $-4\%$ drop for LDA on Cars-196 as $N$ reduces from 2048 to 256).
	
	\item Stability of HopDC: 
	In contrast, HopDC maintains near-constant accuracy even when the auxiliary set is reduced by $8\times$ (down to $N=256$). For instance, LR-RGDA with HopDC on Cars-196 sustains $\sim82.68\%$ accuracy across all sizes. This validates the core advantage of our \textit{associative memory} formulation: HopDC does not learn a global parameter but performs local interpolation. Even with a sparse anchor set, the mechanism can effectively retrieve and interpolate the drift field for old prototypes, making it extremely data-efficient.
\end{itemize}

\begin{table*}[htpt]
	\centering
	\caption{Data efficiency analysis (auxiliary Set size $N$). We compare the Last-Acc (\%) of HopDC against the linear baseline $\alpha_1$-SLDC across Cars-196 and ImageNet-R. While the linear $\alpha_1$-SLDC degrades significantly with scarce data, HopDC maintains robust performance even with only 256 anchors, which demonstrates the superiority of the associative memory mechanism.}
	\setlength{\tabcolsep}{4pt}
	\renewcommand{\arraystretch}{1.1}
	\begin{tabular}{ll|cccc|cccc}
		\toprule
		\multirow{2}{*}{\textbf{Classifier}} & \multirow{2}{*}{\textbf{Method}} & \multicolumn{4}{c|}{\textbf{Cars-196}} & \multicolumn{4}{c}{\textbf{ImageNet-R}} \\
		\cmidrule(lr){3-6} \cmidrule(lr){7-10}
		& & $N$=256 & 512 & 1024 & 2048 & $N$=256 & 512 & 1024 & 2048 \\
		\midrule
		\multirow{2}{*}{LR-RGDA} 
		& $\alpha_1$-SLDC & 77.99 & 79.03 & 80.00 & 80.86 & 73.12 & 73.50 & 73.85 & 74.35 \\
		& \textbf{HopDC (Ours)}  & \textbf{82.68} & \textbf{82.68} & \textbf{82.68} & \textbf{82.66} & \textbf{74.80} & \textbf{74.85} & \textbf{74.78} & \textbf{74.90} \\
		\midrule
		\multirow{2}{*}{SGD}      
		& $\alpha_1$-SLDC  & 77.55 & 77.95 & 78.85 & 79.21 & 71.20 & 71.43 & 71.88 & 72.00 \\
		& \textbf{HopDC (Ours)}  & \textbf{80.18} & \textbf{80.19} & \textbf{80.19} & \textbf{80.16} & \textbf{72.20} & \textbf{72.22} & \textbf{72.27} & \textbf{72.30} \\
		\midrule
		\multirow{2}{*}{LDA}      
		& $\alpha_1$-SLDC  & 73.13 & 74.85 & 76.06 & 77.06 & 69.33 & 69.45 & 69.62 & 69.57 \\
		& \textbf{HopDC (Ours)}  & \textbf{78.30} & \textbf{78.31} & \textbf{78.29} & \textbf{78.29} & \textbf{69.18} & \textbf{69.27} & \textbf{69.07} & \textbf{69.17} \\
		\bottomrule
	\end{tabular}
\label{tab:app_data_efficiency}
\end{table*}

\end{document}